\documentclass{article} % For LaTeX2e
\usepackage{iclr2026_conference,times}
\usepackage{graphicx}
\usepackage{pifont} 
% Optional math commands from https://github.com/goodfeli/dlbook_notation.
%%%%% NEW MATH DEFINITIONS %%%%%

\usepackage{amsmath,amsfonts,bm}

% Mark sections of captions for referring to divisions of figures

% Highlight a newly defined term

% Figure reference, lower-case.

% Figure reference, capital. For start of sentence

% Section reference, lower-case.

% Section reference, capital.

% Reference to two sections.

% Reference to three sections.

% Reference to an equation, lower-case.
\def\eqref#1{equation~\ref{#1}}
% Reference to an equation, upper case

% A raw reference to an equation---avoid using if possible

% Reference to a chapter, lower-case.

% Reference to an equation, upper case.

% Reference to a range of chapters

% Reference to an algorithm, lower-case.

% Reference to an algorithm, upper case.

% Reference to a part, lower case

% Reference to a part, upper case

\def\1{\bm{1}}

% Random variables

% rm is already a command, just don't name any random variables m

% Random vectors

% Elements of random vectors

% Random matrices

% Elements of random matrices

% Vectors

% Elements of vectors

% Matrix

% Tensor
\DeclareMathAlphabet{\mathsfit}{\encodingdefault}{\sfdefault}{m}{sl}
\SetMathAlphabet{\mathsfit}{bold}{\encodingdefault}{\sfdefault}{bx}{n}

% Graph

% Sets

% Don't use a set called E, because this would be the same as our symbol
% for expectation.

% Entries of a matrix

% entries of a tensor
% Same font as tensor, without \bm wrapper

% The true underlying data generating distribution

% The empirical distribution defined by the training set

% The model distribution

% Stochastic autoencoder distributions

 % Laplace distribution

% Wolfram Mathworld says $L^2$ is for function spaces and $\ell^2$ is for vectors
% But then they seem to use $L^2$ for vectors throughout the site, and so does
% wikipedia.

 % See usage in notation.tex. Chosen to match Daphne's book.

\usepackage{amsmath}
\usepackage{amssymb}
\usepackage{mathtools}
\usepackage{amsthm}
\usepackage{caption}

\usepackage{hyperref}
\usepackage{url}
\usepackage{todonotes}
\usepackage{wrapfig}
\usepackage[toc,page,header]{appendix}
\usepackage{minitoc}
\usepackage{booktabs}
\usepackage{multirow}
\usepackage{float}     % For [ht] placement specifier

% Make the "Part I" text invisible

\setuptodonotes{color=orange!20, bordercolor=gray!50, tickmarkheight=0.2cm}
\setlength{\marginparwidth}{3cm}
\usepackage[capitalize,noabbrev]{cleveref}
\usepackage[most]{tcolorbox}

\theoremstyle{plain}
\newtheorem{theorem}{Theorem}
\newtheorem{proposition}{Proposition}
\newtheorem{finding}{Finding}
\newtheorem{lemma}{Lemma}
\newtheorem{corollary}{Corollary}
\theoremstyle{definition}
\newtheorem{definition}{Definition}
\newtheorem{hypothesis}{Hypothesis}

\theoremstyle{remark}
\newtheorem*{remark}{Remark}

% Theorem box style
% Basic theorem box style
\tcbset{
  myboxstyle/.style={%
    colback=#1!5,
    colframe=#1!35!black,
    fonttitle=\bfseries,
  }% options
}
% Theorem environments
\newtcbtheorem[number within=section]{Theorem}{Theorem}{
  myboxstyle=blue
}{thm}

\newtcbtheorem[number within=section]{Proposition}{Proposition}{
  myboxstyle=green
}{prop}

\newtcbtheorem[number within=section]{Definition}{Definition}{
  myboxstyle=red
}{defn}

\tcolorboxenvironment{finding}{
  colback=orange!10!white,
  width=\dimexpr\linewidth+10pt\relax,%     Allow your box to be bigger than \linewidth ...
  enlarge left by=-5pt,%                    ... in order to have the text properly aligned. ...
  enlarge right by=-5pt,%                   ... Note that boxsep = -enlargeLeft = -enlargeRight = 0.5*enlargement of width. ...
  boxsep=5pt,%                              ... This is necessary to keep everything good looking.
  left=0pt,%                                Avoid extra space on the left, ...
  right=0pt,%                               ... right, ...
  top=0pt,%                                 ... top, ...
  bottom=0pt,%                              ... and bottom.
  arc=0pt,%                                 Corners not rounded.
  boxrule=0pt,%                             No boxrule.
  colframe=white
}

\tcolorboxenvironment{theorem}{
  colback=blue!10!white,
  width=\dimexpr\linewidth+10pt\relax,%     Allow your box to be bigger than \linewidth ...
  enlarge left by=-5pt,%                    ... in order to have the text properly aligned. ...
  enlarge right by=-5pt,%                   ... Note that boxsep = -enlargeLeft = -enlargeRight = 0.5*enlargement of width. ...
  boxsep=5pt,%                              ... This is necessary to keep everything good looking.
  left=0pt,%                                Avoid extra space on the left, ...
  right=0pt,%                               ... right, ...
  top=0pt,%                                 ... top, ...
  bottom=0pt,%                              ... and bottom.
  arc=0pt,%                                 Corners not rounded.
  boxrule=0pt,%                             No boxrule.
  colframe=white
}

\tcolorboxenvironment{proposition}{
  colback=blue!10!white,
  width=\dimexpr\linewidth+10pt\relax,%     Allow your box to be bigger than \linewidth ...
  enlarge left by=-5pt,%                    ... in order to have the text properly aligned. ...
  enlarge right by=-5pt,%                   ... Note that boxsep = -enlargeLeft = -enlargeRight = 0.5*enlargement of width. ...
  boxsep=5pt,%                              ... This is necessary to keep everything good looking.
  left=0pt,%                                Avoid extra space on the left, ...
  right=0pt,%                               ... right, ...
  top=0pt,%                                 ... top, ...
  bottom=0pt,%                              ... and bottom.
  arc=0pt,%                                 Corners not rounded.
  boxrule=0pt,%                             No boxrule.
  colframe=white
}

\tcolorboxenvironment{lemma}{
  colback=blue!10!white,
  width=\dimexpr\linewidth+10pt\relax,%     Allow your box to be bigger than \linewidth ...
  enlarge left by=-5pt,%                    ... in order to have the text properly aligned. ...
  enlarge right by=-5pt,%                   ... Note that boxsep = -enlargeLeft = -enlargeRight = 0.5*enlargement of width. ...
  boxsep=5pt,%                              ... This is necessary to keep everything good looking.
  left=0pt,%                                Avoid extra space on the left, ...
  right=0pt,%                               ... right, ...
  top=0pt,%                                 ... top, ...
  bottom=0pt,%                              ... and bottom.
  arc=0pt,%                                 Corners not rounded.
  boxrule=0pt,%                             No boxrule.
  colframe=white
}

\tcolorboxenvironment{definition}{
  colback=gray!10!white,
  width=\dimexpr\linewidth+10pt\relax,%     Allow your box to be bigger than \linewidth ...
  enlarge left by=-5pt,%                    ... in order to have the text properly aligned. ...
  enlarge right by=-5pt,%                   ... Note that boxsep = -enlargeLeft = -enlargeRight = 0.5*enlargement of width. ...
  boxsep=5pt,%                              ... This is necessary to keep everything good looking.
  left=0pt,%                                Avoid extra space on the left, ...
  right=0pt,%                               ... right, ...
  top=0pt,%                                 ... top, ...
  bottom=0pt,%                              ... and bottom.
  arc=0pt,%                                 Corners not rounded.
  boxrule=0pt,%                             No boxrule.
  colframe=white
}

\tcolorboxenvironment{hypothesis}{
  colback=orange!10!white,
  width=\dimexpr\linewidth+10pt\relax,%     Allow your box to be bigger than \linewidth ...
  enlarge left by=-5pt,%                    ... in order to have the text properly aligned. ...
  enlarge right by=-5pt,%                   ... Note that boxsep = -enlargeLeft = -enlargeRight = 0.5*enlargement of width. ...
  boxsep=5pt,%                              ... This is necessary to keep everything good looking.
  left=0pt,%                                Avoid extra space on the left, ...
  right=0pt,%                               ... right, ...
  top=0pt,%                                 ... top, ...
  bottom=0pt,%                              ... and bottom.
  arc=0pt,%                                 Corners not rounded.
  boxrule=0pt,%                             No boxrule.
  colframe=white
}

\title{Heads collapse, features stay: Why Replay needs big buffers}

\author{Giulia Lanzillotta\thanks{Equal contribution, email at \texttt{\{glanzillo,dammeier\}@ethz.ch}.}, \, Damiano Meier\footnotemark[1]\, \& Thomas Hofmann  \\
ETH AI Center \& Department of Computer Science, ETH Zürich}

% The \author macro works with any number of authors. There are two commands
% used to separate the names and addresses of multiple authors: \And and \AND.
%
% Using \And between authors leaves it to \LaTeX{} to determine where to break
% the lines. Using \AND forces a linebreak at that point. So, if \LaTeX{}
% puts 3 of 4 authors names on the first line, and the last on the second
% line, try using \AND instead of \And before the third author name.

\iclrfinalcopy % Uncomment for camera-ready version, but NOT for submission.
\begin{document}
\doparttoc % Tell to minitoc to generate a toc for the parts
\faketableofcontents % Run a fake tableofcontents command for the partocs

\maketitle
\begin{abstract}
A persistent paradox in continual learning (CL) is that neural networks often retain linearly separable representations of past tasks even when their output predictions fail. We formalize this distinction as the gap between \textit{deep} (feature-space) and \textit{shallow} (classifier-level) forgetting. We reveal a critical asymmetry in Experience Replay: while minimal buffers successfully anchor feature geometry and prevent deep forgetting, mitigating shallow forgetting typically requires substantially larger buffer capacities.
To explain this, we extend the Neural Collapse framework to the sequential setting. We characterize deep forgetting as a geometric drift toward out-of-distribution subspaces and prove that any non-zero replay fraction asymptotically guarantees the retention of linear separability. Conversely, we identify that the ``strong collapse'' induced by small buffers leads to rank-deficient covariances and inflated class means, effectively blinding the classifier to true population boundaries. By unifying CL with out-of-distribution detection, our work challenges the prevailing reliance on large buffers, suggesting that explicitly correcting these statistical artifacts could unlock robust performance with minimal replay. \looseness=-1
\end{abstract}

\section{Introduction}

%Nice sentences 
% This may provide a way to render the complexity of deep neural networks more tractable, such that they can be analyzed through the relative geometric and mathematical simplicity of simplex Equiangular Tight Frames (simplex ETF) % What do we have to say that is new: 
% 1. we study asymptotic feature structures with replay in task-incremental, class-incremental and domain-incremental learning 
% 2. we find that in the replay buffers guarantee linear separability whilst without replay buffers aggressive regularization may destroy it 
% 3. our work also questions whether Neural Collapse is good or bad for continual learning. In the presence of replay, Neural Collapse is good as it guarantees that features are well separated. But without NC it's bad % ... what is the optimal feature space structure for a continual learning model? 
% 4. we provide a hypothetical model for feature space distribution with replay 
% 5. we show that the head performs much worse because the NC distribution does not reflect well the true test distribution in the feature space % Our results also uni 

Continual learning \citep{hadsell2020embracing} aims to train neural networks on a sequence of tasks without catastrophic forgetting. It holds particular promise for adaptive AI systems, such as autonomous agents that must integrate new information without full retraining or centralized data access. The theoretical understanding of optimization in non-stationary environments remains limited, particularly regarding the mechanisms that govern the retention and loss of learned representations.

A persistent observation in the literature is that neural networks retain substantially more information about past tasks in their internal representations than in their output predictions. This phenomenon, first demonstrated through \emph{linear probe evaluations}, shows that a linear classifier trained on frozen last-layer representations achieves markedly higher accuracy on old tasks than the network’s own output layer \citep{murata_what_2020, hess_knowledge_2023}. In other words, past-task data remain linearly separable in feature space, even when the classifier fails to exploit this structure. This motivates a distinction between two levels of forgetting: \textbf{shallow forgetting}, corresponding to output-level degradation recoverable by a linear probe, and \textbf{deep forgetting}, corresponding to irreversible loss of feature-space separability.

\begin{figure}
    \centering
    \includegraphics[width=\linewidth]{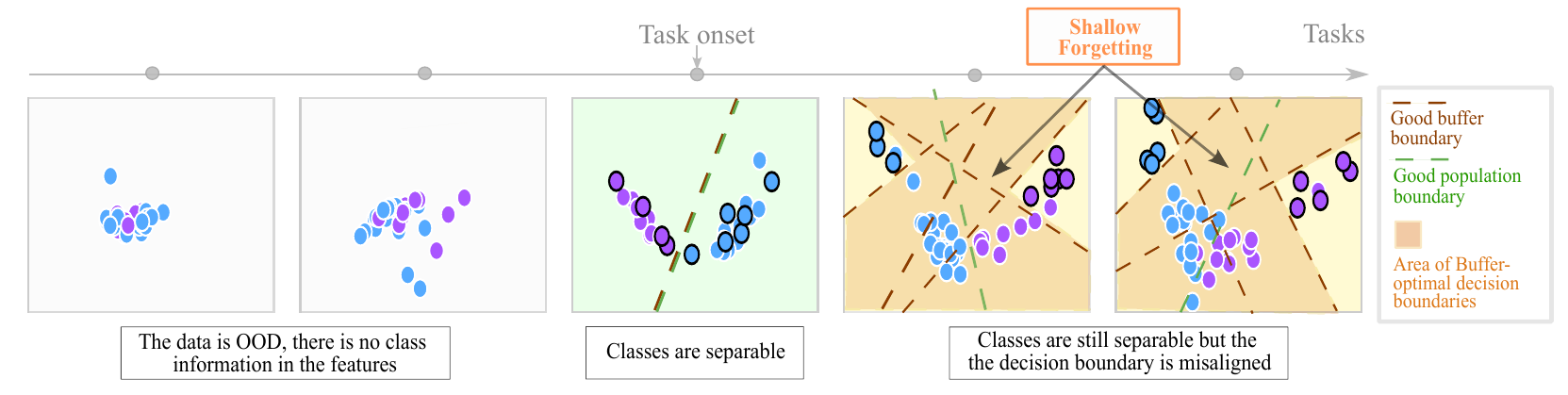}
    \caption{\textbf{Evolution of decision boundaries and feature separability.} PCA evolution of two Cifar10 classes (1\% replay). Replay samples are highlighted with a black edge. While features retain separability across tasks (low deep forgetting), the classifier optimization becomes \textit{under-determined}: multiple ``buffer-optimal'' boundaries (dashed brown)  perfectly classify the stored samples but largely fail to align to the true population boundary (dashed green), resulting in shallow forgetting.}
    \label{fig:figure1}
\end{figure}

In this work, we show that \emph{replay buffers affect these two forms of forgetting in systematically different ways}. Replay---the practice of storing a small subset of past samples for joint training with new data---is among the most effective and widely adopted strategies in continual learning. However, the requirement to store and repeatedly process substantial amounts of past data limits its scalability. Our analysis reveals a critical efficiency gap: \emph{even small buffers are sufficient to preserve feature separability and prevent deep forgetting}, whereas mitigating shallow forgetting requires substantially larger buffers. Thus, while replay robustly preserves representational geometry, it often fails to maintain alignment between the learned head and the true data distribution.

To explain this phenomenon, we turn to the geometry of deep network representations. Recent work has shown that, at convergence, standard architectures often exhibit highly structured, low-dimensional feature organization. In particular, the \emph{Neural Collapse} (NC) phenomenon~\citep{papyan_prevalence_2020} describes a regime in which within-class variability vanishes, class means form an equiangular tight frame (ETF), and classifier weights align with these means. Originally observed in simplified settings, NC has now been documented across architectures, training regimes, and even large-scale language models~\citep{sukenik_neural_2025, wu_linguistic_2025}, making it a powerful framework to analyze feature-head interactions.

In this work, we extend the NC framework to continual learning, providing a characterization of the geometry of features and heads under extended training. Our analysis covers task-, class-, and domain-incremental settings and explicitly accounts for replay. To account for this, we propose \emph{two governing hypotheses} for historical data absent from the buffer: (1) topologically, forgotten samples behave as out-of-distribution (OOD) entities, and (2) enlarging the replay buffer induces a smooth interpolation between this OOD regime and fully collapsed representations. These insights allow us to construct a simple yet predictive theory of feature-space forgetting that lower-bounds separability and captures the influence of weight decay, feature-norm scaling, and buffer size. \looseness=-1

% Our approach unifies and extends several lines of research. First, it situates continual learning within the broader context of representation geometry, connecting empirical phenomena such as linear-probe recoverability and replay effectiveness to formal structures like ETFs. Second, it explains the divergence between deep and shallow forgetting, highlighting the role of replay in stabilizing representations but not necessarily the classifier. Finally, by linking forgotten representations to OOD behavior, it establishes a connection between continual learning and out-of-distribution detection, suggesting new ways to monitor and mitigate forgetting.

In summary, this paper makes the following distinct contributions:

\begin{enumerate}
\item \textbf{The replay efficiency gap.} We identify an \emph{intrinsic asymmetry} in replay-based continual learning: minimal buffers are sufficient to anchor feature geometry (preventing deep forgetting), whereas mitigating classifier misalignment (shallow forgetting) requires disproportionately large capacities.
\item \textbf{Asymptotic framework for continual learning.} We extend Neural Collapse theory to continual learning, characterizing the asymptotic geometry of both single-head and multi-head architectures and identifying unique phenomena like rank reduction in task-incremental learning.
\item \textbf{Effects of replay on feature geometry.}  We demonstrate that shallow forgetting arises because classifier optimization on buffers is \textit{under-determined}—a condition structurally \textit{exacerbated by Neural Collapse}. The resulting geometric simplification (covariance deficiency and norm inflation) blinds the classifier to the true population boundaries.
\item \textbf{Connection to OOD detection.} We re-conceptualize deep forgetting as a geometric drift toward out-of-distribution subspaces. This perspective bridges the gap between CL and OOD literature, offering a rigorous geometric definition of ``forgetting'' beyond simple accuracy loss.
\end{enumerate}

\subsection{Notation and setup}\label{sec:notation-background}
We adopt the standard compositional formulation of a neural network, decomposing it into a feature map and a classification head. The network function is defined as
\(
f_\theta(x) = h(\phi(x)), \text{ where } h(z) = W_h z + b_h, 
\)
with parameters $\theta = \{\phi, W_h\}$.%

We refer to $\phi$ as the \emph{feature map}, to its image as the \emph{feature space} and to $\phi(x)$ as the \emph{features} or \emph{representation} of input $x$.

We consider sequential classification problems subdivided into tasks. For each class $c$, a dataset of labeled examples $(X_c, Y_c)$ is available for training. Given any sample $(x, y)$, the network prediction is obtained via the maximum-logit rule
\[
\hat{y} = \arg\max_k \langle w_k, \phi(x) \rangle,
\]
where $w_k$ denotes the $k$-th column vector of $W_h$. Network performance is evaluated after each task on all previously seen tasks. 

Following \citet{lopez2017gradient}, \emph{shallow forgetting} is quantified as the difference $A_{ij} - A_{jj}$, where $A_{ij}$ denotes the accuracy on task $j$ measured after completing learning session $i$. In contrast, \emph{deep forgetting} is defined as the difference $A^\star_{ij} - A^\star_{jj}$, where $A^\star_{ij}$ represents the accuracy of a \emph{linear probe} trained on the frozen representations of task $j$ at the end of session $i$.

We adopt the three continual learning setups introduced by \citet{vandeVen2022three}, described in detail in \cref{sec:explanatory-model}: \emph{task-incremental learning} (TIL), \emph{class-incremental learning} (CIL), and \emph{domain-incremental learning} (DIL).\looseness=-1

For the experimental analysis, we train both ResNet and ViT architectures, from scratch and from a pre-trained initialization. We train on three widely used benchmarks adapted to the continual learning setting: \emph{Cifar100} \citep{krizhevsky2009learning}, \emph{Tiny-ImageNet} \citep{torralba200880}, and \emph{CUB200} \citep{wah2011caltech}. A detailed description of datasets and training protocols, including linear probing, is provided in \cref{sec:expe-details}.

\section{Empirical characterization of deep and shallow forgetting}
\label{sec:empirical-findings}

\begin{figure}[ht]
    \centering
    \includegraphics[width=0.95\linewidth]{./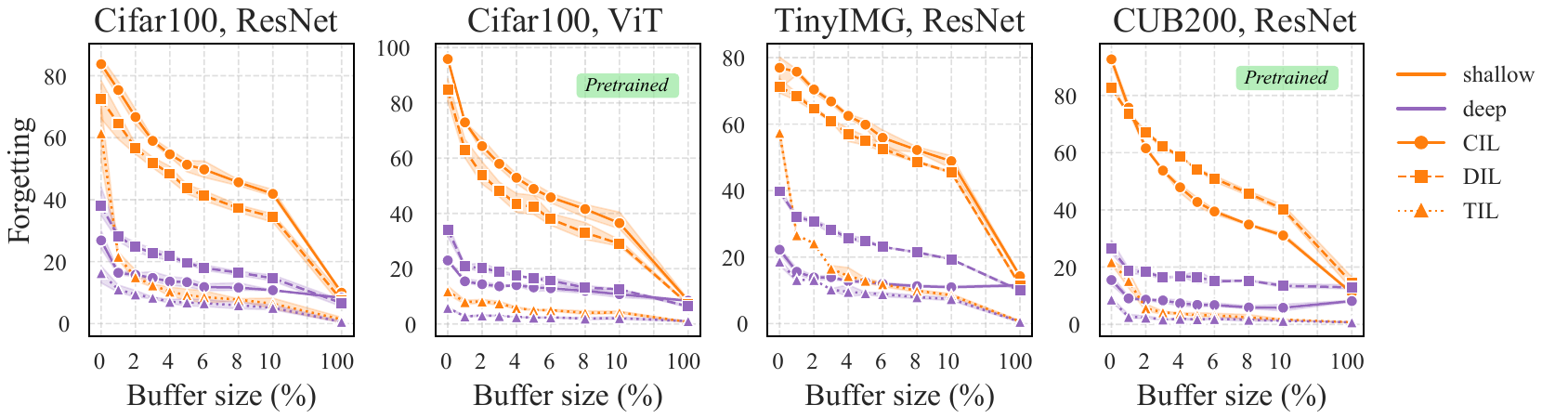}
    \caption{
    \textbf{Replay efficiency gap.}
    Forgetting decays at different rates in the feature space and the classifier head, producing a persistent gap between deep and shallow forgetting. Increasing the replay buffer closes this gap only gradually, with substantial buffer sizes required for convergence. See \cref{apsec:fig-details} for details.
    }
    \label{fig:section3}
\end{figure}

We first present our main empirical finding. 
We evaluate forgetting in both the network output layer and a linear probe trained on frozen features across varying buffer sizes, datasets, and architectures (randomly initialized and pre-trained). Our results, summarized in \cref{fig:section3}, reveal a robust phenomenon: \textbf{while small replay buffers are sufficient to prevent \textit{deep forgetting} (preserving feature separability), mitigating \textit{shallow forgetting} requires substantially larger buffers}. This extends prior observations of feature-output discrepancies \citep{murata_what_2020, hess_knowledge_2023} by demonstrating that \textit{replay stabilizes representations far more efficiently than it maintains classifier alignment}. The gap persists across settings, vanishing only near full replay (100\%).

We highlight three specific trends: \begin{enumerate}
\item \textit{Head architecture.} The deep--shallow gap is pronounced in single-head setups (CIL, DIL) but significantly smaller in multi-head setups (TIL). 
\item \textit{Replay efficacy in DIL.} Contrary to the assumption that CIL is the most challenging benchmark, DIL exhibits high levels of deep forgetting, converging to levels similar to CIL. 
\item \textit{Pre-training robustness.} Corroborating \citet{ramaseshEffectScaleCatastrophic2021a}, pre-trained models exhibit negligible deep forgetting. Their feature spaces remain robust even with minimal replay, yielding nearly flat deep-forgetting curves.
\end{enumerate}

In the following section, we present a theoretical model explaining this \textit{asymmetric effect} of replay via the asymptotic dynamics of the feature space.

\section{Neural Collapse under sequential training}
\label{sec:explanatory-model}

\subsection{Preliminaries on Neural Collapse}

Recent work \citep{papyan_prevalence_2020, lu_neural_2022} characterizes the geometry of representations in the \textit{terminal phase of training} (TPT) the regime in which the training loss has reached zero and features stabilize. In this regime, features converge to a highly symmetric configuration known as \textit{Neural Collapse} (NC), which is provably optimal for standard supervised objectives and emerges naturally under a range of optimization dynamics  \citep{tirer_extended_2022, sukenik_neural_2025}.

We denote the feature class means by $\mu_c(t) = \mathbb{E}_{x \in X_c}[\phi_t(x)]$, $\tilde\mu_c(t)$ the centered means, and the matrix of centered means by $\tilde{U}(t)$. We focus on first three properties defining NC:

\begin{itemize}
    \item \textbf{$\mathcal{NC}1$ (Variability Collapse).} Within-class variability vanishes as features collapse to their class means: $\phi_t(x) \to \mu_c(t)$, $\forall \,x \in X_c$, implying the within-class covariance approaches $\mathbf{0}$.
    \item \textbf{$\mathcal{NC}2$ (Simplex ETF).} Centered class means form a simplex Equiangular Tight Frame (ETF). They attain equal norms and maximal pairwise separation:
    \[
    \lim_{t \to \infty} \langle \tilde{\mu}_c(t), \tilde{\mu}_{c'}(t) \rangle = 
    \begin{cases} 
    \beta_t & \text{if } c=c' \\ 
    -\frac{\beta_t}{K-1} & \text{if } c\neq c' 
    \end{cases}
    \]
    \item \textbf{$\mathcal{NC}3$ (Neural Duality).} Classifier weights align with the class means up to scaling, i.e., $W_h^\top(t) \propto \tilde{U}(t)$.
\end{itemize}

% \todo{Consider adding a small schematic illustrating the three NC properties.}

\subsection{Neural Collapse in continual learning}

Standard evaluation in continual learning measures performance strictly at the completion of each task. Thus, while forgetting arises from optimization dynamics, its magnitude is defined effectively by the network's configuration at the end of training. We leverage the Neural Collapse framework to rigorously characterize this terminal geometry, modeling the stable structures that emerge in the limit of long training times.\footnote{Our analysis focuses on structures at convergence; however, we observe that Neural Collapse emerges quickly in practice across standard architectures (cf. \cref{fig:Sequential-Learning-NC}).}

While prior work focuses on stationary settings, we extend the NC framework to continual learning. We empirically verify its emergence in domain- (DIL), class- (CIL), and task-incremental (TIL) settings (\cref{fig:Sequential-Learning-NC}, see \cref{app:NC-CL}).

\begin{figure}[ht]
    \centering
    \includegraphics[width=1.0\linewidth]{./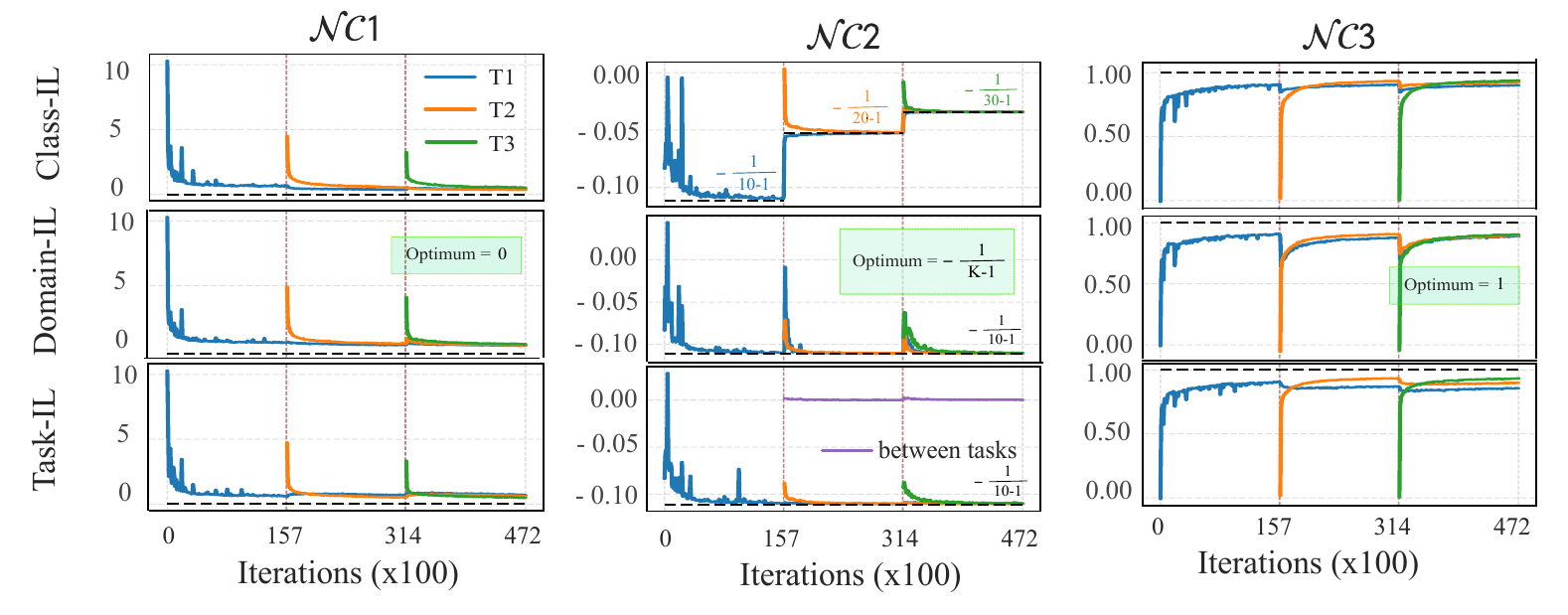}
    \caption{
    \textbf{NC metrics in sequential training (Cifar100, ResNet with 5\% replay).} NC emerges across all tasks. In DIL, the ETF structure ($\mathcal{NC}2$) remains stable; in CIL, it evolves as class count increases; in TIL, it arises per-head with variable cross-task alignment. Highlighted in green is the asymptotic limit of the NC metrics. See \cref{apsec:fig-details} for details.
    }
    \label{fig:Sequential-Learning-NC}
    \vspace{-1em}
\end{figure}

\textit{Observed vs. population statistics.} NC emerges on the training data (current task + buffer). We must therefore distinguish between \textit{observed} statistics $\hat{\mu}$ (computed on available training samples) and \textit{population} statistics $\mu$ (computed on the full distribution). The following empirical analysis concerns $\hat{\mu}$; in subsequent sections, we develop a theory for $\mu$ to quantify forgetting. 

\subsubsection{Single-head architectures}

In \textbf{domain-incremental learning (DIL)}, all tasks share a fixed label set, with each task introducing a new input distribution. Consequently, while the estimated class means $\hat{\mu}_c$ and global mean $\hat{\mu}_G$ evolve throughout the task sequence, the asymptotic target geometry remains invariant: the number of class means and their optimal angles are constant. We find that the NC properties established in the single-task regime (\cref{def:NC1,def:NC2,def:NC3}) persist under DIL. When a replay buffer is employed, the class means are effectively computed over the mixture of new data and buffered samples.

In \textbf{class-incremental learning (CIL)}, each task introduces a disjoint subset of classes. The asymptotic structure of the feature space is therefore redefined after each task, governed by the relative representation of old versus new classes.  When past classes are under-represented in the training dataset, they act as \textit{minority classes}: their features collapse toward a degenerate distribution centered near the origin, and their classifier weights converge to constant vectors \citep{fang_exploring_2021, dang_neural_2023}. This phenomenon, known as \textit{Minority Collapse} (MC), occurs sharply below a critical representation threshold.
Without replay, MC dominates the asymptotic structure as past classes are absent from the loss. However, we observe that replay mitigates this effect when buffers are sampled in a \textit{task-balanced manner}. This strategy ensures that all classes—both new and old—are equally represented in each training batch, thereby preserving the global ETF structure and preventing the marginalization of past tasks (\cref{fig:Sequential-Learning-NC}).

\subsubsection{Multi-head architectures}

Neural Collapse has not previously been characterized in \textit{multi-head} architectures. In \textbf{task-incremental learning (TIL)}, the network output is partitioned into separate heads, each associated with a distinct task. This ensures that error propagation is localized to the assigned head (see \cref{fig:CLsetups}). While this local normalization prevents Minority Collapse even without replay, the resulting global geometry across tasks is non-trivial. Specifically, we investigate the relative angles and norms between class means belonging to different tasks.

We measure standard NC metrics including within-class variance, inter-task inner products, and feature norms. Our findings reveal a clear distinction between local and global structure in TIL:
\begin{enumerate}
    \item \textit{Local collapse.} NC emerges consistently \textit{within} each head. Each task-specific head satisfies $\mathcal{NC}1$--$\mathcal{NC}3$ locally.
    \item \textit{Global misalignment.} A coherent cross-task NC structure is absent. Across tasks, class means display variable scaling and alignment (\cref{fig:Sequential-Learning-NC}, \cref{fig:Sequential-Learning-NC-extra,fig:Sequential-Learning-NC-extra-cub200,fig:Sequential-Learning-NC-extra-tinyimg} ). 
    \item \textit{Rank reduction.} We find that local normalization induces a dimensionality reduction in the feature space. The global feature space attains a maximal rank of $n(K-1)$ for $n$ tasks, which is strictly lower than the $nK-1$ rank observed in single-head settings (\cref{fig:NC-rank}).
\end{enumerate}

% \vspace{0.5em} % Optional small gap
\noindent These empirical observations—specifically that task-balanced replay restores global NC in single-head setups while TIL lacks global alignment—serve as the foundation for the theoretical model of class separability developed in the next section.

% \subsubsection{Summary of Findings}

% Our analysis yields two central conclusions, summarized below. These observations serve as the foundation for the next section, where we model the effect of replay on class separability.

% \begin{finding}[Asymptotic Structure without Replay]
% When training exclusively on the most recent task, regardless of the continual learning setup, the asymptotically optimal representation for the current task coincides with the Neural Collapse (NC) configuration of the single-task setting. In the CIL case, this implies that feature representations from all previous tasks collapse to the origin, while only the current task exhibits NC.
% \end{finding}

% \begin{finding}[Asymptotic Structure with Balanced Replay]
% With class-balanced replay across $n$ tasks, single-head setups converge to a global NC structure spanning all classes included in the buffer ($K$ classes for DIL and $nK$ for CIL). In the multi-head TIL case, NC persists within each head independently, but no fixed cross-task structure emerges.
% \end{finding}

\section{Asymptotic behaviour of deep and shallow forgetting}
\label{sec:asymptotic-behaviour}

\subsection{Preliminaries}

\paragraph{Linear separability.}  
To analyse deep forgetting, we require a mathematically tractable measure of linear separability in feature space.  
Formally, linear separability between two distributions $P_1$ and $P_2$ is the maximum classification accuracy achievable by any linear classifier.  
Given the first two moments $(\mu_1, \Sigma_1)$ and $(\mu_2, \Sigma_2)$, the Mahalanobis distance is a standard proxy.  
Here, we use the \emph{signal-to-noise ratio} (SNR) between class distributions, defined as
\[
SNR(c_1, c_2) = \frac{\|\mu_1 - \mu_2\|^2}{\operatorname{Tr}(\Sigma_1 + \Sigma_2)}.
\]
Higher SNR values imply greater separability.  
In \cref{app:linear-separability}, we show that this quantity lower-bounds the Mahalanobis distance, and thus linear separability itself.  
Accordingly, we focus on the first- and second-order statistics of class representations (means and covariances), as these directly govern the SNR.

\vspace{-0.05in}

\paragraph{Asymptotic notation.}  
We use $\mathcal{O}(\cdot)$ and $\Theta(\cdot)$ to characterize the scaling of time-dependent quantities $f(t)$, suppressing constants independent of $t$.  
When bounds depend on controllable quantities such as the buffer size $b$, we retain these dependencies explicitly.  
This notation highlights scaling behaviour relevant to training dynamics and experimental design choices.

\subsection{Analysis of deep forgetting}

\subsubsection{Forgotten $\approx$ out-of-distribution}

\begin{wrapfigure}{r}{0.35\textwidth}
  \vspace{-15pt} 
  \centering
  \includegraphics[width=0.35\textwidth]{./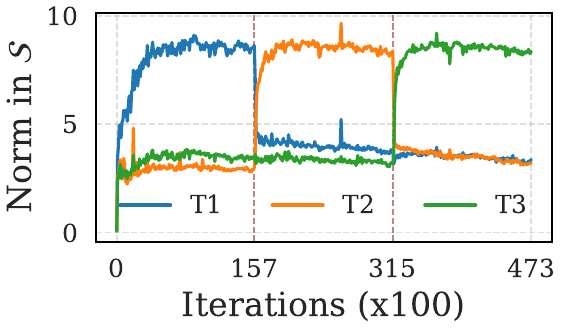}
  \caption{
    \textbf{Projection of $\tilde\mu_c(t)$ onto $S_t$ (Cifar100, no replay)}. The population means of past and future tasks exhibit equivalent (near-zero) norms when projected onto the active subspace $S_t$.
}
  \label{fig:Sequential-Learning-OOD}
  \vspace{-35pt}
\end{wrapfigure}

The Neural Collapse (NC) framework characterizes the asymptotic geometry of representations for \emph{training} data.  
Forgetting, however, concerns the evolution of representations for samples of past tasks that are no longer part of the optimization objective.
We bridge this conceptual gap through the following hypothesis:

\begin{hypothesis}
    Forgotten samples behave analogously to samples that were never learned, i.e., they are effectively \emph{out-of-distribution} (OOD) with respect to the current model.
\end{hypothesis}

This perspective motivates our analysis of forgetting as a form of \emph{shift to out-of-distribution} in feature space.  
Specifically, in the absence of replay, data from past tasks exhibits the same geometric behaviour as future-task (OOD) inputs.  
To formalize this correspondence, we adopt a feature-space definition of OOD based on the recently proposed \emph{ID/OOD orthogonality} property (\emph{NC5}, \citealp{ammar_neco_2024}).

\begin{definition}[Out-of-distribution (OOD)]
\label{def:OOD-main}
Let $X_c$ denote the samples of class $c$, and let $\phi_t(x)$ be the feature map of a network trained on dataset $D$ with $K$ classes.  
Denote by $S_t = \operatorname{span}\{\tilde{\hat\mu}_1(t), \dots, \tilde{\hat\mu}_K(t)\}$ the \textbf{active subspace} spanned by the \emph{centered} class means of the training data at time $t$.  
We say that $X_c$ is \emph{out-of-distribution} for $\phi_t$ if the average representation of $X_c$ is orthogonal to $S_t$. \looseness=-1
\end{definition}

In \cref{asec:main-result-2} (\cref{prop:OOD-uncertain}), we show that, under the NC regime, the empirical observation that OOD inputs yield higher predictive entropy than in-distribution (ID) inputs is mathematically equivalent to this orthogonality condition—thus establishing a formal connection between predictive uncertainty and the geometric structure of NC5.

We validate our hypothesis by monitoring the projection of centered class means $\tilde\mu_c$  onto the active subspace $S_t$. As shown in \cref{fig:Sequential-Learning-OOD} (and \cref{fig:Sequential-Learning-OOD-cub200,fig:Sequential-Learning-OOD-extra,fig:Sequential-Learning-OOD-tinyimg}), shortly after a task switch, the projection of past-task means \textit{collapses sharply}, indistinguishably matching the behavior of unseen (OOD) tasks.

\subsubsection{Asymptotic distribution of OOD classes}

Leveraging the connection between forgetting and OOD dynamics, we now characterize the asymptotic behavior of past-task data. We find that \textit{the residual signal of past classes is confined to the \textbf{inactive subspace} $S^\perp$}, making it susceptible to erasure by weight decay.

\begin{theorem}[Asymptotic distribution of OOD data]
\label{theo:asymptotic-OOD}
Let $X_c$ be OOD inputs (\cref{def:OOD-main}) for a feature map $\phi_t$ trained with a sufficiently small learning rate $\eta$ and weight decay $\lambda$. Let $\beta_t$ denote the observed centered class-mean norm as by \cref{def:NC2}.  In the terminal phase ($t \ge t_0$), the feature distribution of $X_c$ has mean $\mu_c$ and variance $\sigma_c^2$ given by:
\begin{align}
    \mu_c(t) &= (1-\eta\lambda)^{t-t_0}\, \mu_{c,S^\perp}(t_0), \\
    \sigma_c^2(t) &\in \Theta\!\left(\beta_t + (1-\eta\lambda)^{2(t-t_0)}\right).
\end{align}
\end{theorem}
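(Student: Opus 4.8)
The plan is to analyze the terminal-phase dynamics of OOD features under the combined action of gradient descent with weight decay and the Neural Collapse structure of the in-distribution data. The starting observation is that, by \cref{def:OOD-main}, an OOD class $X_c$ has its average representation orthogonal to the active subspace $S_t$, so the feature mean $\mu_c$ lives (initially, at $t_0$) entirely in the inactive subspace $S^\perp$. The key mechanism is that the loss gradient provides \emph{no signal} along $S^\perp$ for these samples—since they do not participate in the objective and their features carry no component aligned with the ID class means that drive the NC dynamics—so the only force acting on the $S^\perp$-component of the features is the decay term. First I would write the single-step update for the feature mean, decomposing the parameter update into the loss-gradient contribution (which acts within $S_t$, driving the ID structure) and the weight-decay contribution $-\eta\lambda\,\theta$. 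Restricting to the $S^\perp$ projection and using orthogonality to argue the gradient contribution vanishes there, the update reduces to the geometric contraction $\mu_{c,S^\perp}(t+1) = (1-\eta\lambda)\,\mu_{c,S^\perp}(t)$, which telescopes to the claimed $(1-\eta\lambda)^{t-t_0}\mu_{c,S^\perp}(t_0)$.

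For the variance, I would split $\sigma_c^2(t)$ into the contribution from the component of the features lying \emph{within} $S_t$ and the component in $S^\perp$. The $S^\perp$ part inherits the same contraction as the mean, contributing a term of order $(1-\eta\lambda)^{2(t-t_0)}$ (the square arising because variance is quadratic in the features). The within-$S_t$ part is the more delicate piece: although the \emph{mean} of $X_c$ is orthogonal to $S_t$, individual samples need not be, and their spread projected onto the active subspace is governed by the ambient NC geometry. Here I would argue that this residual variance is controlled by the observed centered class-mean norm $\beta_t$ from \cref{def:NC2}: because the active subspace is an ETF of scale $\beta_t$, the in-subspace fluctuation of any feature vector is bounded above and below by multiples of $\beta_t$, giving a $\Theta(\beta_t)$ contribution. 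Summing the two pieces yields $\sigma_c^2(t) \in \Theta\!\left(\beta_t + (1-\eta\lambda)^{2(t-t_0)}\right)$.

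The two technical lemmas I would isolate are: (i) a \emph{no-signal} lemma showing that, in the terminal phase where the training loss is at zero and gradients are driven entirely by the ID class structure, the gradient update has vanishing projection onto $S^\perp$ when evaluated on OOD inputs (this is what makes weight decay the sole surviving force off the active subspace); and (ii) an \emph{invariance-of-scale} lemma tying the in-subspace spread to $\beta_t$. I would rely on the ``sufficiently small learning rate'' hypothesis to justify a first-order (continuous-time) approximation of the discrete update and to ensure $0 < 1-\eta\lambda < 1$ so the contraction is well-defined and the higher-order $\mathcal{O}(\eta^2)$ cross-terms are negligible relative to the leading contributions.

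The main obstacle, I expect, is rigorously establishing lemma (i)—the claim that the gradient signal truly vanishes along $S^\perp$ for OOD samples throughout the terminal phase. This requires more than the static orthogonality of \cref{def:OOD-main}: one must verify that the \emph{dynamics} preserve this orthogonality, i.e., that the loss gradient, backpropagated through $\phi_t$ and evaluated at OOD inputs, does not develop an $S^\perp$-component that would counteract the decay. I would handle this by invoking the NC regime assumption that at convergence the classifier weights align with the active ID means ($\mathcal{NC}3$), so that the logit gradients—and hence the feature-space error signal—lie in $S_t$; any leakage into $S^\perp$ is then a higher-order effect suppressed by the small learning rate, and can be folded into the $\mathcal{O}(\cdot)$ constants without affecting the stated asymptotic rates.
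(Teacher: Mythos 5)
Your proposal follows essentially the same route as the paper: your ``no-signal'' lemma is precisely \cref{theo:S-stabilization} (once $\mathcal{NC}3$ holds the backpropagated error signal lies in $S$, so weight decay is the only force acting on $S^\perp$, giving the contraction of \cref{lemma:orthogonal-shrinkage} and hence the mean formula), and your split of the variance into an in-subspace and an orthogonal part mirrors \cref{theo:OOD-var}. The one step you assert rather than establish is the \emph{two-sided} $\Theta(\beta_t)$ bound on the in-subspace variance: writing $\phi_{t,S}(x)=\tilde U(t)\,b_t(x)$, the ETF geometry only gives $\operatorname{Var}_{X_c,S}=\operatorname{tr}\bigl(\tilde U(t)^\top\tilde U(t)\,\Sigma_c\bigr)$ with the Gram matrix of scale $\beta_t$, so concluding $\Theta(\beta_t)$ --- rather than merely $O(\beta_t)$, or a quantity that could itself vanish if the coefficients concentrate --- requires the additional hypothesis, stated explicitly in \cref{theo:OOD-var}, that the coefficient covariance $\Sigma_c$ has constant, non-degenerate scale in $t$. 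Your ``invariance-of-scale lemma'' names this fact, but the justification offered (fluctuations bounded above and below by multiples of $\beta_t$ because the active subspace is an ETF of scale $\beta_t$) conflates the norm of the basis vectors with the spread of the coefficients; you should import that assumption explicitly to close the lower bound.
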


\begin{corollary}[Collapse to null distribution]
\label{corr:wd-OOD-origin-main}
If $\lambda > 0$, the OOD distribution converges to a degenerate null distribution: the mean decays to zero, and the variance limits depend on $\beta_t$.
\end{corollary}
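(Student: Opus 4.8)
The plan is to derive the corollary directly from \cref{theo:asymptotic-OOD} by taking the terminal-phase limit $t \to \infty$ in the two closed-form expressions it supplies. The only structural ingredient needed beyond the theorem is that the per-step contraction factor $1-\eta\lambda$ lies strictly inside the unit interval. Since $\lambda > 0$ by hypothesis and the theorem already presumes a sufficiently small learning rate, I take $0 < \eta\lambda < 1$ (i.e. $\eta < 1/\lambda$), so that $0 < 1-\eta\lambda < 1$ and every power $(1-\eta\lambda)^{t-t_0}$ decays geometrically in $t$. The corollary is then essentially a limit computation, so the proof is short.

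First I would handle the mean. From $\mu_c(t) = (1-\eta\lambda)^{t-t_0}\,\mu_{c,S^\perp}(t_0)$, the scalar prefactor tends to zero while the fixed vector $\mu_{c,S^\perp}(t_0)$ is independent of $t$; hence $\|\mu_c(t)\| \to 0$ and the class mean collapses to the origin. This establishes that the residual OOD signal confined to $S^\perp$ is erased by weight decay, leaving no first-order discriminative information. Next I would treat the variance: from $\sigma_c^2(t) \in \Theta\!\left(\beta_t + (1-\eta\lambda)^{2(t-t_0)}\right)$, the second summand is again a geometric term vanishing as $t \to \infty$, so it is asymptotically subdominant whenever $\beta_t$ is bounded away from zero. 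The variance is therefore governed entirely by $\beta_t$ in the limit, i.e. $\sigma_c^2(t) \in \Theta(\beta_t)$. Combining the two limits shows the OOD feature distribution degenerates to one with vanishing mean and residual spread set by the observed centered-mean norm $\beta_t$, which is precisely the claimed null distribution.

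The one point requiring care is the interpretation of \emph{null} and the exact role of $\beta_t$. I would stress that the corollary does \emph{not} assert the variance vanishes: because $\beta_t$ need not tend to zero under neural collapse, the limiting distribution is degenerate only in its mean, while retaining an isotropic-type spread controlled by $\beta_t$. I expect the main (and only mildly delicate) obstacle to be making the two limits simultaneous and uniform, i.e. arguing that the $(1-\eta\lambda)^{2(t-t_0)}$ contribution is \emph{eventually} negligible relative to $\beta_t$ rather than merely pointwise small. This follows cleanly once $\beta_t = \Omega(1)$, or more generally once the geometric decay rate $(1-\eta\lambda)^2$ outpaces any stabilization rate of $\beta_t$; I would state this monotonicity/dominance condition explicitly so that the $\Theta(\beta_t)$ conclusion is unambiguous.
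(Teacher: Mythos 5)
Your proposal is correct and follows the same route the paper (implicitly) takes: the corollary is an immediate limit computation from \cref{theo:asymptotic-OOD}, with $0<1-\eta\lambda<1$ forcing the mean's geometric prefactor to zero and the $\upsilon^{2(t-t_0)}$ term in the variance to vanish, leaving the $\beta_t$ contribution. Your added caveat that the limiting variance is $\Theta(\beta_t)$ only when $\beta_t$ is bounded away from zero is a fair (and slightly more careful) reading of the paper's weaker claim that the variance limit ``depends on $\beta_t$.''
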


The proof (see \cref{theo:S-stabilization}) relies on the observation that, once $\mathcal{NC}3$ (alignment between class feature means and classifier weights) emerges, \emph{optimization updates become restricted to the active subspace $S_t$}.  
Consequently, components of the representation in the orthogonal complement $S_t^\perp$ are frozen—or decay exponentially under weight decay— yielding the dynamics above. 

\noindent\textbf{\ding{43} Notation.} For brevity, let $\upsilon = 1 - \eta\lambda$, and note that $S_t = S_{t_0} = S$ for all $t \ge t_0$. 

\begin{theorem}[Lower bound on OOD linear separability]
\label{theo:SNR-OOD}
For two OOD classes $c, c'$ in the TPT, let $\upsilon = 1 - \eta\lambda$. The Signal-to-Noise Ratio (SNR), which lower-bounds linear separability, satisfies:
\[
    \operatorname{SNR}(c,c') \in \Theta\!\left( \left(\frac{\beta_t}{\upsilon^{2(t - t_0)}} + 1 \right)^{-1}\right).
\]
\end{theorem}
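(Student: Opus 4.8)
The plan is to substitute the asymptotic moment expressions from \cref{theo:asymptotic-OOD} directly into the definition of the SNR and simplify, treating everything at the level of $\Theta(\cdot)$ scaling in $t$. Recalling the notation $\upsilon = 1-\eta\lambda$ and that the active subspace is frozen ($S_t = S$) throughout the terminal phase, the first step is to compute the numerator $\|\mu_c(t) - \mu_{c'}(t)\|^2$. Since \cref{theo:asymptotic-OOD} gives $\mu_c(t) = \upsilon^{t-t_0}\mu_{c,S^\perp}(t_0)$, and likewise for $c'$, linearity yields
\[
\|\mu_c(t) - \mu_{c'}(t)\|^2 = \upsilon^{2(t-t_0)}\,\bigl\|\mu_{c,S^\perp}(t_0) - \mu_{c',S^\perp}(t_0)\bigr\|^2 .
\]
The key observation is that the bracketed separation is fixed at the freezing time $t_0$ and is therefore a $\Theta(1)$ constant in $t$, so the numerator scales as $\Theta\!\left(\upsilon^{2(t-t_0)}\right)$.

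Next I would handle the denominator $\operatorname{Tr}(\Sigma_c + \Sigma_{c'})$. The per-class variance estimate $\sigma_c^2(t) \in \Theta\!\left(\beta_t + \upsilon^{2(t-t_0)}\right)$ from \cref{theo:asymptotic-OOD} controls this trace up to a dimension-dependent factor that the $\Theta$ notation absorbs, giving $\operatorname{Tr}(\Sigma_c + \Sigma_{c'}) \in \Theta\!\left(\beta_t + \upsilon^{2(t-t_0)}\right)$. Dividing the numerator by the denominator and factoring $\upsilon^{2(t-t_0)}$ out of both terms gives
\[
\operatorname{SNR}(c,c') \in \Theta\!\left(\frac{\upsilon^{2(t-t_0)}}{\beta_t + \upsilon^{2(t-t_0)}}\right) = \Theta\!\left(\left(\frac{\beta_t}{\upsilon^{2(t-t_0)}} + 1\right)^{-1}\right),
\]
which is exactly the claimed bound.

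The main subtlety—rather than a deep obstacle—lies in justifying the $\Theta(1)$ claim for the separation $\|\mu_{c,S^\perp}(t_0) - \mu_{c',S^\perp}(t_0)\|^2$. This requires a non-degeneracy assumption: the two OOD classes must have distinct projections onto the inactive subspace $S^\perp$ at the onset of the terminal phase, and this separation must neither vanish nor depend on $t$. A second point requiring care is that an SNR \emph{lower} bound needs an \emph{upper} bound on the denominator, so one must invoke the full two-sided $\Theta$ in \cref{theo:asymptotic-OOD} rather than a mere order-of-magnitude estimate. Finally, since $\beta_t$ is itself the observed, slowly-varying ETF norm, the $\Theta$ should be read as holding pointwise in $t$ with constants uniform over the terminal phase; making this reading precise is the only place where the argument goes beyond routine substitution.
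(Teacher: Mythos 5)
Your proposal is correct and follows essentially the same route as the paper: the paper likewise places the class means entirely in $S^\perp$ (so the numerator inherits the $\upsilon^{2(t-t_0)}$ decay), decomposes the trace of the covariances into a $\Theta(\beta_t)$ part in $S$ and a $\Theta(\upsilon^{2(t-t_0)})$ part in $S^\perp$, and divides. Your flagged subtlety about the non-degeneracy of $\|\mu_{c,S^\perp}(t_0)-\mu_{c',S^\perp}(t_0)\|$ is a real (implicit) assumption in the paper's argument as well, acknowledged only in a remark following its proof.
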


\paragraph{Discussion.}
\cref{theo:SNR-OOD} does not imply that separability necessarily vanishes; consistent with our empirical findings (\cref{fig:section3}), a residual signal persists in $S^\perp$. However, this signal is fragile. The result reveals the \textit{dual role of weight decay}: it accelerates the exponential decay of the signal in $S^\perp$ (reducing the numerator), yet simultaneously prevents the explosion of the class-mean norm $\beta_t$ (constraining the denominator). Thus, weight decay both erases and indirectly preserves past-task representations.

We empirically observe that $\beta_t$ tends to increase upon introducing new classes (\cref{sec:beta_increase}), which amplifies forgetting, as by \cref{theo:SNR-OOD}. We hypothesize this is an artifact of classifier head initialization in sequential settings. Preliminary experiments, discussed in \cref{sec:beta_increase}, lend support to this hypothesis; however, we leave a comprehensive investigation of this finding to future research.

\subsubsection{Asymptotic distribution of past data with replay}
Having seen that, without replay, past-task data behaves like OOD inputs drifting into $S^\perp$, we now consider how replay alters this picture.  
Replay provides a foothold in the active subspace $S$, preventing the collapse of old-task representations and preserving linear separability.  
Intuitively, the effect of replay should interpolate between the two extremes: no replay ($\mathcal{D}_{\text{OOD}}$) and full replay ($\mathcal{D}_{\text{NC}}$).  

\begin{hypothesis}
    The class structure in feature space emerges smoothly as a function of the buffer size, with past-task features retaining a progressively larger component in $S$.
\end{hypothesis} 

To formalize this intuition, we introduce a mixture model for the asymptotic feature distribution under replay.  
Let $\pi_c \in [0,1]$ denote a monotonic function of the buffer size, representing the fraction of the NC-like component retained in $S$.  
Then, in the terminal phase of training, the feature distribution of class $c$ can be expressed as a mixture
\[
\phi(x) \sim \pi_c \, \mathcal{D}_{\text{NC}} + (1-\pi_c)\, \mathcal{D}_{\text{OOD}}.
\]
This model is exact in the extremes ($\pi_c = 0$ or $1$) and interpolates for intermediate buffer sizes. 

\paragraph{Validation.}\cref{fig:empirical-evidence-theory} confirms that increasing replay transfers variance from $S^\perp$ to $S$, improving separability. We observe that stronger weight decay reduces norms and within-class variability globally. Notably, we find an inverse relationship between buffer size and centered feature norms: while population means gravitate toward the global mean, representations in small-buffers are subject to a distinct \textit{repulsive force}, pushing partially collapsed features outward. Finally, centered feature norms are consistently lower in DIL than in TIL or CIL.

This mixture model yields a lower bound on the Signal-to-Noise Ratio (SNR), proving that replay \textit{guarantees} separability asymptotically.

\begin{theorem}[Lower bound on separability with replay]
\label{theo:SNR-Buffer}
Let $c, c'$ be past-task classes and $\pi \in (0,1]$ the buffer mixing coefficient. In the TPT,
\[
\operatorname{SNR}(c,c') \in \Theta\!\left( \frac{r^2\,\beta_t + \upsilon^{2(t-t_0)}}{r^2\,\delta_t + \beta_t + \upsilon^{2(t-t_0)}}\right), \quad \text{where } r^2 = \frac{\pi^2}{(1-\pi)^2}.
\]
\end{theorem}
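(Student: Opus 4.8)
The plan is to reduce the statement to a direct computation of the first two moments of the replay mixture and substitute them into the definition of the SNR. The crucial modeling step is to read $\phi(x) \sim \pi\,\mathcal{D}_{\mathrm{NC}} + (1-\pi)\,\mathcal{D}_{\mathrm{OOD}}$ as a weighted sum of two \emph{independent} components, $\phi(x) = \pi\, Z^{\mathrm{NC}} + (1-\pi)\, Z^{\mathrm{OOD}}$, where $Z^{\mathrm{NC}}$ is supported in the active subspace $S$ and $Z^{\mathrm{OOD}}$ in its orthogonal complement $S^\perp$. This is the decisive choice: under independence the covariance of a weighted sum carries \emph{squared} weights, $\Sigma_c = \pi^2 \Sigma_c^{\mathrm{NC}} + (1-\pi)^2 \Sigma_c^{\mathrm{OOD}}$, which is precisely what produces the grouping $r^2 = \pi^2/(1-\pi)^2$ in the final bound. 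I would first record the per-class mean $\mu_c = \pi\, m_c^{\mathrm{NC}} + (1-\pi)\, m_c^{\mathrm{OOD}}$ together with this covariance decomposition, importing the two component statistics from results already proven: $\mathcal{NC}1$ gives $\operatorname{Tr}(\Sigma_c^{\mathrm{NC}}) = \Theta(\delta_t)$, the residual within-class variance of the collapsed component (which I would take as the definition of $\delta_t$), while \cref{theo:asymptotic-OOD} gives $\operatorname{Tr}(\Sigma_c^{\mathrm{OOD}}) = \Theta(\beta_t + \upsilon^{2(t-t_0)})$ and a mean of norm $\Theta(\upsilon^{t-t_0})$ confined to $S^\perp$.

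For the numerator I would exploit orthogonality. Because $m_c^{\mathrm{NC}} - m_{c'}^{\mathrm{NC}} \in S$ while $m_c^{\mathrm{OOD}} - m_{c'}^{\mathrm{OOD}} \in S^\perp$, the global mean cancels in the difference and Pythagoras gives $\|\mu_c - \mu_{c'}\|^2 = \pi^2\,\|m_c^{\mathrm{NC}} - m_{c'}^{\mathrm{NC}}\|^2 + (1-\pi)^2\,\|m_c^{\mathrm{OOD}} - m_{c'}^{\mathrm{OOD}}\|^2$, with no cross term. The first squared gap is $\Theta(\beta_t)$ by the ETF geometry of $\mathcal{NC}2$ (two equinorm means at angle $-1/(K-1)$ are separated by $2\beta_t K/(K-1)$), and the second is $\Theta(\upsilon^{2(t-t_0)})$ by the mean-decay rate of \cref{theo:asymptotic-OOD}. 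Hence the signal is $\Theta\!\big(\pi^2 \beta_t + (1-\pi)^2 \upsilon^{2(t-t_0)}\big)$.

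For the denominator I would add the two per-class traces; since both classes share the same order of magnitude under the ETF/balanced-replay symmetry, $\operatorname{Tr}(\Sigma_c + \Sigma_{c'}) = \Theta\!\big(\pi^2 \delta_t + (1-\pi)^2 (\beta_t + \upsilon^{2(t-t_0)})\big)$. Forming the ratio and dividing numerator and denominator by $(1-\pi)^2$ turns every $\pi^2/(1-\pi)^2$ into $r^2$ and yields $\operatorname{SNR}(c,c') \in \Theta\big((r^2 \beta_t + \upsilon^{2(t-t_0)})/(r^2 \delta_t + \beta_t + \upsilon^{2(t-t_0)})\big)$, as claimed. As sanity checks I would verify the two extremes: sending $\pi \to 0$ (so $r^2 \to 0$) recovers $\Theta(\upsilon^{2(t-t_0)}/(\beta_t + \upsilon^{2(t-t_0)}))$, matching \cref{theo:SNR-OOD}, while $\pi = 1$ gives $\Theta(\beta_t/\delta_t) \to \infty$ as $\delta_t \to 0$, i.e. the perfect separability of full replay.

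The main obstacle is not the algebra but justifying the modeling assumptions that make the algebra valid: (i) that the NC and OOD components are asymptotically independent and live in orthogonal subspaces, so that both the squared-weight covariance law and the cross-term-free Pythagorean split hold; and (ii) that the constants hidden in the $\Theta$ for the two classes are genuinely of matched order, so that summing $\Sigma_c$ and $\Sigma_{c'}$ leaves the rate unchanged. I would address (i) by appealing to \cref{def:OOD-main} (OOD mass is orthogonal to $S$) together with the dynamical argument behind \cref{theo:asymptotic-OOD} that freezes $S^\perp$ updates once $\mathcal{NC}3$ emerges, and (ii) by noting that $\beta_t$, $\delta_t$, and the $S^\perp$ mean norms are class-symmetric under the ETF and task-balanced replay assumptions invoked earlier.
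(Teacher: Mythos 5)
Your proposal is correct and follows essentially the same route as the paper's proof (Theorem~\ref{theo:replay-separability}): model $\phi(x)$ as a weighted combination of an NC component in $S$ and an OOD component in $S^\perp$, take squared mixing weights on the covariances, split the mean gap Pythagoreanly across $S$ and $S^\perp$, import $\operatorname{Tr}(\Sigma^{\mathrm{OOD}})\in\Theta(\beta_t+\upsilon^{2(t-t_0)})$ from \cref{theo:OOD-var} and $\operatorname{Tr}(\Sigma^{\mathrm{NC}})=\delta_t$ from $\mathcal{NC}1$, and divide through by $(1-\pi)^2$ to surface $r^2$. Your explicit reading of the mixture as a sum of independent components is exactly what the paper's moment formulas implicitly assume, and your two limiting sanity checks match \cref{cor:snr-replay}.

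The one place you shortcut the paper is the $S$-component of the numerator. You apply the $\mathcal{NC}2$ ETF gap directly to the means of the NC component, whereas the paper only has the ETF on the \emph{observed} (buffer) means $\hat\mu_{i}$ and must carry the population--observed deviation $\xi_{i,S}$ through the expansion: it bounds the resulting cross term $2\langle\pi_1\hat\mu_{1,S}-\pi_2\hat\mu_{2,S},\,\pi_1\xi_{1,S}-\pi_2\xi_{2,S}\rangle$ via Cauchy--Schwarz and the concentration estimate of \cref{prop:buffer-concentration-ap}, and then argues (somewhat informally) that this signed term cannot cancel the leading $\Theta((\pi_1+\pi_2)^2\beta_t)$ contribution. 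Your version is cleaner but silently assumes the ETF geometry transfers from the buffer means to the population means of the NC component; if you want your argument to be airtight at the paper's level of rigor, you should add the step controlling $\|\xi_{i,S}\|$ relative to $\|\hat\mu_{i,S}\|$. A second, cosmetic difference: the paper proves the result with per-class coefficients $\pi_1,\pi_2$ and ends with $r^2=(\pi_1+\pi_2)^2/(1-(\pi_1+\pi_2))^2$, while you (consistently with the main-text statement) use a single $\pi$; this also makes the paper's extra $(\pi_1-\pi_2)^2\|\mu_G\|^2$ term in the $S^\perp$ gap vanish in your setting.
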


\begin{corollary}
\label{cor:snr-replay-main}
If $\pi > 0$ (non-empty buffer), the SNR does not vanish: $\operatorname{SNR}(c,c') \in \Theta(r^2)$ as $t \to \infty$.
\end{corollary}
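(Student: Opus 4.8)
The plan is to read off \cref{cor:snr-replay-main} as the $t\to\infty$ evaluation of the $\Theta$-bound in \cref{theo:SNR-Buffer}, keeping the mixing coefficient $\pi$ (hence $r^2 = \pi^2/(1-\pi)^2$) fixed and strictly positive. The bound contains exactly three time-dependent quantities: the residual decay factor $\upsilon^{2(t-t_0)}$, the centered class-mean norm $\beta_t$, and the within-class variance $\delta_t$ of the retained collapsed component. The entire argument reduces to determining the limit of each and substituting.

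First I would control the out-of-distribution residual. Since $\upsilon = 1-\eta\lambda$ and we work with a sufficiently small learning rate $\eta$ and weight decay $\lambda > 0$, we have $\upsilon \in (0,1)$, so $\upsilon^{2(t-t_0)} \to 0$ geometrically. This is precisely the weight-decay erasure of the $S^\perp$ component established in \cref{theo:asymptotic-OOD}. Next, by variability collapse ($\mathcal{NC}1$) the within-class variance of the NC-like component vanishes, $\delta_t \to 0$, while by the simplex-ETF property ($\mathcal{NC}2$) the centered mean norm converges to a strictly positive constant, $\beta_t \to \beta_\infty > 0$. The foothold in the active subspace $S$ guaranteed by $\pi > 0$ is exactly what prevents $\beta_t$ from degenerating (i.e. what rules out the minority-collapse regime in which separability would be lost).

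Substituting these limits into \cref{theo:SNR-Buffer}, the numerator $r^2\beta_t + \upsilon^{2(t-t_0)} \to r^2\beta_\infty$ and the denominator $r^2\delta_t + \beta_t + \upsilon^{2(t-t_0)} \to \beta_\infty$, so the ratio tends to $r^2$. Hence $\operatorname{SNR}(c,c') \in \Theta(r^2)$, and since $\pi > 0$ forces $r^2 > 0$, the separability bound is asymptotically bounded away from zero, which is the claimed guarantee that any non-empty buffer preserves linear separability.

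The main obstacle I anticipate is making the vanishing of $\delta_t$ rigorous and, more importantly, establishing that it is genuinely lower order than $\beta_t$ (i.e. $\delta_t = o(\beta_t)$ rather than merely $\delta_t \to 0$): if the within-class variance did not decay relative to the mean norm, the limiting ratio would be $r^2\beta_\infty / (r^2\delta_\infty + \beta_\infty)$, which collapses to $\Theta(r^2)$ only when $\delta_\infty = 0$. A secondary subtlety is verifying that the $\Theta$-constants hidden in \cref{theo:SNR-Buffer} carry no $\pi$-dependence that would spoil the clean $r^2$ scaling; because the limit is taken for fixed $\pi$, it suffices that these constants be uniform in $t$, which follows from the terminal-phase hypothesis $t \ge t_0$.
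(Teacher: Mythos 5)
Your proposal is correct and follows essentially the same route as the paper: the corollary is obtained by evaluating the $\Theta$-bound of \cref{theo:SNR-Buffer} as $t\to\infty$, using $\upsilon^{2(t-t_0)}\to 0$ (weight decay), $\delta_t\to 0$ ($\mathcal{NC}1$), and the non-degeneracy of $\beta_t$, so that numerator and denominator reduce to $\Theta(r^2\beta_t)$ and $\Theta(\beta_t)$ respectively. Your closing observation that only $\delta_t = o(\beta_t)$ and $\upsilon^{2(t-t_0)} = o(\beta_t)$ are actually needed (so the argument survives even when $\beta_t$ grows rather than converges, as the paper empirically observes) is a correct and slightly more careful reading than the paper's own statement.
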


The corollary formalizes the intuition that any non-empty buffer anchors features in $S$. The anchoring strength $r^2$ grows with buffer size; empirically, this growth is superlinear in single-head models (CIL, DIL) but sublinear in multi-head TIL.

\begin{figure}[ht]
    
    \centering
    \includegraphics[width=0.9\linewidth]{./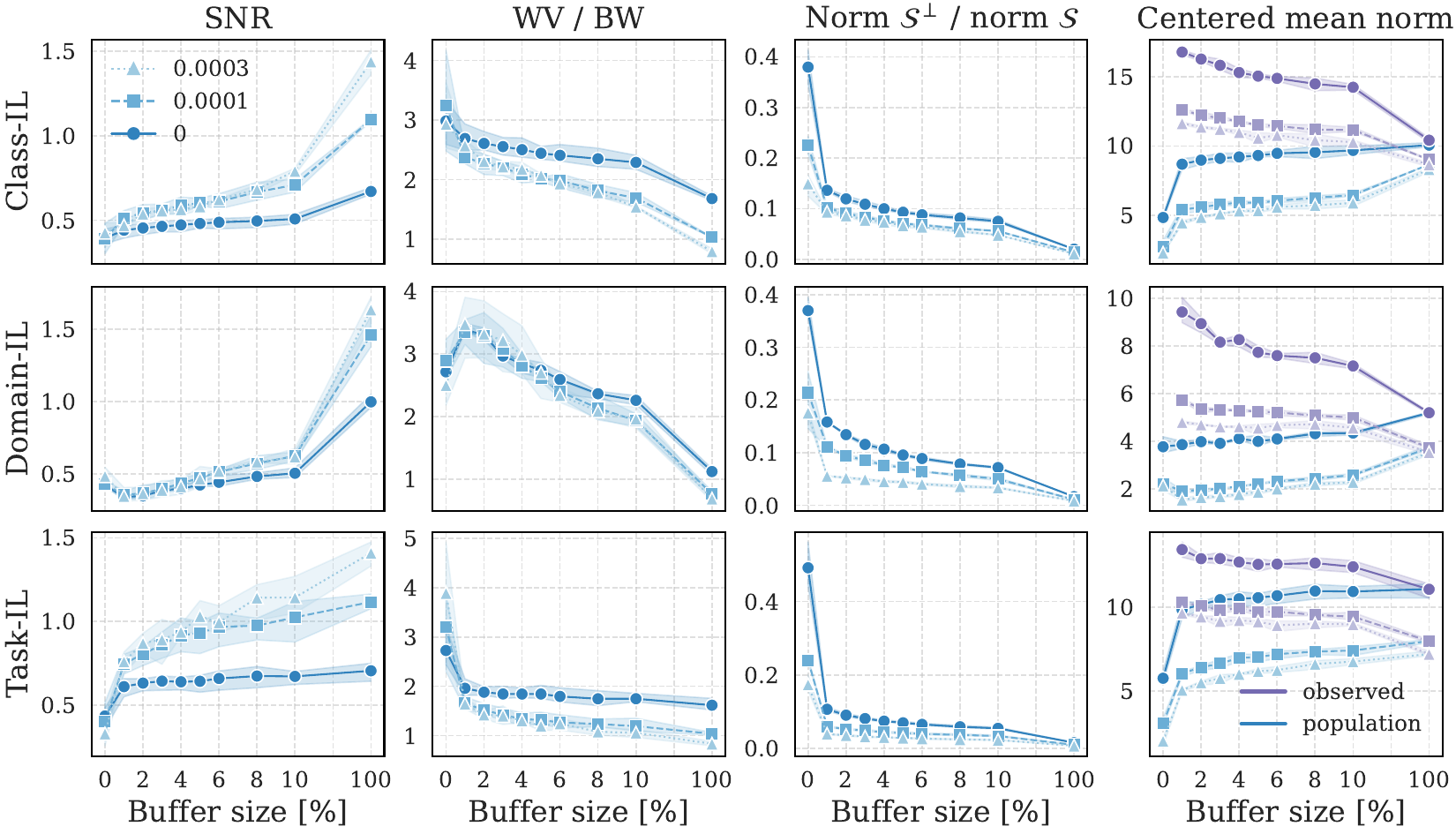}    
    \caption{ \textbf{Empirical validation for the theoretical model of feature space structure (Cifar100, ResNet with 5\% replay).} 
    Plot shows the average over all past tasks after training the last task for four metrics. Results are shown for different buffer sizes and weight decay parameters (different lines). 
    Details in \cref{apsec:fig-details}.}
    \vspace{-0.3cm}
    \label{fig:empirical-evidence-theory}
\end{figure}

\paragraph{Discussion.}
These results rigorously establish replay as an \textit{anchor} within the active subspace $S$. While the absence of replay forces representations into $S^\perp$—causing exponential signal decay—any non-empty buffer guarantees a persistent signal proportional to $r^2$, ensuring asymptotic separability. Crucially, the efficiency of this anchoring varies by architecture: empirical trends (SNR, \cref{fig:empirical-evidence-theory}) indicate \textit{sublinear} growth of $\pi_c$ in single-head settings (CIL, DIL) versus \textit{superlinear} growth in multi-head TIL, suggesting fundamental differences in how shared versus partitioned heads utilize replay capacity.

\subsection{The replay efficiency gap}

We have established that even modest replay buffers suffice to anchor the feature space, preserving a non-vanishing Signal-to-Noise Ratio (mitigating \textit{deep forgetting}). This resolves the first half of the puzzle. We now address the second half: \textit{why does this preserved separability not translate into classifier performance (shallow forgetting)?}

\paragraph{Mechanism: The under-determined classifier.}
Shallow forgetting arises from the fundamental statistical divergence between the finite replay buffer and the true population distribution. This divergence is structurally amplified by Neural Collapse.
As noted by \citet{hui_limitations_2022}, small sample sizes induce a ``strong'' NC regime where samples collapse aggressively to their empirical means (yielding smaller $\mathcal{NC}1$ values, see \cref{fig:NCmetrics-BUFSIZE-TASK}). Geometrically, this projects the buffer data onto a low-dimensional subspace $S_{B} \subset S$ (rank $\approx K-1$). However, the true population retains variance in directions orthogonal to $S_{B}$ (specifically within $S^\perp$). \looseness=-1

This geometric mismatch renders the optimization of the classifier head an \textbf{ill-posed, under-determined problem}.
Let $W$ be the classifier weights.  Since the buffer variance vanishes in directions orthogonal to $S_{B}$, the cost function is invariant to changes in $W$ along these directions. Consequently, the optimization landscape contains a manifold of ``buffer-optimal'' solutions that achieve near-zero training error. However, these solutions can vary arbitrarily in the orthogonal complement of $S_B$, leading to decision boundaries that are misaligned with the true population mass (as visualized in \cref{fig:figure1}). The classifier overfits the simplified geometry of the buffer, failing to generalize to the richer geometry of the population.

\begin{figure}[ht]
    \centering
    
    \vspace{-0.3cm}
    \includegraphics[width=\linewidth]{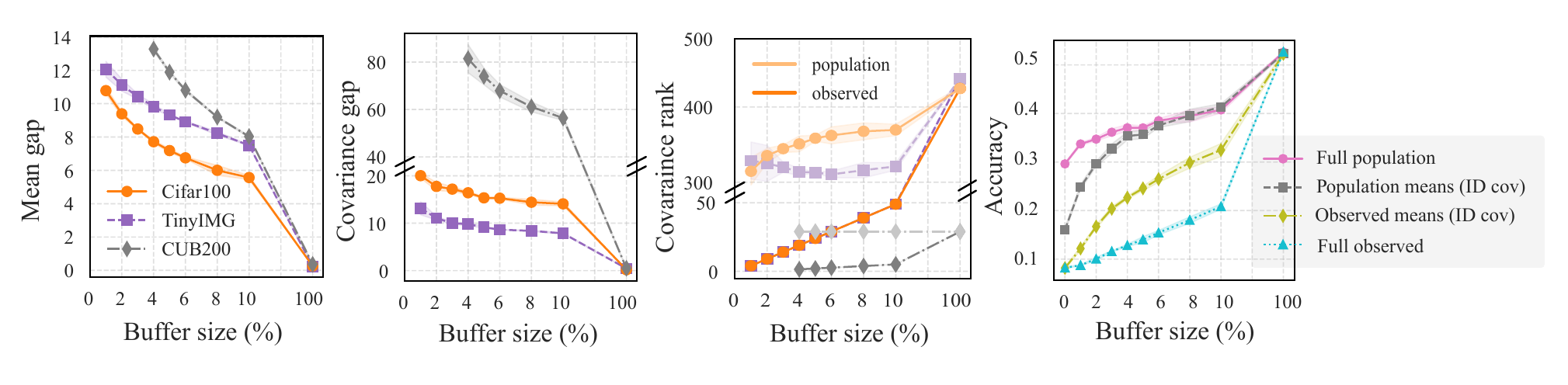}
    \caption{\textbf{Deconstructing the statistical gap.} 
    \textit{Left and center-left:} Gap (measured as L2 distance) between population and observed   metrics. \textit{Center-right:} Rank of the population (light shade) and observed (dark shade) covariance, the gap persists as the buffer size is increased.
    \textit{Right:} Synthetic Linear Discriminant Analysis (LDA) on TinyIMG. We replace true statistics ($\mu, \Sigma$) with buffer estimates ($\hat{\mu}, \hat{\Sigma}$) to isolate error sources. 
    Details in \cref{apsec:fig-details}.}
    \label{fig:distributional-differences}
    \vspace{-0.3cm}
\end{figure}

\paragraph{Mechanistic analysis of statistical divergence.}
We quantitatively decompose this divergence into two primary artifacts, validated via synthetic Linear Discriminant Analysis (LDA) counterfactuals (\cref{fig:distributional-differences}, methodological details in \cref{apsec:fig-details}).
First, \textit{covariance deficiency}: the buffer's empirical covariance $\hat{\Sigma}_{{B}}$ is rank-deficient and blind to variance in $S^\perp$. The criticality of second-order statistics is evidenced by the sharp accuracy drop observed when replacing the true population covariance with the identity matrix in LDA (gray line).
Second, \textit{mean norm inflation}: buffer means exhibit inflated norms relative to population means due to repulsive forces. Our LDA analysis confirms that replacing population means with buffer estimates (olive line) causes a distinct, additive performance degradation. Notably, 
when relying on both observed estimates of mean and covariance (cyan line) the performance of the LDA classifier drops below the original network's performance.Metrics such as mean and covariance gap (\cref{fig:distributional-differences}, Left) further confirm that these discrepancies—particularly the covariance rank deficiency—persist until the buffer approaches full size. \looseness=-1

\paragraph{Implications.}
These findings mechanistically explain the replay efficiency gap: the feature space retains linear separability, yet the classifier remains statistically blinded to it. Consequently, simply increasing buffer size is an inefficient, brute-force solution. Instead, our results suggest that \textit{to bridge the gap between shallow and deep forgetting}, one must explicitly counteract the effects of Neural Collapse—specifically by preventing the extreme concentration and radial repulsion appearing in small buffers. We further elaborate on these implications in the discussion of future work.

\section{Related work}

Our work intersects three main research directions: the geometry of neural feature spaces, out-of-distribution (OOD) detection, and continual learning (CL). A more detailed overview is provided in \cref{app:related}. Below we highlight the most relevant connections and our contributions.

\textbf{Deep vs. shallow forgetting.} Classical definitions of catastrophic forgetting focus on output degradation (\emph{shallow forgetting}). More recent studies show that internal representations often retain past-task structure, recoverable via probes (\emph{deep forgetting}) \citep{murata_what_2020,ramasesh_anatomy_2020,fini_self-supervised_2022,davari_probing_2022,zhang_feature_2022,hess_knowledge_2023}. Replay is known to mitigate deep forgetting in hidden layers \citep{murata_what_2020,zhang_feature_2022}. To our knowledge, we are the first to demonstrate that deep and shallow forgetting scale fundamentally differently with buffer size. \looseness=-1

\textbf{Neural Collapse.} NC describes the emergence of an ETF structure in last-layer features at convergence \citep{papyan_prevalence_2020,mixon_neural_2022,tirer_extended_2022,jacot_wide_2024,sukenik_neural_2025}. Extensions address class imbalance (\emph{Minority Collapse}) \citep{fang_exploring_2021,dang_neural_2023,hong_neural_2023} and overcomplete regimes \citep{jiang_generalized_2024,liu_generalizing_2023,wu_linguistic_2024}. In continual learning, NC has been leveraged to fix global ETF heads to reduce forgetting \citep{yang_neural_2023,dang_memory-efficient_2024,wang_rethinking_2025}. Our approach is distinct: we apply NC theory to the \emph{asymptotic analysis} of continual learning and introduce the multi-head setting, common in CL but previously unexplored in NC theory.

\textbf{OOD detection.} Early work observed that OOD inputs yield lower softmax confidence \citep{hendrycks_baseline_2018}, while later studies showed that OOD features collapse toward the origin due to low-rank compression \citep{kang_deep_2024}. Recent results connect this behavior to NC: $L_2$ regularization accelerates NC and sharpens ID/OOD separation \citep{haas_linking_2023}, and ID/OOD orthogonality has been proposed as an additional NC property, with OOD scores derived from ETF subspace norms \citep{ammar_neco_2024}. Our work extends these insights by formally establishing orthogonality, clarifying the role of weight decay and feature norms, and—crucially—providing the first explicit link between OOD detection and forgetting in CL.

\section{Final discussion \& conclusion}

\textbf{Takeaways.}  
This work has shown that: (1) replay affects network features and classifier heads in fundamentally different ways, leading to a slow reduction of the replay efficiency gap as buffer size increases; (2) the Neural Collapse framework can be systematically extended to continual learning, with particular emphasis on the multi-head setting—a case not previously addressed in the NC literature; (3) continual learning can be formally connected to the out-of-distribution (OOD) detection literature, and our results extend existing discussions of NC on OOD data. We further elucidated how weight decay and the growth of class feature norms jointly determine linear separability in feature space.  
Our analysis also uncovered several unexpected phenomena: (i) class feature norms grow with the number of classes in class- and task-incremental learning; (ii) multi-head models yield structurally lower-rank feature spaces compared to single-head models; and (iii) weight decay exerts a double-edged influence on feature separability, with its effect differing across continual learning setups.  \looseness=-1

\textbf{Limitations.}  
Our theoretical analysis adopts an asymptotic perspective, thereby neglecting the transient dynamics of early training, which are likely central to the onset of forgetting \citep{lapacz_exploring_2024}. Moreover, our modeling of replay buffers as interpolations between idealized extremes simplifies the true distributional dynamics and may not fully capture practical scenarios. Finally, many aspects of feature-space evolution under sequential training—particularly the nature of cross-task interactions in multi-head architectures—remain poorly understood and require further investigation.  \looseness=-1

\textbf{Broader Implications.}  
By establishing a formal link between Neural Collapse, OOD representations, and continual learning, our findings highlight key design choices—including buffer size, weight decay, and head structure—that shape the stability of past-task knowledge. These results raise broader questions: \emph{What constitutes an “optimal” representation for continual learning? Is the Neural Collapse structure beneficial or detrimental in this context?} Our results suggest that while NC enhances feature organization, it also exacerbates the mismatch between replay and true distributions, thereby contributing to the replay efficiency gap. Addressing these open questions will be essential for designing future continual learning systems.

% \section{Discussion \& Conclusion}

% \textbf{Takeaways} ...

% \textbf{Limitations}: (1) our theory is based on an asymptotic analysis and therefore overlooks the early training dynamics entirely, which have been shown to be fundamental to any forgetting process... (2) there are many more misteries regarding evolution of feature spaces in sequential training which our study leaves unexplained, in particular...

% \textbf{Broader implications}: Our results connect Neural Collapse theory with continual learning, uncover design choices that affect forgetting (e.g., weight decay, head structure), and suggest new strategies for aligning outputs with preserved features.

\newpage

\bibliography{references}
\bibliographystyle{iclr2026_conference}

\newpage

\appendix
\clearpage
\addcontentsline{toc}{section}{Appendix} % Add the appendix text to the document TOC
\part{Appendix} % Start the appendix part
\parttoc % Insert the appendix TOC

\clearpage

\section{Empirical Appendix}
\label{asec:empirical-appendix}

\subsection{Experimental eetails}
\label{sec:expe-details}

We utilize the benchmark codebase developed by \cite{NEURIPS2020_b704ea2c}\footnote{Their codebase is publicaly availible at: https://github.com/aimagelab/mammoth}. To accommodate our experiments we performed several changes to the default implementation. %However, the default implementation does not support multi-headed training.\footnote{In the codebase, the distinction between class- and task-incremental learning is only supported at evaluation.}. To accommodate our experiments, we incorporated this functionality and modified the codebase accordingly.

\paragraph{Training configurations.} \cref{tab:experiment_configuration} summarizes the configurations used in our main experiments. All models are trained in an offline continual learning setting, where each task's dataset is trained for a specified number of iterations before transitioning to the next task. Models are trained to reach error convergence on each task and more training does not improve performance. For all experiments, the random seeds were set to [1000, 2000, 3000]. 

We define the number of tasks as specified in \cref{tab:experiment_configuration}. The classes of each dataset are then distributed across these tasks. In the DIL setting, the same class labels are reused for every task. The class ordering was randomized in each run, meaning that a specific tasks consist of different classes in each run. This was done to ensure that the results are not biased by a specific class sequence. However, we observed that this increases the variance when metrics are evaluated task wise compared to using a fixed class assignment.

\begin{table}[ht]
    \centering % center the table horizontally
    \begin{tabular}{c c c c c c}
        \toprule
        \textbf{Dataset} & \textbf{Tasks} & \textbf{Epochs first task} & \textbf{Network} & \textbf{Batch Size}\\ [0.5ex]
        \midrule
        Cifar100 & 10 & 200 & ResNet18 (11M) & 64\\
        Cifar100 & 10 & 40 & ViT base, pretrained on ImageNET (86M) & 256\\
        TinyIMG & 10 & 200 & ResNet18 (11M) & 64 \\
        CUB200 & 10 & 80 & ResNet50, pretrained on ImageNET (24M) & 32 \\
        \bottomrule
    \end{tabular}
    \caption{\textbf{Experiment configurations.}}
    \label{tab:experiment_configuration}
\end{table}

\paragraph{Hyper parameters} Our hyper parameters were largely adapted from \citep{NEURIPS2020_b704ea2c} and are listed in \cref{tab:hyperparameters}. We use a constant learning rate. For all buffer sizes the same hyper parameters are used.
Finally to study the effects of weight decay, we vary the weight decay strength in our experiments while keeping all other factors constant.
\begin{table}[ht]
    \centering % center the table horizontally
    \begin{tabular}{c c c c}
        \toprule
        \textbf{Dataset} & \textbf{Method} & \textbf{Optimizer} & \textbf{Hyper Parameters} \\ [0.5ex]
        \midrule
        Cifar100, ResNet & ER & SGD & $lr: 0.1, wd: 0.0001$ \\
        Cifar100, ResNet & DER & SGD & $lr: 0.03, \alpha = 0.3,$ \\
        Cifar100, ResNet & FDR & SGD & $lr: 0.03, \alpha = 0.3,$ \\
        Cifar100, ResNet & iCaRL & SGD & $lr: 0.1, wd: 0.00005$ \\
        Cifar100, ViT & ER & AdamW & $lr: 0.0001, wd=0.0001$ \\
        TinyIMG, ResNet & ER & SGD & $lr: 0.1, wd: 0.0001$ \\
        CUB200, ResNet & ER & SGD & $lr: 0.03, wd: 0.0001$ \\
        \bottomrule
    \end{tabular}
    \caption{\textbf{Hyper parameters}}
    \label{tab:hyperparameters}
\end{table}

\paragraph{Datasets and preprocessing.} We adopt publicly available image classification benchmarks: Cifar100 ($32 \times 32$ RGB, 50000 samples across 100 classes), TinyIMG ($64 \times 64$ RGB across 100000 samples, 200 classes) and CUB200 ($224 \times 224$ RGB, 12000 samples across 200 classes). Standard train/test splits are used. We apply standard augmentations like random crops and flips, without increasing the dataset size.

\paragraph{Experience Replay (ER).}
In our implementation of ER, we adopt a balanced sampling strategy in which each task contributes equally to the mini-batches. While this strategy would normally require more iterations for later tasks, we avoid this by fixing the total number of iterations across all tasks to match those performed on the first task—effectively reducing the number of epochs for later tasks.

To maintain precise control over the buffer composition, we employ an offline sampling scheme to populate the buffer. Samples (together with their labels) from a task are added to the buffer only after training on that task is completed. This guarantees a balanced number of stored samples per task and class. Because the buffer size is specified as a percentage of the task-wise dataset, the number of stored samples per task remains constant and equal to the chosen buffer percentage. As more tasks are encountered, these fixed per-task allocations accumulate, resulting in a steadily increasing overall buffer size. \cref{tab:buffer_sizes} specifies the buffer sizes used in the experiments. When setting the buffer size to 0, ER naturally reduces to standard SGD.

\begin{table}[ht]
    \centering
    \begin{tabular}{c c}
        \toprule
        \textbf{Buffer Sizes (\% of task-wise dataset)} \\ [0.5ex]
        \midrule
         0, 1, 2, 3, 4, 5, 6, 8, 10, 100\\
        \bottomrule
    \end{tabular}\caption{Buffer sizes expressed as percentages of each task-wise dataset. The same values are used consistently across all experimental configurations.}
    \label{tab:buffer_sizes}
\end{table}

\paragraph{Measures of superficial and deep forgetting.}  
\emph{Shallow forgetting} quantifies the drop in output accuracy on past tasks after learning new ones, defined as
\[
F^{\text{shallow}}_{i \to j} = A_{jj} - A_{ij},
\] 
where $A_{ij}$ is the accuracy on task $j$ measured after learning session $i$.  

\emph{Deep forgetting} measures the loss of discriminative information in the features themselves, independent of the head. To measure it, we train a logistic regression classifier (scikit-learn’s LogisticRegression, default settings, C=10) on frozen features extracted from the full dataset after learning session $i$. The resulting accuracy, evaluated at the end of session 
$j$, is denoted by  $A^\star_{ij}$. Formally,
\[
F^{\text{deep}}_{i \to j} = A^\star_{jj} - A^\star_{ij}.
\]  
For single-head models, one probe is trained over all classes; for multi-head architectures, one probe per task-specific head is used.

\subsection{Figure details}
\label{apsec:fig-details}

This subsection details the computations behind the figures presented in the main text. 

\textbf{\cref{fig:section3}.} Forgetting metrics are evaluated after the final training session, following the procedure described in \cref{sec:expe-details}, and across buffer sizes specified in \cref{tab:buffer_sizes}. Different line styles correspond to distinct continual learning settings (TIL, DIL, CIL).

\textbf{\cref{fig:Sequential-Learning-NC}.} Neural Collapse (NC) metrics are computed for each task every 100 steps during training of a ResNet from scratch on CIFAR100 in both CIL, DIL and TIL settings. Metrics are evaluated on the available training data, which includes the current task’s dataset plus the replay buffer which contains 5\% of the past tasks' dataset. In TIL, for $\mathcal{NC}2$ the within-class-pair values for each task are shown in the standard task colors, while the values across class pairs from different tasks are highlighted in violet. The brown vertical lines indicate the task switches. 

Note that in DIL each task contains the same set of classes. If class means were computed naively across tasks, differences specific to each task would be obscured, preventing us from identifying task-wise trends. Therefore \cref{fig:Sequential-Learning-NC} evaluates the NC metrics separately for each task. In contrast \cref{fig:NCmetrics-BUFSIZE-TASK} reports the NC metrics computed jointly over all tasks in DIL. Importantly, both approaches produce consistent results.

\textbf{\cref{fig:Sequential-Learning-OOD}.} Norm of $\tilde\mu_c(t)$ projected to $S_t$, averaged over all classes belonging to a task, is computed every 100 steps during training of a ResNet from scratch on CIFAR100 under CIL. The brown vertical lines indicate the task switches. In DIL $\tilde\mu_c(t)$ is computed separately for every task.

\textbf{\cref{fig:empirical-evidence-theory}} Measurements are collected after the final training session on Cifar100 using a ResNet trained from scratch, averaged over all past-task classes. In DIL we again calculate the class means task wise. The buffer sizes correspond to those listed in \cref{tab:buffer_sizes}. The signal-to-noise ratio (SNR) is computed as described in \cref{sec:notation-background}. The second panel displays the normalized variance ratio, where the \emph{within-class variance} is defined as 
\(\frac{1}{|\mathcal{C}|}\sum_{c \in \mathcal{C}} \operatorname{Tr}(\operatorname{Cov}(\phi(x) \mid x \in X_c))\),  
and the \emph{between-class variance} is defined as 
\(\operatorname{Tr}(\operatorname{Cov}(\{\mu_c\}_{c\in\mathcal{C}}))\),  
with \(\mu_c\) the population feature mean vector of class \(c\). The third panel displays the average ratio: $\|S^\perp_t \tilde\mu_c(t) \|\;/ \; \|S_t \tilde\mu_c(t)\|$. And the fourth panel displays the average norm of $\tilde{\mu}_c(t)$ and $\tilde{\hat{\mu}}_c(t)$.

\textbf{\cref{fig:distributional-differences} (first three plots).} At the end of the last training session, the network is evaluated on multiple datasets under a CIL protocol. For CUB200 we do not report buffer sizes which are smaller then 4\%, as at least two samples per class are needed to calculate the covariance matrix. For each buffer size reported in \cref{tab:buffer_sizes}, we collect the class-wise mean vectors $\hat \mu_{c}(t)$ and covariances $\hat{\ \Sigma_{c}}(t)$ from the buffer, as well as the corresponding population statistics $\mu_{c}(t)$ and $\Sigma_{c}(t)$. The following metrics are computed and averaged across past classes:  
\begin{itemize}  
    \item {Mean gap:} $\|\mu_{c}(t) - \hat \mu_{c}(t)\|_2$.  
    \item {Covariance gap:} $\|\Sigma_{c}(t) - \hat \Sigma_{c}(t)\|_F$, the Frobenius norm of the difference between covariances.  
    \item{Covariance rank:} both the rank of the population covariance matrix $\Sigma_{c}(t)$ and the observed covariance matrix $\hat{\ \Sigma_{c}}(t)$ are reported. Note that the rank is upper bounded by the number of samples which are used to calculate the covariance matrix.
\end{itemize}
These quantities quantify the discrepancy between the buffer and true class distributions in feature space, which drives shallow forgetting.

\textbf{\cref{fig:distributional-differences} (right-most panel).} Same experimental setup as the first three panels. We evaluate different linear classifiers on TinyIMG using class-wise feature statistics. Specifically, we construct linear discriminant analysis (LDA) classifiers. For a class $c$, the LDA decision rule is
\[
\hat{y}(x) = \arg\max_c \; (x - \hat\mu_c(t))^\top \hat\Sigma^{-1}(t) (x - \hat\mu_c(t)),
\]
where $\hat\mu_c$ denote the estimated class mean and $\hat\Sigma$ is a shared covariance matrix. We vary the estimates used for each class as follows:  
\begin{itemize}  
    \item \textbf{Full population:} both mean $\mu_{c}(t)$ and covariance $\Sigma(t)$ are measured on population, $\Sigma(t)$ is pooled across all classes.

    \item \textbf{Population means (ID cov):} mean is taken from population $\mu_{c}(t)$, but covariance is fixed to the identity. 

    \item \textbf{Observed means (ID cov):} mean is taken from observed samples $\hat\mu_{c}(t)$, but covariance is fixed to the identity. 

    \item \textbf{Full buffer:} both mean $\hat \mu_{c}(t)$ and covariance $\hat \Sigma(t)$ are computed from the observed samples, $\hat \Sigma(t)$ is pooled across all classes.
 
\end{itemize}  
This evaluation highlights how errors in buffer-based mean and covariance estimates contribute to shallow forgetting, and quantifies the impact of each component on linear decoding performance.

\subsection{Ablations}

\subsubsection{Effect of pretraining}

Our results in \cref{fig:Sequential-Learning-NC} demonstrate that models trained from scratch indeed undergo Neural Collapse (NC) in a continual learning setting. However, when comparing this to pre-trained models, we find that while both settings converge to the same asymptotic feature geometry, the pre-trained models do so at a substantially accelerated rate.
This difference in convergence speed is illustrated in \cref{fig:pretraining-fromscratch}. A side-by-side comparison of the initial 1500 iterations confirms that pre-trained models rapidly achieve the high NC scores that their de novo counterparts only reach much later in training.

% This observation reinforces the distinct nature of pre-trained model dynamics. Pre-trained models appear to settle into optimal feature-space configurations almost immediately, a behavior which we posit explains the significantly higher buffer efficiency reported in \cref{fig:section3}.

\begin{figure}[ht]
\centering
\includegraphics[width=0.95\linewidth]{./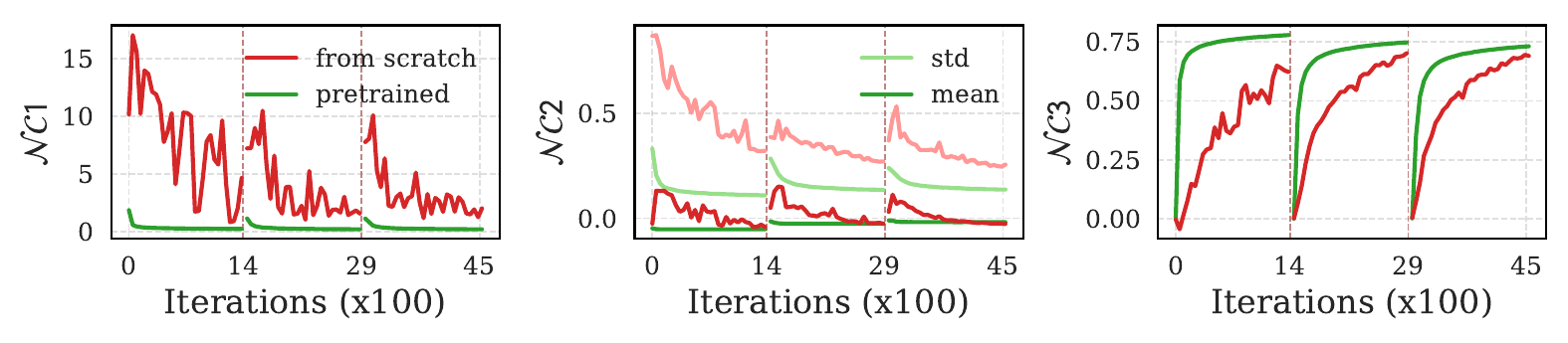}
\caption{Convergence  to Neural Collapse (NC) for pre-trained versus from-scratch models under CIL on CUB200. Pre-trained models achieve asymptotic NC scores significantly faster than their de novo counterparts.}
\label{fig:pretraining-fromscratch}
\end{figure}

\subsubsection{Feature bottleneck: when $d\ll K$}

In the main paper, we considered settings where the feature dimension exceeds the number of classes. However, in many practical applications, such as language modeling, the number of classes (e.g., vocabulary size) is typically much larger than the feature dimension. Recent work by \citep{liu_generalizing_2023} explored this regime. To examine how our framework behaves under these conditions, we conducted additional experiments by modifying the Cifar100 with ResNet setup. Specifically, we split Cifar100 into four tasks of 25 classes each and inserted a bottleneck layer of dimension 10 between the feature layer and the classifier head. All other components are left unchanged.

As illustrated in \cref{table:Sequential-Learning-NC-chunks}, variability collapse $\mathcal{NC}1$ and neural duality $\mathcal{NC}3$ remain in this constrained setting. However, the equiangularity  $\mathcal{NC}2$ exhibits significant degradation. While the mean pairwise cosine similarity aligns with theoretical expectations, its standard deviation increases substantially to  $\approx 0.3$ compared to the typical convergence levels of $ \approx 0.1$ in our standard setting (\cref{fig:Sequential-Learning-NC-extra} and similarly observed by \citet{papyan_prevalence_2020}). Therefore, even though the mean appears correct, the underlying structure is not, as the standard deviation is far to high. This high variance indicates a failure to converge to a rigid simplex, suggesting that in the $d\ll K$ regime alternative geometric structures must be considered, such as the Hyperspherical Uniformity explored by \citet{liu_generalizing_2023}.

\begin{figure}[ht]
\centering
\includegraphics[width=0.95\linewidth]{./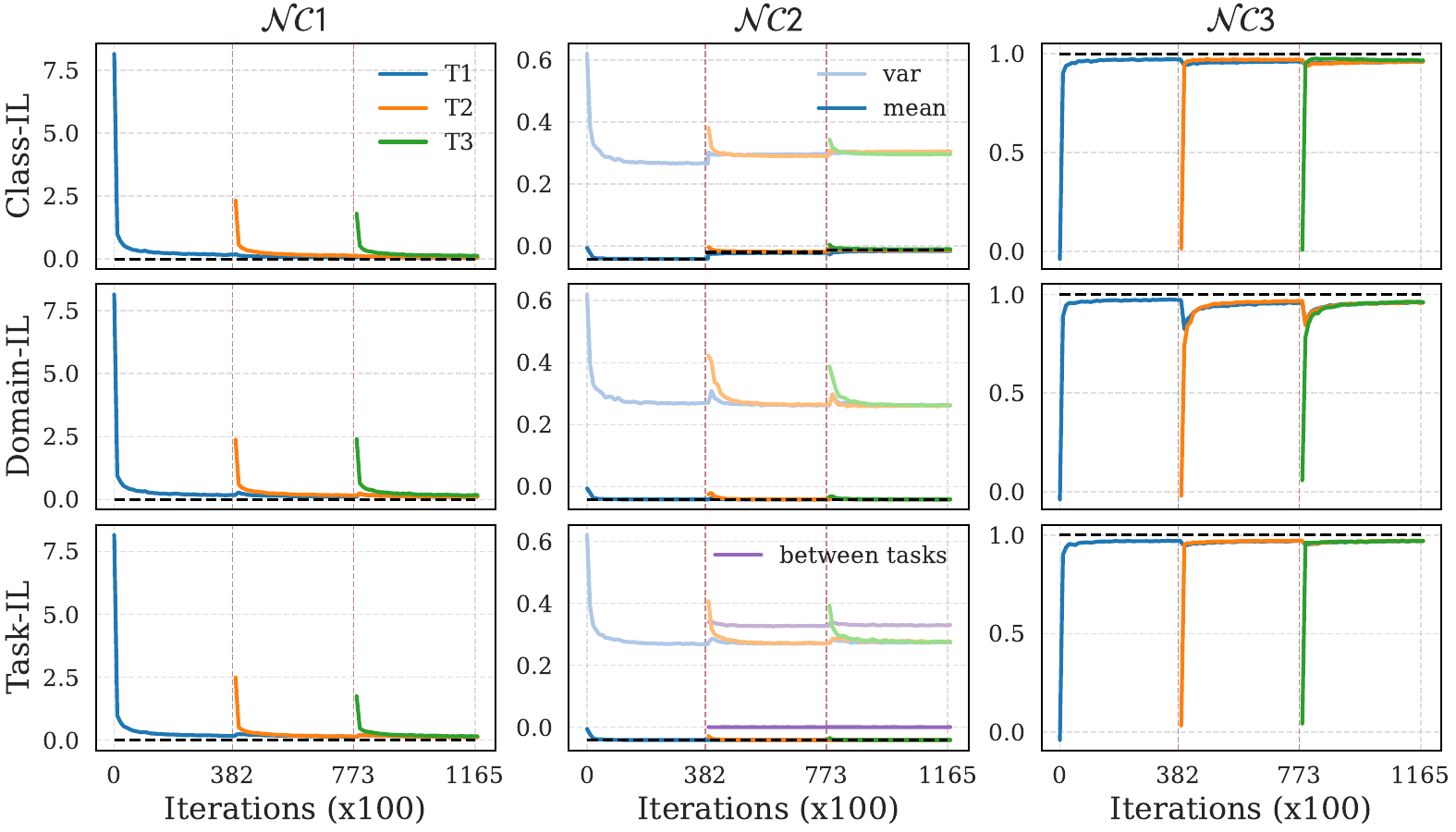}
\caption{\textit{NC metrics in the bottleneck regime (d=10).} Same setup as \cref{fig:Sequential-Learning-NC}. Results for Cifar100 (4 tasks, 25 classes, 5\% replay). While variability collapse (NC1) and duality (NC3) persist, the rigid ETF structure (NC2) degrades, exhibiting high variance in pairwise angles. 
}
\label{table:Sequential-Learning-NC-chunks}
\end{figure}

Crucially, we find that the \textbf{replay efficiency gap persists} (\cref{table:Sequential-Learning-NC-chunks}) despite this geometric shift. This implies that the decoupling between feature separability and classifier alignment is not contingent on the specific ETF geometry. Rather, the gap is a fundamental phenomenon that emerges even when the learned representations follow alternative geometric structures, provided they remain collapsed.

\begin{table}[ht]
    \centering % center the table horizontally
    \begin{tabular}{c c c c}
        \toprule
        \textbf{Dataset} & \textbf{Learning paradigm} & \textbf{Shallow forgetting} & \textbf{Deep forgetting}\\ [0.5ex]
        \midrule
        Cifar100 & CIL & $52.27 \pm 2.41$ & $26.15\pm1.92$\\
        Cifar100 & DIL & $46.38\pm1.88$ & $35.07\pm0.51$\\
        Cifar100 & TIL & $18.45\pm2.78$ & $14.89\pm1.77$ \\
        \bottomrule
    \end{tabular}
    \caption{The deep-shallow forgetting gap persists in the low feature-dimension regime (Cifar100, ResNet with 5\% replay).}
    \label{tab:forgettin_chunks}
\end{table}

\subsubsection{Effect of head initialization}
\label{sec:beta_increase}
We analyze the empirical evolution of the centered class-mean norm $\beta_t$ across tasks. As illustrated in \cref{fig:MEAN_INCREASE}, we observe a distinct architectural split: $\beta_t$ increases monotonically in setups with increasing number of classes (CIL and TIL), whereas it remains asymptotically stable in  DIL.

We attribute this drift to a \textit{weight norm asymmetry} induced by the sequential expansion of the network outputs. In CIL and TIL, new  head weights are typically instantiated using standard schemes (e.g., Kaiming Uniform), which initialize weights with significantly lower norms than those of the already-converged heads from previous tasks. This creates a recurrent initialization shock. In contrast, DIL employs a fixed, shared head across all tasks, inherently avoiding this discontinuity.

\begin{figure}[ht]
    \centering
    \includegraphics[width=0.73\linewidth]{./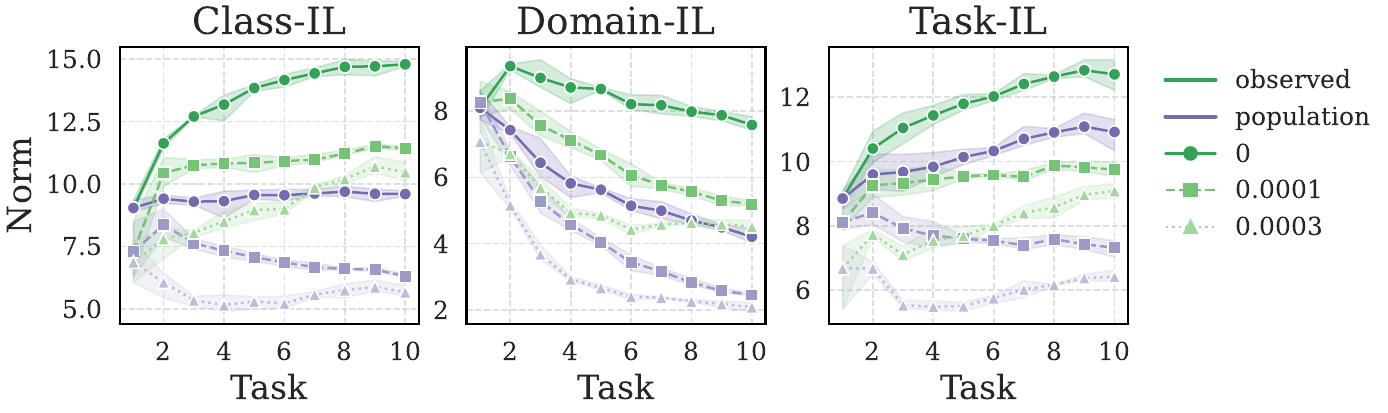} 
    \includegraphics[width=0.21\linewidth]{./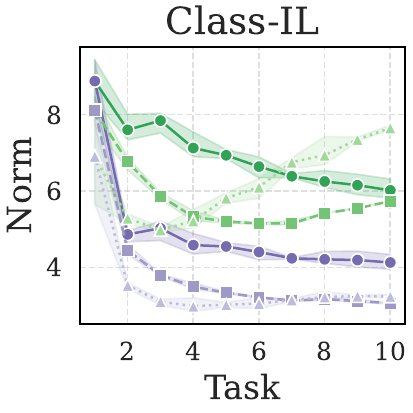}   
    
    \caption{Average (over all seen classes $c$) norm of the centered observed class means $\tilde{\hat{\mu}}_c(t)$ and population class means $\tilde{\mu}_c(t)$ after training each task on Cifar100, with varying weight decay coefficients. The three panels on the left correspond to the default head initialization, which results in a progressively increasing norm in both CIL and TIL. The rightmost panel shows the results when each new head is initialized with the same norm as the previously trained heads. This adjustment prevents the norm from growing.}
    \label{fig:MEAN_INCREASE}
\end{figure}

To validate this hypothesis, we performed an ablation using a \textit{norm-matching initialization} strategy. In this setup, the weights of new tasks are scaled to match the average norm of existing heads while preserving their random orientation.

Results in \cref{fig:MEAN_INCREASE} (right) confirm that this intervention effectively suppresses the progressive growth of $\beta_t$, recovering the stationary norm behavior observed in DIL. Interestingly, while this adjustment stabilizes the geometric scale of the representation, we found it yields negligible impact on final forgetting or test accuracy metrics.

\subsection{Additional figures and empirical substantiation}

This subsection includes placeholder figures for concepts discussed in the main text, for which specific existing figures were not available or suitable for direct inclusion in the main body.

\begin{figure}[ht]
    \centering
    \includegraphics[width=0.95\linewidth]{./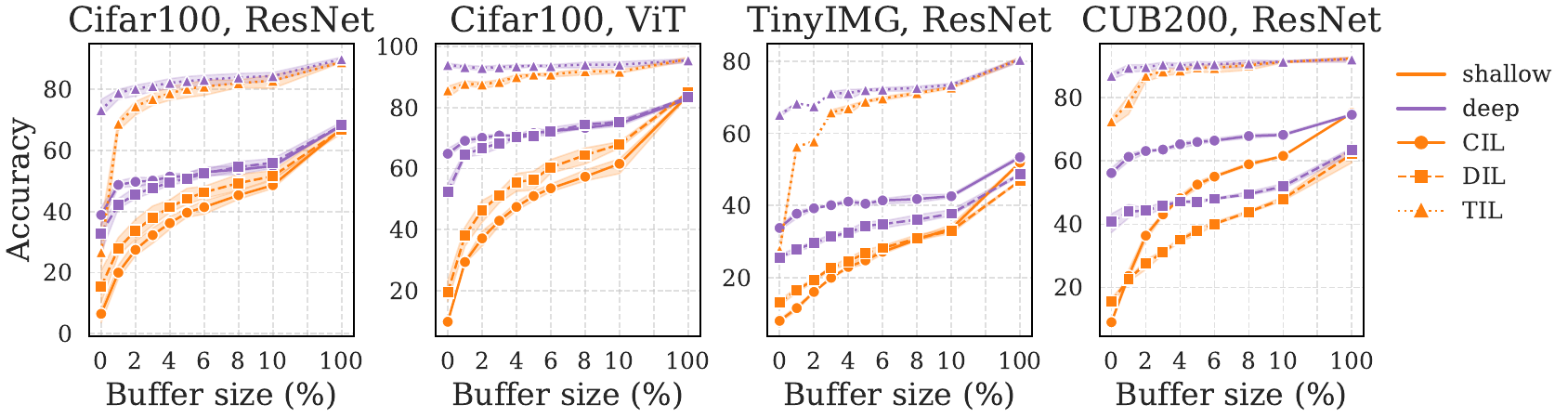}
    \caption{Same setup as \cref{fig:section3}. This plot reports test accuracy.
    }
    \label{fig:section3_accuracy}
\end{figure}

\begin{figure}[ht]
    \centering
    \includegraphics[width=0.7\linewidth]{./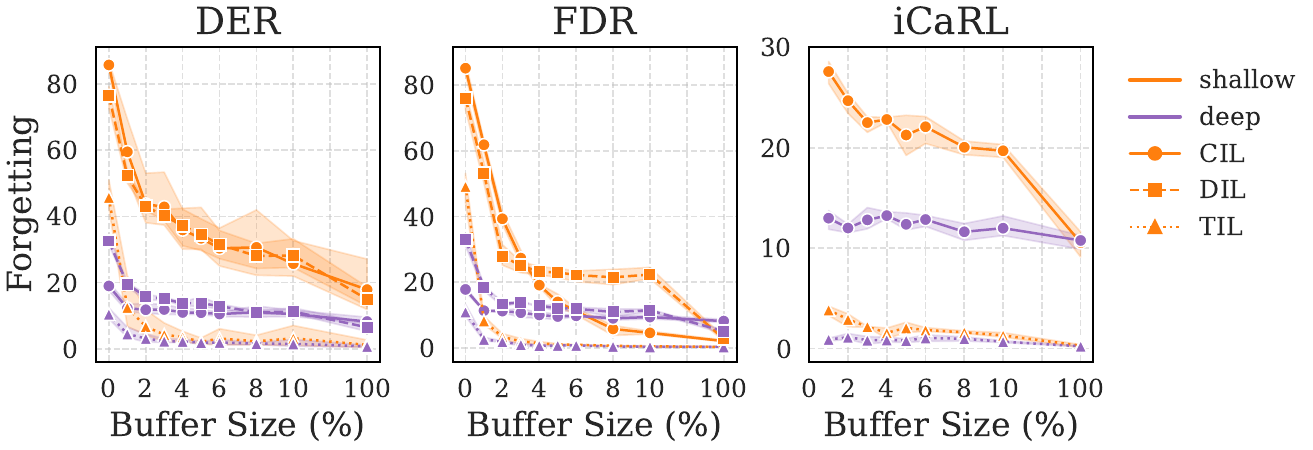}
    \caption{Deep–shallow forgetting gap for \emph{Dark Experience Replay} (DER), \emph{Functional Distance Relation} (FDR)  and \emph{Incremental Classifier and Representation Learning} (iCaRL) on Cifar100 with ResNet. Note that iCaRL does not support DIL nor training without buffer.
    }
    \label{fig:section3_other}
\end{figure}

\begin{figure}[ht]
\centering
\includegraphics[width=0.95\linewidth]{./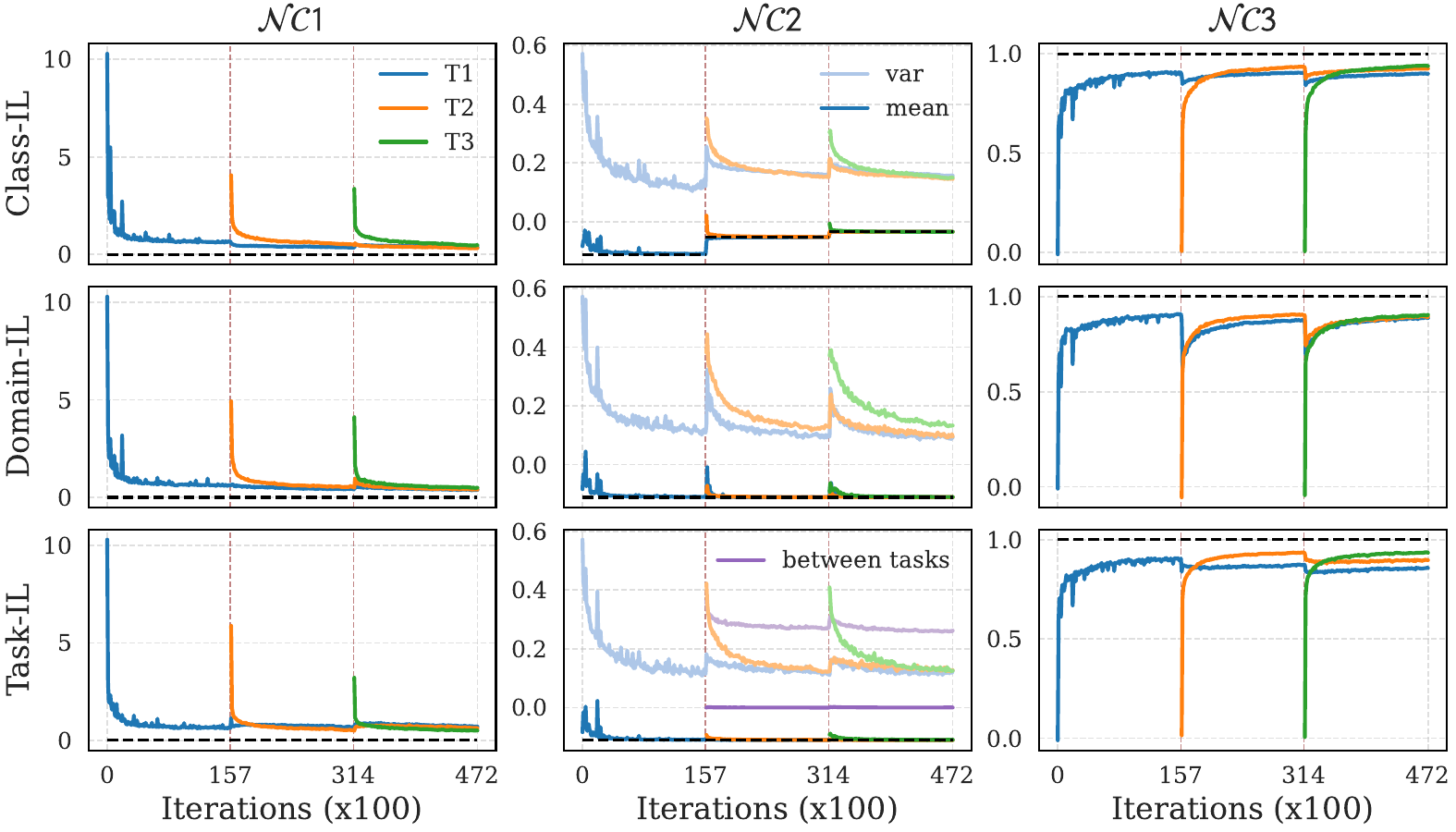}
\caption{Same setup as \cref{fig:Sequential-Learning-NC}. This plot shows the NC metrics on Cifar100 with 5\% replay. For NC2, both the mean and standard deviation are shown.
}
\label{fig:Sequential-Learning-NC-extra}
\end{figure}

\begin{figure}[ht]
\centering
\includegraphics[width=0.95\linewidth]{./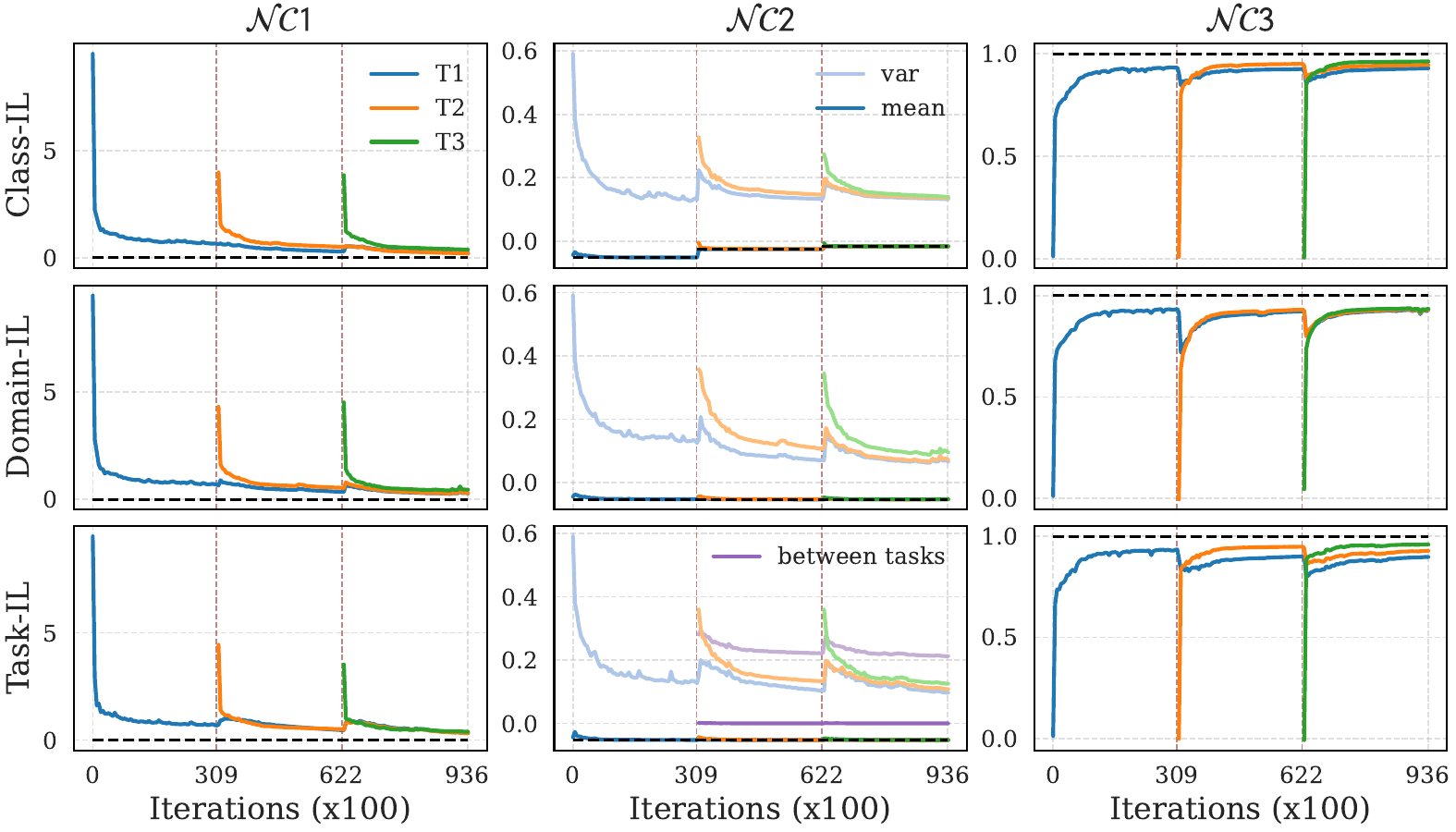}
\caption{Same setup as \cref{fig:Sequential-Learning-NC}. This plot shows the NC metrics on TinyIMG with 5\% replay. For NC2, both the mean and standard deviation are shown.
}
\label{fig:Sequential-Learning-NC-extra-tinyimg}
\end{figure}

\begin{figure}[ht]
\centering
\includegraphics[width=0.95\linewidth]{./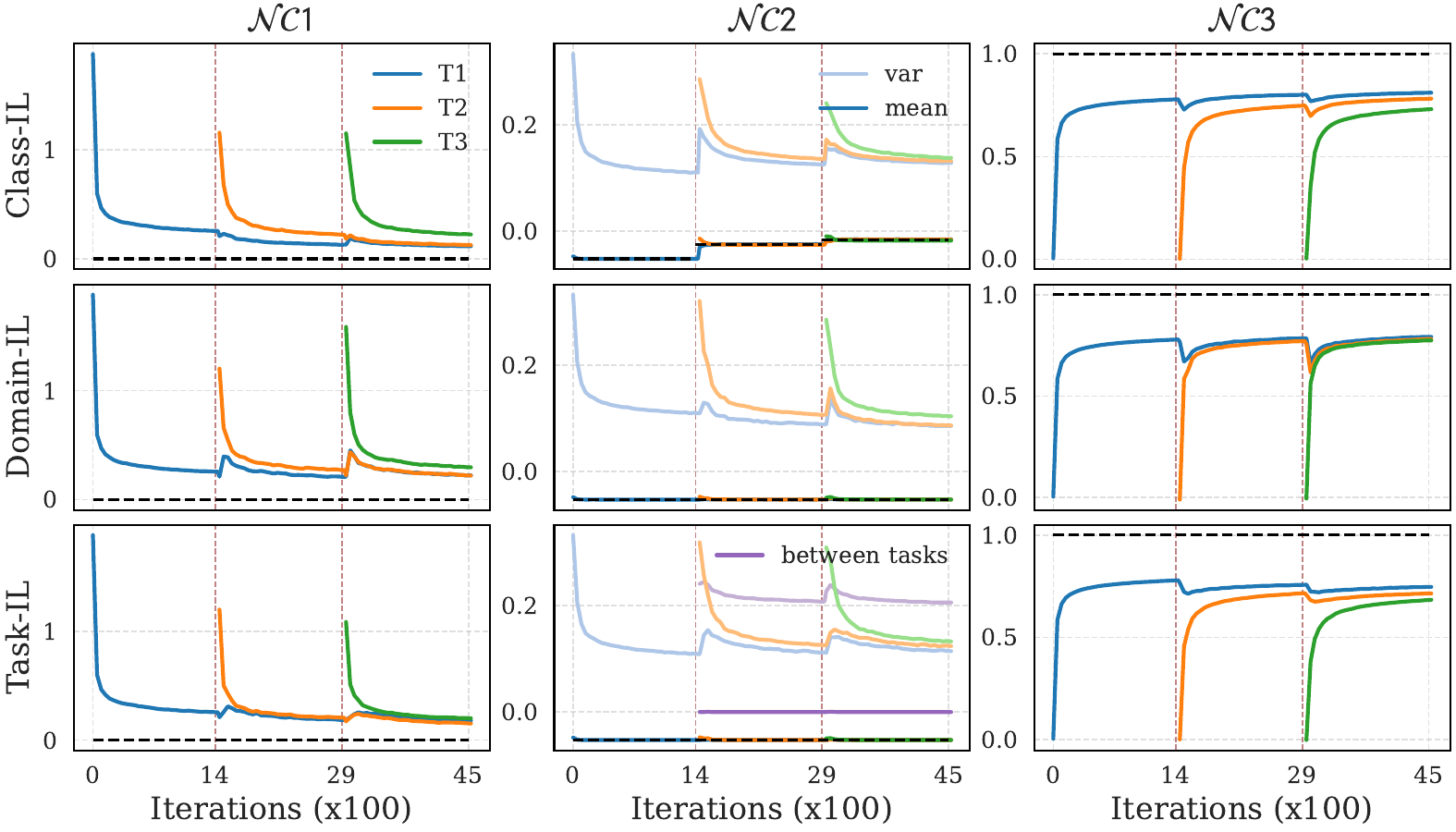}
\caption{Same setup as \cref{fig:Sequential-Learning-NC}. This plot shows the NC metrics on CUB200 with 10\% replay. For NC2, both the mean and standard deviation are shown.
}
\label{fig:Sequential-Learning-NC-extra-cub200}
\end{figure}

\begin{figure}[ht]
\centering
\includegraphics[width=0.95\linewidth]{./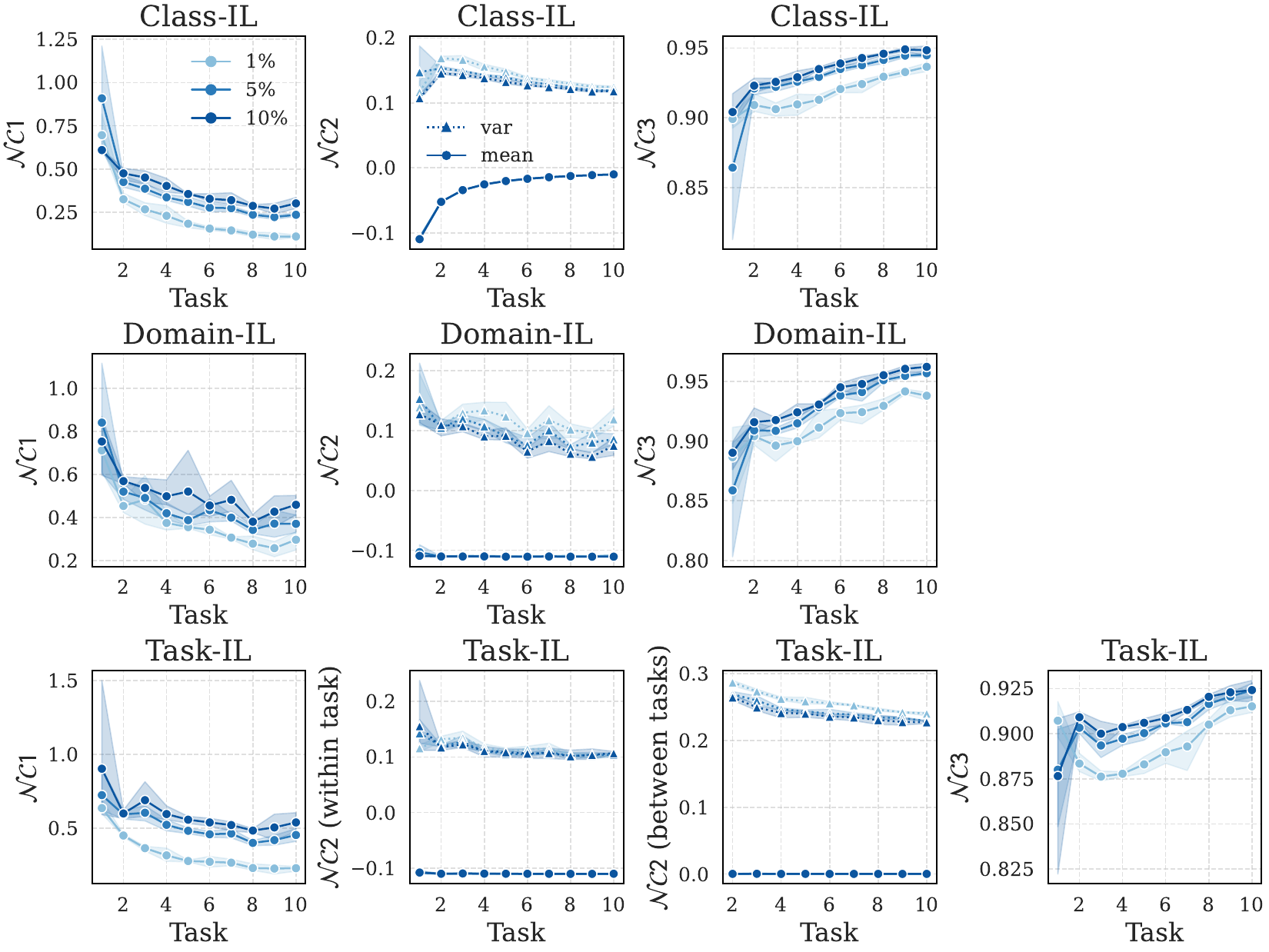}
\caption{Evolution of Neural Collapse metrics over all tasks in sequential training (Cifar100, ResNet) varying the replay buffer size. Neural Collapse is stronger for smaller buffers.}
\label{fig:NCmetrics-BUFSIZE-TASK}
\end{figure}

\begin{figure}[ht]
\centering
\includegraphics[width=0.95\linewidth]{./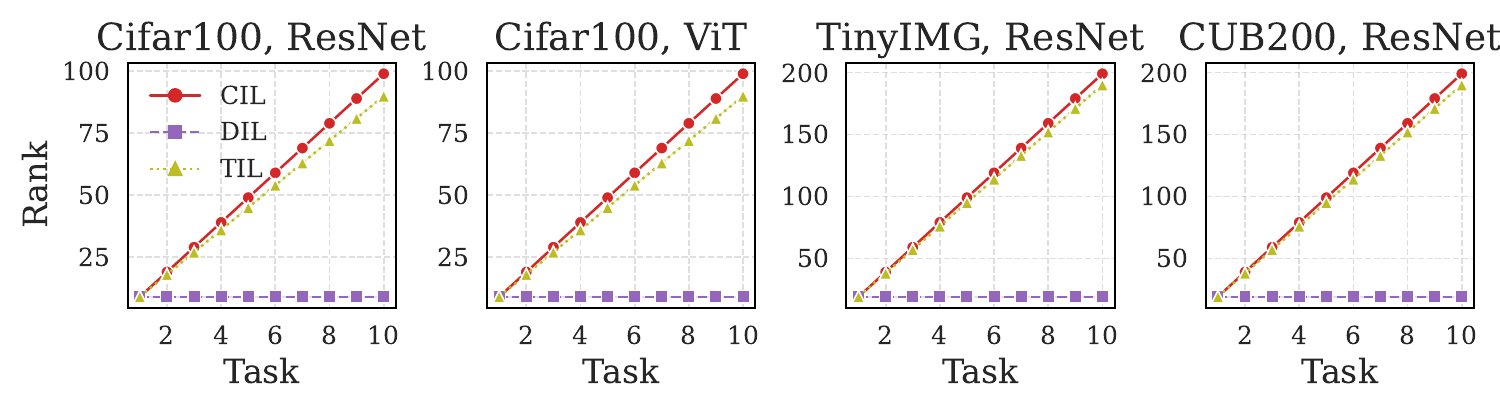}
\caption{Rank of the centered observed class mean matrix $\tilde{\hat{U}}(t)$. In CIL and TIL the rank increases (at different speeds) as more tasks are learned, whereas in DIL the rank remains remains constant.
}
\label{fig:NC-rank}
\end{figure}

\begin{figure}[ht]
\centering
\includegraphics[width=0.95\linewidth]{./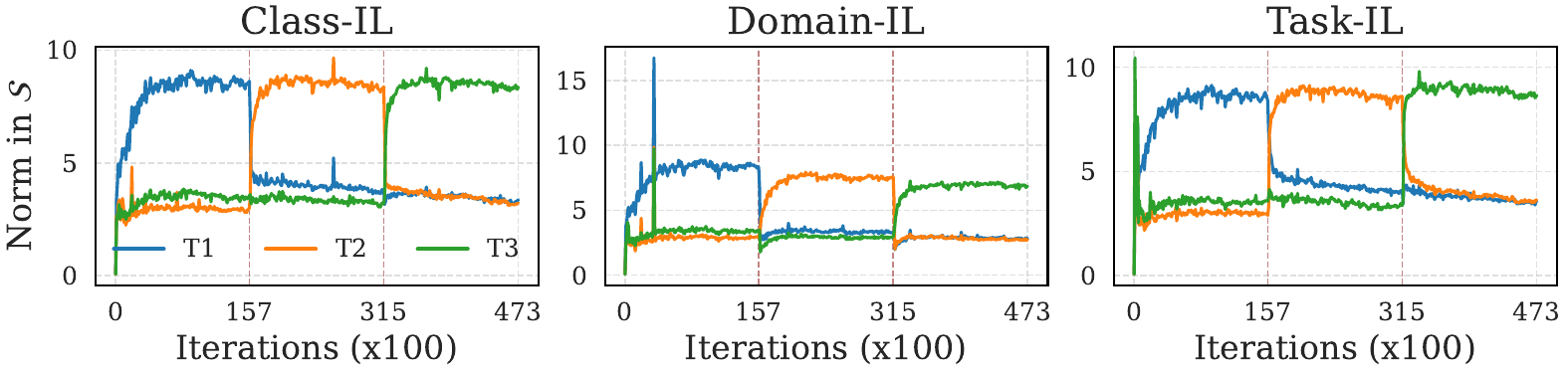}
\caption{Same setup as \cref{fig:Sequential-Learning-OOD}. This plot shows the average norm of $\tilde{\mu}_c(t)$ when projected to $S_t$ for CIL, DIL and TIL on Cifar100 with no replay.
}
\label{fig:Sequential-Learning-OOD-extra}
\end{figure}

\begin{figure}[ht]
\centering
\includegraphics[width=0.95\linewidth]{./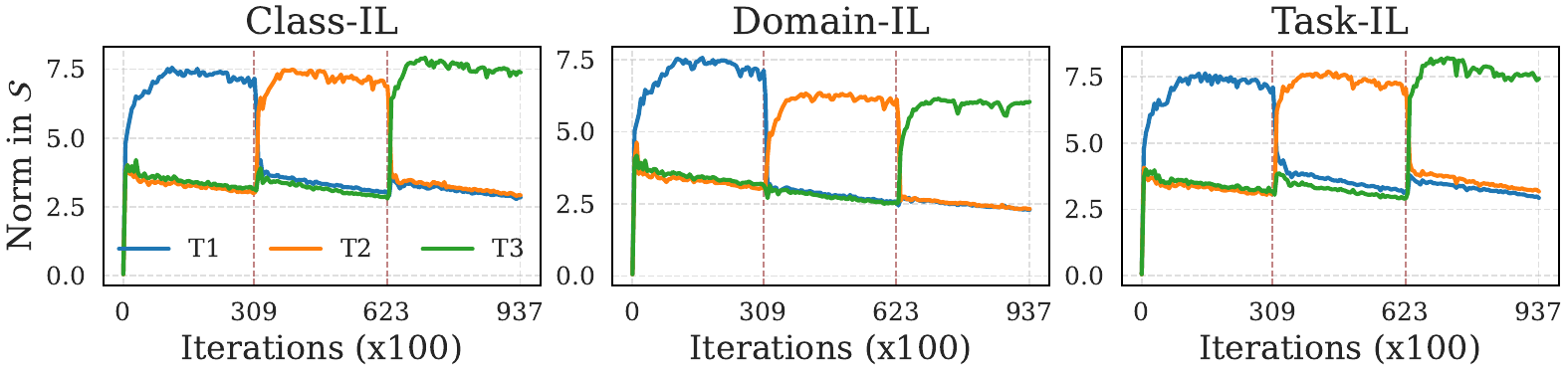}
\caption{Same setup as \cref{fig:Sequential-Learning-OOD}. This plot shows the average norm of $\tilde{\mu}_c(t)$ when projected to $S_t$ for CIL, DIL and TIL on TinyIMG with no replay.
}
\label{fig:Sequential-Learning-OOD-tinyimg}
\end{figure}

\begin{figure}[ht]
\centering
\includegraphics[width=0.95\linewidth]{./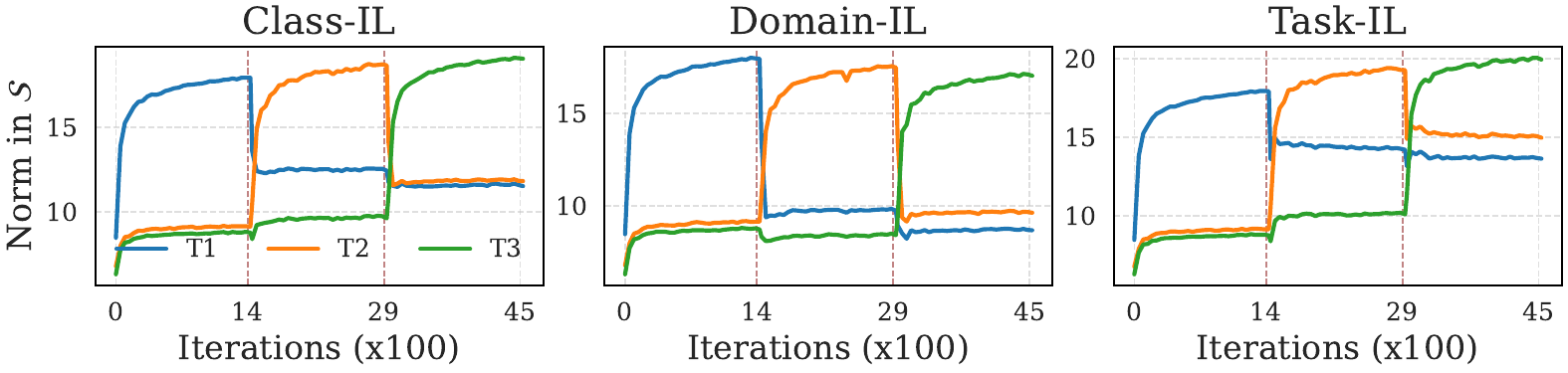}
\caption{Same setup as \cref{fig:Sequential-Learning-OOD}. This plot shows the average norm of $\tilde{\mu}_c(t)$ when projected to $S_t$ for CIL, DIL and TIL on CUB200 with no replay.
}
\label{fig:Sequential-Learning-OOD-cub200}
\end{figure}

\begin{figure}[ht]
    \centering
    \includegraphics[width=0.95\linewidth]{./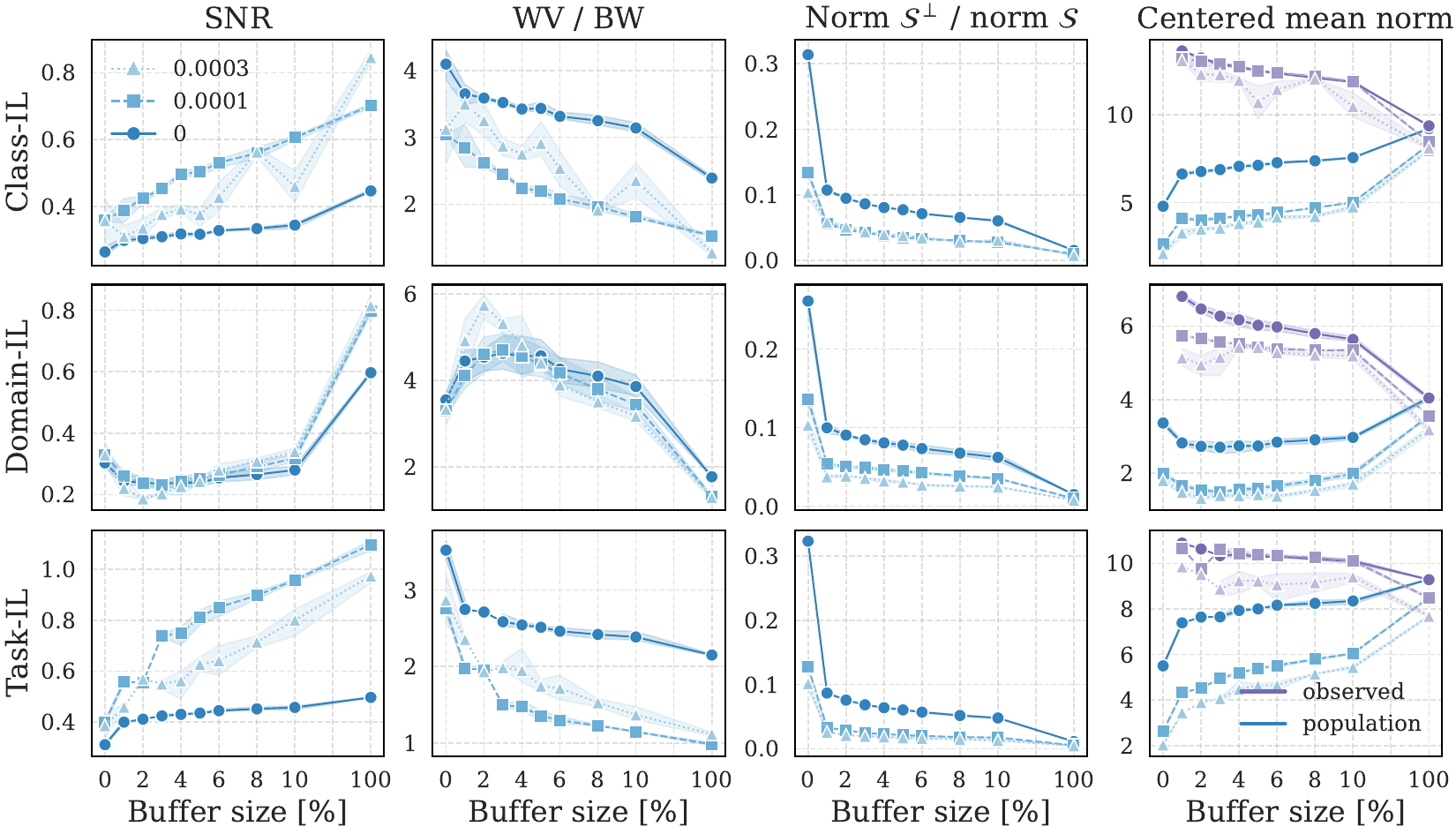} 
    
    \caption{Same setup as \cref{fig:empirical-evidence-theory}. This plot displays the results for TinyIMG.
    }
    \label{fig:empirical-evidence-theory-extra}
\end{figure}

\begin{figure}[ht]
    \centering
    \includegraphics[width=0.95\linewidth]{./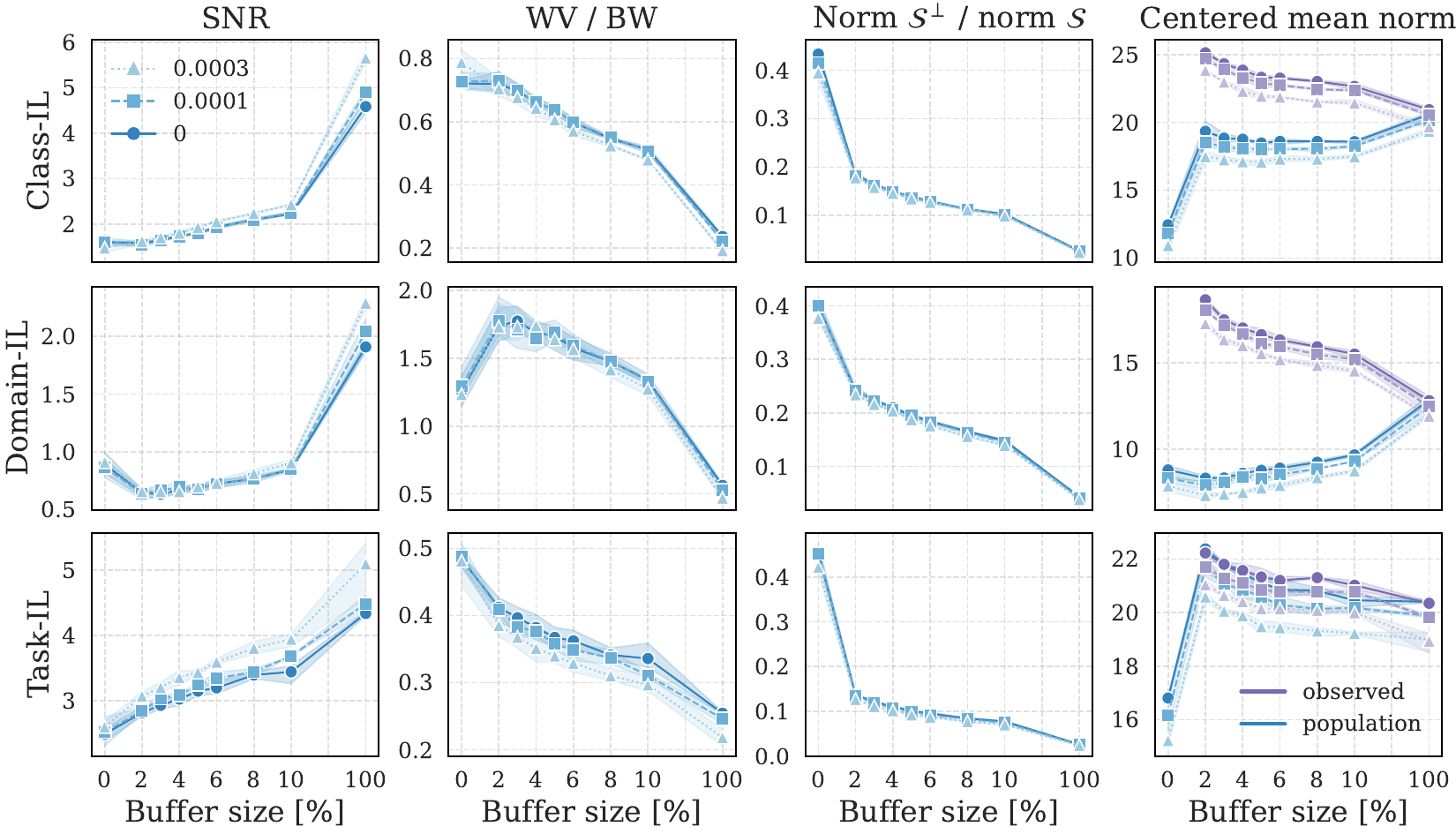} 
    \caption{Same setup as \cref{fig:empirical-evidence-theory}. This plot displays the results for CUB200.
    }
    \label{fig:empirical-evidence-theory-cub200}
\end{figure}

\begin{figure}[ht]
    \centering

    \includegraphics[width=0.6\linewidth]{./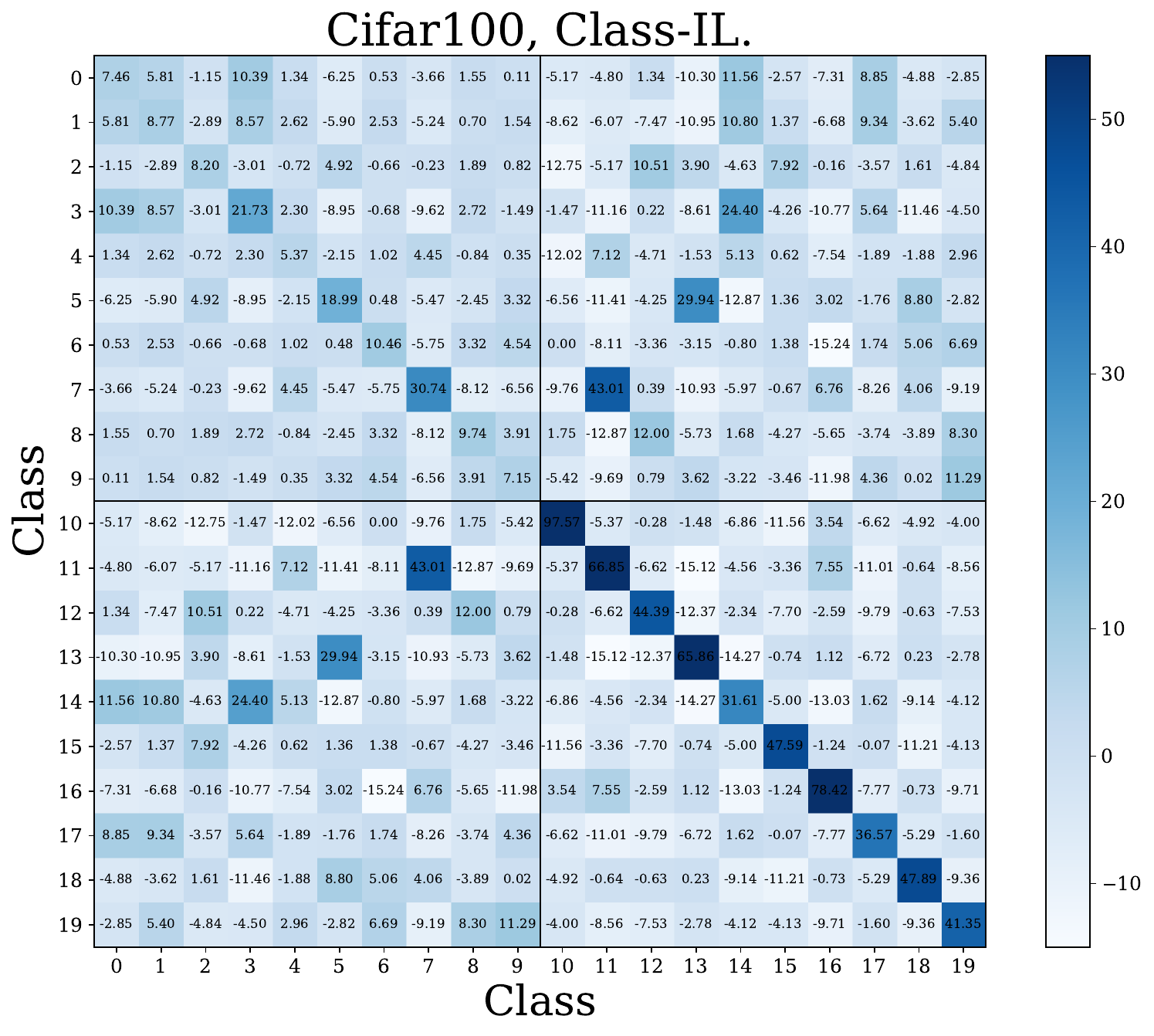}
    \\
    \caption{Class-wise inner products of the centered population class means $\tilde{\mu}_c(t)$ on Cifar100 under CIL after the second task. Classes 0 to 9 belong to the first task, while 10 to 19 belong to the second task. The classes belonging to task 2 are structured according to the NC regime, while classes belonging to task 1 show no structure.}
    \label{fig:gram-CIL}
\end{figure}

\begin{figure}[ht]
    \centering

    \includegraphics[width=0.6\linewidth]{./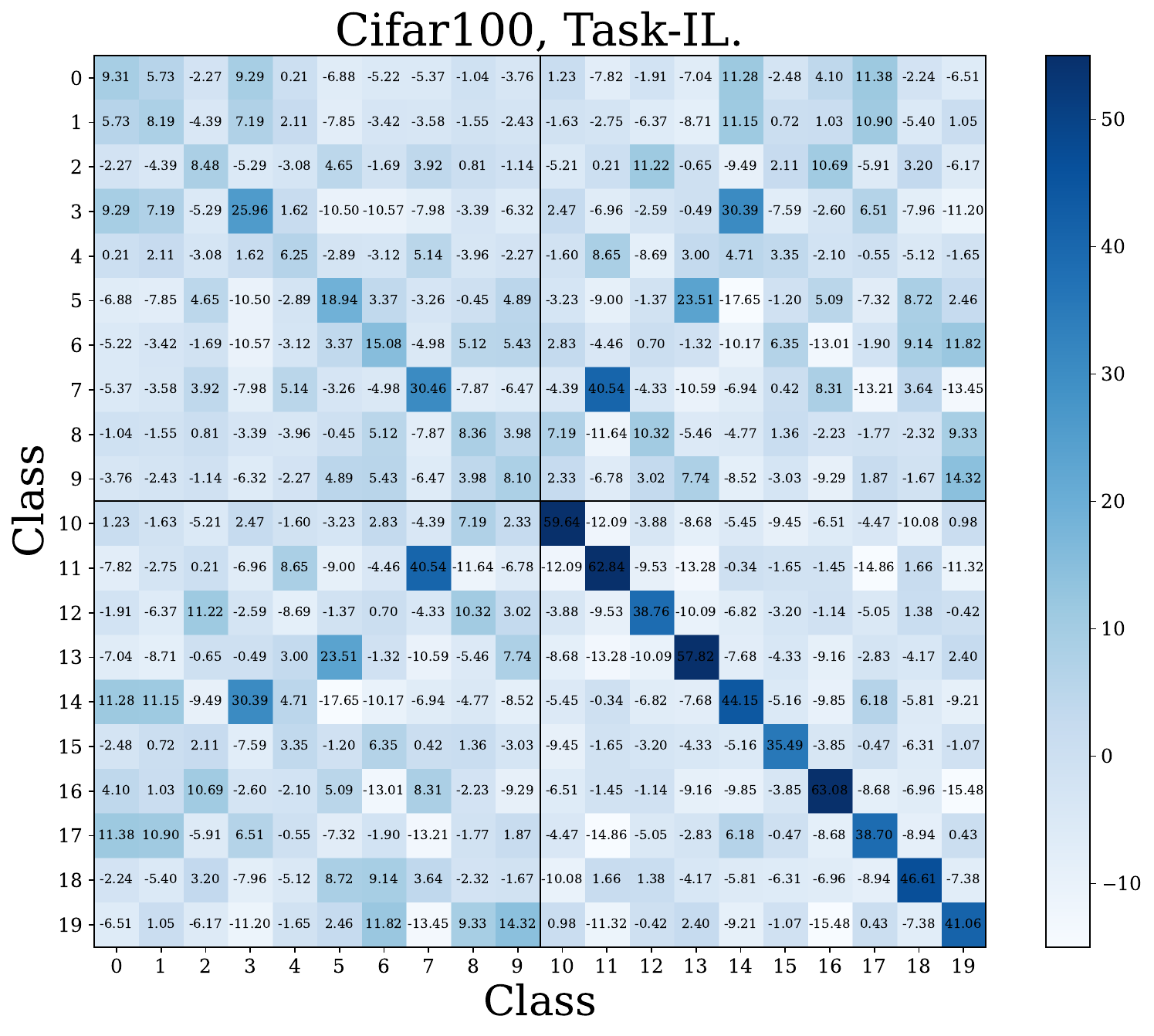}
    \\
    \caption{Class-wise inner products of the centered population class means $\tilde{\mu}_c(t)$ on Cifar100 under TIL after the second task. Classes 0 to 9 belong to the first task, while 10 to 19 belong to the second task. The classes belonging to task 2 are structured according to the NC regime, while classes belonging to task 1 show no structure.}
    \label{fig:gram-TIL}
\end{figure}

\begin{figure}[ht]
    \centering

    \includegraphics[width=0.45\linewidth]{./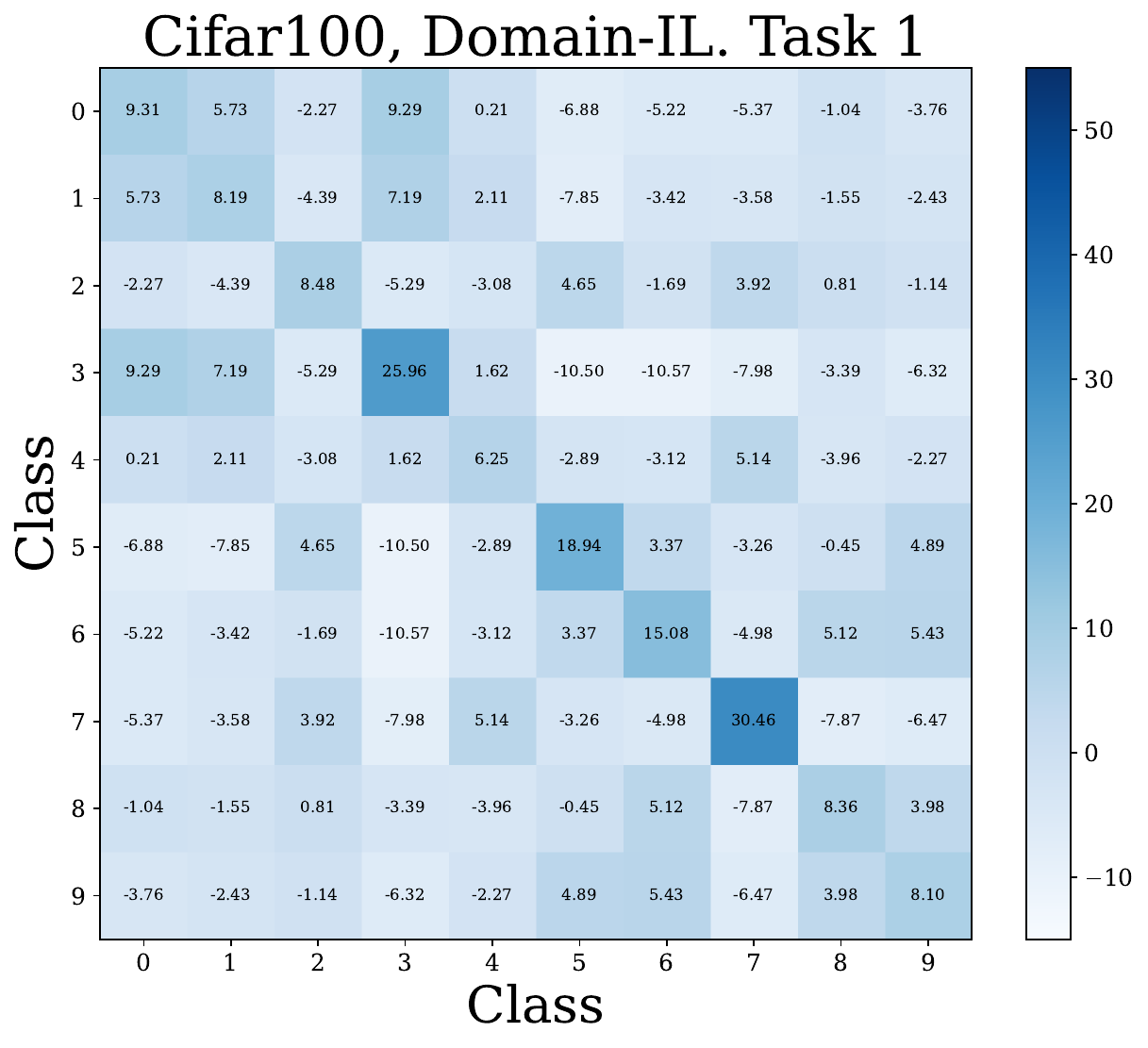}
    \includegraphics[width=0.45\linewidth]{./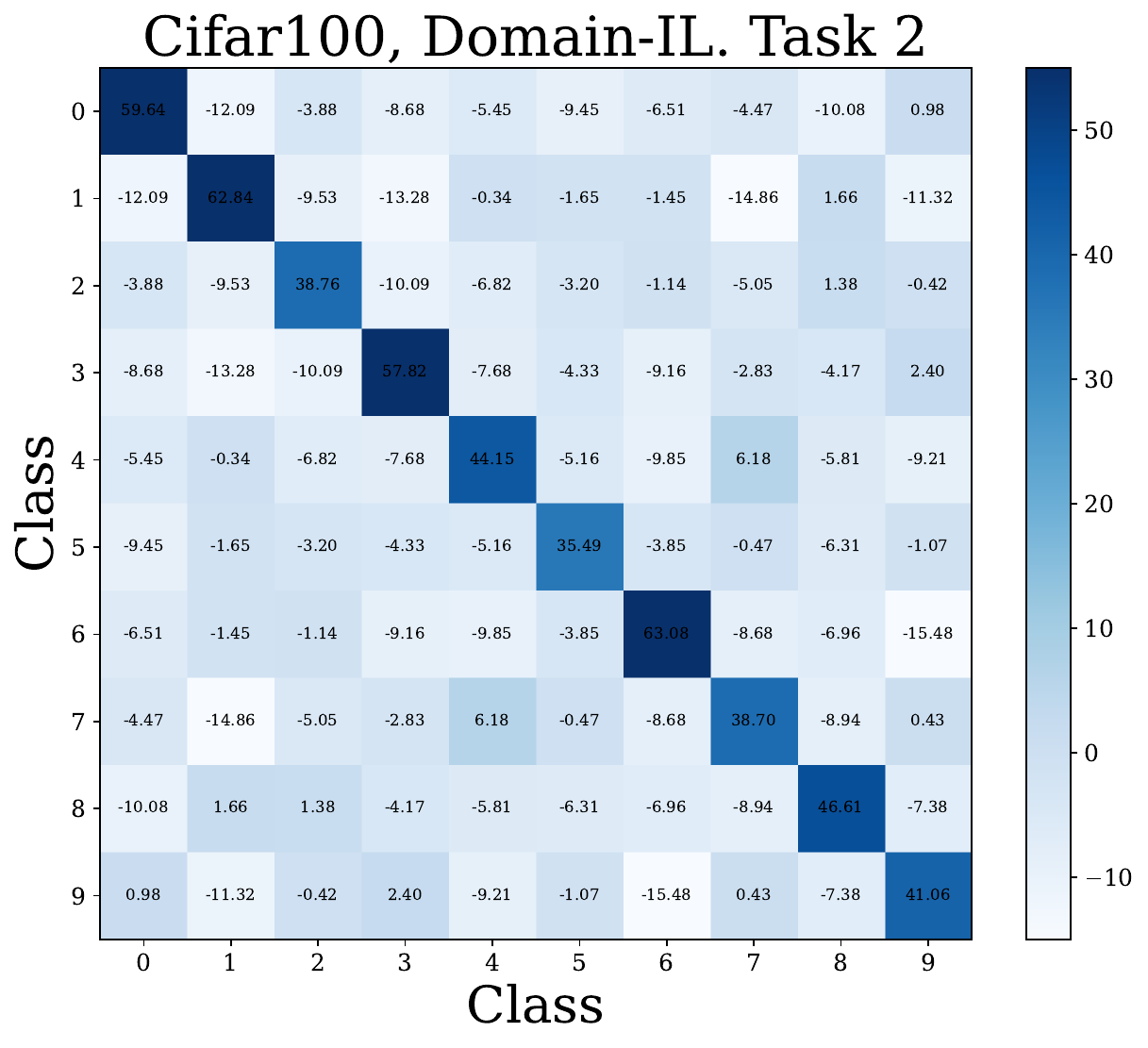}
    \\
    \caption{Class-wise inner products of the centered population class means $\tilde{\mu}_c(t)$ on Cifar100 under DIL after the second task. The left plot shows the results for samples belonging to the first task, while the right plot shows results for samples from the second task. Task 2 is structured according to the NC regime, while task 1 shows no structure.}
    \label{fig:gram-DIL}
\end{figure}

\clearpage
\newpage

\section{Overview of related work}
\label{app:related}
Our work intersects several strands of research. First, it builds on the literature studying the geometric structures that emerge in neural feature spaces, extending these analyses to the sequential setting of continual learning and accounting for the additional challenges introduced by different head expansion mechanisms. Second, it connects to the out-of-distribution detection literature, where we reinterpret forgetting as feature drift and broaden existing insights to a more general framework. Finally, it contributes to the continual learning literature that disentangles knowledge retention at the representation level from that at the output level, highlighting the systematic mismatch between the two in replay.
\paragraph{Deep and shallow forgetting}
Traditionally, \emph{catastrophic forgetting} is defined as the decline in a network’s performance on a previously learned task after training on a new one, with performance measured at the level of the network’s outputs. We refer to this notion as \emph{shallow forgetting}. In contrast, \citet{murata_what_2020} highlighted that forgetting can also be assessed in terms of the network’s internal representations. They proposed quantifying forgetting at a hidden layer $l$ by retraining the subsequent layers $l{+}1$ to $L$ on past data and comparing the resulting accuracy to that of the original network. Applied to last-layer features, this procedure coincides with the widely used \emph{linear probe} evaluation from the representation learning literature, often complemented by kNN estimators, to assess task knowledge independently of a task-specific head. In this work, we refer to the loss of information at the feature level as \emph{deep forgetting}. This probing-based approach has also been adopted in continual learning studies \citep{ramasesh_anatomy_2020,fini_self-supervised_2022}. Multiple works have since reported a consistent discrepancy between deep and shallow forgetting across diverse settings \citep{davari_probing_2022,zhang_feature_2022,hess_knowledge_2023}. Of particular relevance to our study are the findings of \citet{murata_what_2020} and \citet{zhang_feature_2022}, who observed that replay methods help mitigate deep forgetting in hidden representations. To our knowledge, however, we are the first to demonstrate that deep and shallow forgetting exhibit categorically different scaling behaviors with respect to replay buffer size.
\paragraph{Neural Collapse and continual learning}
Neural Collapse (NC) was first introduced by \citet{papyan_prevalence_2020} to describe the emergence of a highly structured geometry in neural feature spaces, namely a \emph{simplex equiangular tight frame} (simplex ETF) characterized by the NC1–NC4 properties. Its optimality for neural classifiers, as well as its emergence under gradient descent, was initially established under the simplifying \emph{unconstrained feature model} (UFM) \citep{mixon_neural_2022}. Subsequent theoretical work extended these results to end-to-end training of modern architectures with both MSE and CE loss on standard classification tasks \citep{tirer_extended_2022,jacot_wide_2024,sukenik_neural_2025}.
Generalizations of NC have been proposed for settings where the number of classes exceeds the feature dimension, precluding a simplex structure. In such cases, the NC2 and NC3 properties are extended via one-vs-all margins \citep{jiang_generalized_2024} or hyperspherical uniformity principles \citep{liu_generalizing_2023,wu_linguistic_2024}. Another important line of work concerns the class-imbalanced regime, which arises systematically in continual learning. Here, the phenomenon of \emph{Minority Collapse} (MC) \citep{fang_exploring_2021} has been observed, in which minority-class features are pushed toward the origin. \citet{dang_neural_2023,hong_neural_2023} derived an exact law for this collapse, including a threshold on the number of samples below which features collapse to the origin and above which the NC configuration is gradually restored.
Because class imbalance is inherent in class-incremental continual learning, NC principles have also been leveraged to design better heads. Several works \citep{yang_neural_2023,dang_memory-efficient_2024,wang_rethinking_2025} impose a fixed global ETF structure in the classifier head, rather than learning it, to mitigate catastrophic forgetting. To our knowledge, we are the first to use NC theory to analyze the \emph{asymptotic} geometry of neural feature spaces in continual learning. In doing so, we introduce the multi-head setting, which is widely used in continual learning but has not been formally studied in the NC literature. While a full theory of multi-head NC lies beyond the scope of this paper, our empirical evidence provides the first steps toward such a framework.

\paragraph{Out-Of-Distribution Detection}
Out-of-distribution (OOD) detection is a critical challenge for neural networks deployed in error-sensitive settings. \citet{hendrycks_baseline_2018} first observed that networks consistently assign lower prediction confidences to OOD samples across a wide range of tasks. Subsequent work has shown that OOD samples occupy distinct regions of the representation space, often collapsing toward the origin due to the in-distribution filtering effect induced by low-rank structures in the backbone \citep{kang_deep_2024}. \citet{haas_linking_2023} connected this phenomenon to Neural Collapse (NC), demonstrating that $L_2$ regularization accelerates the emergence of NC and sharpens OOD separation. Building on this, \citet{ammar_neco_2024} proposed an additional NC property—\emph{ID/OOD Orthogonality}—which postulates that in-distribution and out-of-distribution features become asymptotically orthogonal. They further introduced a detection score based on the norm of samples projected onto the simplex ETF subspace $S$, which closely parallels the analysis in our work. Our results extend this line of research by providing formal evidence for the ID/OOD orthogonality hypothesis, offering a precise characterization of the roles of weight decay and feature norm, and, to our knowledge, establishing the first explicit connection between catastrophic forgetting and OOD detection. 

\newpage

\section{Mathematical derivations}

\vspace{0.5cm}

\centerline{\bf Notation}
\bgroup
\def\arraystretch{1.5}
\begin{tabular}{p{2.25in}p{3.25in}}

$D_n$ & Dataset of task $n$\\
$\hat D_n$ & Training dataset during session $n$ -may include buffer\\
$\bar D$ & Datasets of all tasks combined\\
$X_c$ & Instances of class $c$ in all tasks\\
$\mathcal{L}(\theta, D)$ & Average loss function over $D$\\
$\lambda$ & Weight decay factor\\
$\eta$ & SGD learning rage\\
$\displaystyle f_\theta$ & Network function $\mathbb{R}^{d_1} \to \mathbb{R}^P$\\
$\displaystyle \phi$ & Feature map $\mathbb{R}^{d_1} \to \mathbb{R}^{d_L}$\\
$\displaystyle h$ & Network head $\mathbb{R}^{d_L} \to \mathbb{R}^P$\\
$W_h$ & Network head weights\\
$W_h^n$ & Network head weights for classes of task $n$ (only multi-head)\\
$\mu, \Sigma, \sigma^2$ & The mean, covariance and variance of a distribution\\
$\tilde\mu_c = \mu_c - \mathbb{E}_c[\mu_c]$ & Centered class $c$ mean\\
$S = \operatorname{span}(\tilde\mu_1, \dots, \tilde\mu_K)$ & Centered mean span\\
$\tilde U = [\tilde\mu_1, \dots, \tilde\mu_K]$ & Centered mean matrix\\
$P_{A}$ & Projection onto the space $A$\\
$\beta_t = \|\mu(t)\|^2$ & Training class (squared) norm \\
\end{tabular}
\egroup
\vspace{0.25cm}

\subsection{Setup}
Consider a neural network with weights $\theta$, divided into a non-linear map $\phi: \mathbb{R}^{d_1} \to \mathbb{R}^{d_L}$ and a linear head $h: \mathbb{R}^{d_L} \to \mathbb{R}^K$. The function takes the form: 
\[
f_\theta(x) = W_h\,\phi(x)\, + b_h
\]
We hereafter refer the map $\phi(x)$ as \emph{features} or \emph{representation of the input $x$}, and to $f(x)$ as output. The network is trained to minimize a classification loss $\ell((x,y), f_\theta)$ on a given dataset $D$. We denote by $\mathcal{L}(D, \theta)$ the average $\ell((x,y), f_\theta)$ over $D$, and where clear we leave $D$ implicit.  The loss is assumed to be convex in the network output $f_\theta(\cdot)$. 

For each task $n$ a new dataset $D_n$ is provided, with $K$ classes. We denote by $\bar D_t = \cup_{n\le t}\,D_n$ the union of all datasets for tasks $1$ to $t$ and simply $\bar D$ the union of all datasets across all tasks. Moreover, we denote by $\hat D_t$ the training data used during the session $t$ - which may include a buffer. 
For a given class $c$ we denote by $X_{D,c}$ the available inputs from that class, i.e. $X_{D,c} = \{x : (x,y) \in D \text{ and } y=c\}$. We use $X_c =  X_{\bar D,c}$ the set of inputs for class $c$ across all learning sessions.  
We assume the number of classes $K$ to be predicted to be the same for each task. 

For a given class data $X_c$ the \emph{class mean feature} vector is:  
\[
\mu_c(\bar D) = \mathbb{E}_{X_c}\,[\phi(x)].
\]
We call $\mu_c(\bar D)$ the \emph{population} mean, to distinguish it from the \emph{buffer} mean $\hat\mu_c(B)$. 
If a given class appears in multiple training session, we additionally distinguish between $\mu_c(\bar D)$ and $\hat \mu_c(\hat D_t)$, where the latter is the \emph{observed} mean. 
For a set of classes $\{1, \dots, K\}$ in a dataset $D$ the \emph{global mean feature} vector is: 
\[
\mu_G(\bar D) = \mathbb{E}_{c}\,\mathbb{E}_{X_c}\,[\phi(x)] = \mathbb{E}_{\bar D}\,[\phi(x)],
\]
which we call \emph{population} global mean to distinguish it from the \emph{buffer} global mean $\hat \mu_G(B)$.
Finally, the \emph{centered class mean feature} vector is: 
\[
\tilde{\mu}_c(\bar D) = \mu_c(\bar D) - \mu_G(\bar D)
\]
and similarly $\tilde{\hat{\mu}}_c(B) = \hat\mu_c(B) - \hat\mu_G(B)$. 
When clear, we may omit $\bar D$ and $B$ from the notation.

\subsection{Linear Separability}
\label{app:linear-separability}
In our study we are interested in quantifying the linear separability of the old tasks' classes in feature space. In this section we discuss the metric of linear separability used and derive a lower bound for it.

\begin{definition}[Linear Separability]\label{def:linear-separab}
Consider the two distributions $P_1$ and $P_2$.  
The \emph{linear separability} of the two classes is defined as the maximum success rate achievable by any linear classifier:
\[
\xi(P_1,P_2) := \max_{w ,\, b} \Big[ \mathcal{P}_{P_1}( w^\top x + b > 0 ) +  \mathcal{P}_{P_2}( w^\top x + b < 0 ) \Big].
\]
Equivalently, $\xi(P_1,P_2) = 1 - \epsilon_{\rm min}$, where $\epsilon_{\rm min}$ is the minimal misclassification probability over all linear classifiers.  
\end{definition}

\begin{definition}[Mahalanobis Distance]
\label{def:MD}
    Consider two distribution in the feature space $\mu_1, \mu_2$, and covariances $\Sigma_1, \Sigma_2$. The Mahalanobis distance between the two distributions is defined as 
    \[
    d^2_M(\mu_1, \mu_2, \Sigma_1, \Sigma_2) = (\mu_1 - \mu_2)^\top(\Sigma_1 + \Sigma_2)^{-1}(\mu_1 - \mu_2)
    \]
    For two Gaussian distributions with equal covariance the Mahalanobis distance determines the minimal misclassification probability over all linear classifiers: 
    \[
        \epsilon_{min} = \Phi\left(-\tfrac{1}{2}\sqrt{d^2_M}\right)
    \]
\end{definition}

In this study we take the Mahalanobis distance to be a proxy for the linear separability of two distributions in feature space. When only the first two moments of the distributions are known, this is the best proxy for linear separability. In the following lemma we derive a handy lower bound for the Mahalanobis distance which we will be using throughout. 

\begin{lemma}[Lower Bound to Mahalanobis Distance]
\label{lemma:MD-LB}
Let $\mu_1, \mu_2 \in \mathbb{R}^d$ and $\Sigma_1, \Sigma_2 \in \mathbb{R}^{d \times d}$ be positive semidefinite covariance matrices. Then the squared Mahalanobis distance satisfies
\[
d^2_M(\mu_1, \mu_2, \Sigma_1, \Sigma_2) = (\mu_1 - \mu_2)^\top (\Sigma_1 + \Sigma_2)^{-1} (\mu_1 - \mu_2) 
\ge \frac{\|\mu_1 - \mu_2\|^2}{\operatorname{Tr}(\Sigma_1 + \Sigma_2)}.
\]
\end{lemma}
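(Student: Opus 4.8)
The plan is to reduce everything to a single application of the Cauchy--Schwarz inequality after factoring the combined covariance through its symmetric square root. First I would introduce the shorthand $\Delta = \mu_1 - \mu_2$ and $\Sigma = \Sigma_1 + \Sigma_2$, observing that $\Sigma$ is positive semidefinite as a sum of positive semidefinite matrices; I would note that the Mahalanobis distance is only defined when $\Sigma$ is invertible, so $\Sigma$ may be taken positive definite and $\Sigma^{1/2}$, $\Sigma^{-1/2}$ exist. The trivial case $\Delta = \vzero$ is dispatched immediately, since both sides vanish.

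The core step is to write $\|\Delta\|^2 = \Delta^\top \Delta = \langle \Sigma^{-1/2}\Delta,\, \Sigma^{1/2}\Delta\rangle$ and apply Cauchy--Schwarz to obtain
\[
\|\Delta\|^2 \le \|\Sigma^{-1/2}\Delta\|\,\|\Sigma^{1/2}\Delta\| = \sqrt{\Delta^\top \Sigma^{-1}\Delta}\,\sqrt{\Delta^\top \Sigma\,\Delta}.
\]
Squaring gives $\|\Delta\|^4 \le (\Delta^\top \Sigma^{-1}\Delta)(\Delta^\top \Sigma\,\Delta)$. The second factor is then controlled by the largest eigenvalue, $\Delta^\top \Sigma\,\Delta \le \lambda_{\max}(\Sigma)\|\Delta\|^2$, and since $\Sigma$ has nonnegative eigenvalues we have $\lambda_{\max}(\Sigma) \le \operatorname{Tr}(\Sigma)$. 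Substituting and dividing through by $\operatorname{Tr}(\Sigma)\|\Delta\|^2 > 0$ yields exactly
\[
\Delta^\top \Sigma^{-1}\Delta \ge \frac{\|\Delta\|^2}{\operatorname{Tr}(\Sigma)},
\]
which is the claim. An equivalent route I could take diagonalizes $\Sigma = Q\Lambda Q^\top$, sets $v = Q^\top\Delta$, and reduces the statement to the scalar inequality $\sum_i v_i^2/\lambda_i \ge (\sum_i v_i^2)/(\sum_i \lambda_i)$, which again follows from Cauchy--Schwarz with $a_i = |v_i|/\sqrt{\lambda_i}$ and $b_i = \sqrt{\lambda_i}$ together with $(\sum_i |v_i|)^2 \ge \sum_i v_i^2$.

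I do not anticipate a genuine obstacle here, as the result is an elementary matrix inequality; the only points requiring care are bookkeeping rather than difficulty. Specifically, I must be explicit that $\Sigma^{-1}$ exists (justifying the positive-definite assumption silently needed for the Mahalanobis distance), and I must justify the bound $\lambda_{\max}(\Sigma) \le \operatorname{Tr}(\Sigma)$ by appealing to the nonnegativity of the eigenvalues of the positive semidefinite matrix $\Sigma$. Beyond these, the argument is a direct one-line Cauchy--Schwarz estimate, so the proof should be short.
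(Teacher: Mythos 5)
Your proof is correct and takes essentially the same route as the paper's: both reduce to an elementary spectral estimate, the paper via Jensen's inequality on the eigenbasis of $\Sigma_1+\Sigma_2$ and you via Cauchy--Schwarz through the symmetric square root combined with $\lambda_{\max}(\Sigma)\le\operatorname{Tr}(\Sigma)$ (your alternative diagonalization route is in fact the paper's argument verbatim). If anything, your version is slightly more careful in making explicit the positive-definiteness needed for $\Sigma^{-1}$ to exist, which the paper's statement (asserting only positive semidefiniteness) glosses over.
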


\begin{proof}
Let $A := \Sigma_1 + \Sigma_2 \succeq 0$ and $v := \mu_1 - \mu_2$. Let $\lambda_i$ be the eigenvalues of $A$ and $u_i$ the corresponding orthonormal eigenvectors. Write
\[
v = \sum_i \alpha_i u_i \quad \text{so that} \quad v^\top A^{-1} v = \sum_i \frac{\alpha_i^2}{\lambda_i}.
\]

By Jensen's inequality for the convex function $f(x) = 1/x, x \in \mathbb{R}^+$ and the fact that $\sum_i \alpha_i^2 = \|v\|^2$, we have
\[
\sum_i \frac{\alpha_i^2}{\lambda_i} \ge \frac{\sum_i \alpha_i^2}{\sum_i \lambda_i} = \frac{\|v\|^2}{\operatorname{Tr}(A)}.
\]

Applying this to $v = \mu_1 - \mu_2$ and $A = \Sigma_1 + \Sigma_2$ gives the claimed inequality.
\end{proof}

\vspace{0.3cm}

In this work we use the lower bound to the Mahalanobis distance as a proxy for linear separability. This quantity is also related to the \emph{signal to noise ratio}, and thus hereafter we use the following notation: 
\[
SNR(c_1,c_2) = \frac{\|\mu_1 - \mu_2\|^2}{\operatorname{Tr}(\Sigma_1 + \Sigma_2)}
\]
$SNR(c_1,c_2)$ and $\xi(c_1,c_2)$ are directly proportional, although the latter is bounded while the former is not. Therefore an increase in $SNR(c_1,c_2)$ corresponds to an increase in linear separability, within the applicability of a Gaussian assumption.

\subsection{Terminal Phase of Training (TPT)}
\label{app:TPT}
The \emph{terminal phase of training} is the set of training steps including and succeeding the step where the training loss is zero. Given our network structure, a direct consequence of TPT is that the class-conditional distributions are \emph{linearly separable} in feature space. 

Starting from \citet{papyan_prevalence_2020}, several works have studied the structures that emerge in the network in this last phase of training (see \cref{app:related} for an overview). In particular, \citet{papyan_prevalence_2020} has discovered that TPT induces the phenomenon of \emph{Neural Collapse} (NC) on the features of the training data. This phenomenon is composed of four key distinct effects, which we outline in the following definitions. Notably \emph{the definitions below apply exclusively to the training data}, which we denote generically by $D$ here. Thus, 
the class means and the global means in \cref{def:NC2} are all computed using the training data (i.e. $\mu_c = \mu_c(D)$, and $\tilde \mu_c = \tilde \mu_c(D)$).

\begin{definition}[NC1 or Variability collapse]
\label{def:NC1}
Let $t$ be the training step index and $\phi_t$ the feature map at step $t$ trained on data $D$. Then, the within-class variation becomes negligible as the features collapse to their class means. In other words, for every $x \in X_{D,c}$, with $c$ in the training data:
\begin{align}
    &\mathbb{E}_{X_{D,c}}\,[\,\|\phi_t(x) - \mu_c(t)\|^2\,] = \delta_t, \hspace{0.9cm}  \lim_{t \to + \infty} \delta_t = 0  
\end{align}
\end{definition}

\begin{definition}[NC2 or Convergence to Simplex ETF]
\label{def:NC2}
The vectors of the class means (after centering by their global mean) converge to having equal length, forming equal-sized angles between any given pair, and being the
maximally pairwise-distanced configuration constrained to the previous two properties.
\begin{align}
    & \lim_{t\to+\infty} \| \tilde\mu_c(t) \|_2 \to \beta_t \:\: \forall \,c\\ 
    & \lim_{t\to+\infty} \cos( \tilde\mu_c(t) ,\tilde\mu_{c'}(t)) \to
    \begin{cases}
        1 \hspace{1.2cm} \text{if $c = c'$} \\
        - \frac{1}{K-1} \hspace{0.4cm} \text{if $c \neq c'$}
    \end{cases}
\end{align}
\end{definition}

\begin{definition}[NC3 or Convergence to Self-duality]
\label{def:NC3}
The class means and linear classifiers—although mathematically quite different objects, living in dual-vector spaces—converge to each other, up to rescaling. Let $\tilde U(t) = [\tilde{\mu}_1(t), \dots, \tilde{\mu}_K(t)]$:
\begin{align}
    \frac{W_h^\top(t)}{\|W_h(t)\|} = \frac{\tilde U(t)}{\|\tilde U(t)\|}
\end{align} 
As a consequence, $\operatorname{rank}(W_h(t)) = \operatorname{rank}(\tilde U(t)) = K-1$. 
\end{definition}

\begin{definition}[NC4 or Simplification to NCC]
\label{def:NC4}   
For a given deepnet activation, the network classifier converges to choosing whichever class has the nearest train class mean (in standard Euclidean distance).
\end{definition}

\noindent\textbf{\ding{43} Notation .} In all the following proofs we denote by $S_t = \operatorname{span}(\{\tilde\mu_1(t), \dots, \tilde\mu_K(t)\})$ and by $S^\perp_t$ its orthogonal components, and similarly by $P_{S_t}, P_{S_t^\perp}$ the respective projection operators. Note that the reference to the training data is implicit. We might signal it explicitly when necessary.

\begin{lemma}[Feature classes gram matrix]\label{lem:gram-inverse}
Let $\tilde{U}_t = [\tilde{\mu}_1(t), \dots, \tilde{\mu}_K(t)]$ (computed with respect to the training data). Then there exist $t_0$ in the TPT such that, for all $t>t_0$ the gram matrix $\tilde U_t^\top \tilde  U_t$ has the following structure:
\begin{align}
    & \tilde U_t^\top \tilde  U_t = \beta \Big( I_K - \tfrac{1}{K}\mathbf{1}\mathbf{1}^\top \Big) \\
    &(\tilde U_t^\top \tilde  U_t)^{-1} = \beta^{-1} \left( I_K - 
    {\tfrac{1}{2K}\mathbf{1}\mathbf{1}^\top}\right)
\end{align}
\end{lemma}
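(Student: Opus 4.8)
The plan is to read the structure of the Gram matrix straight off the Neural Collapse property \cref{def:NC2}, and then verify the second identity by a single rank-one-update computation—while being careful about what ``inverse'' means here. First I would invoke $\mathcal{NC}2$: by \cref{def:NC2} there exists $t_0$ in the TPT such that, for all $t>t_0$, the centered class means have a common norm and pairwise cosines equal to $-\tfrac{1}{K-1}$. Hence every diagonal entry of $G := \tilde U_t^\top \tilde U_t$ is a common value and every off-diagonal entry is $-\tfrac{1}{K-1}$ times that value. Since the matrix with unit diagonal and off-diagonal $-\tfrac{1}{K-1}$ equals $\tfrac{K}{K-1}\bigl(I_K - \tfrac{1}{K}\mathbf{1}\mathbf{1}^\top\bigr)$, absorbing the scalar into $\beta$ gives $G = \beta\bigl(I_K - \tfrac{1}{K}\mathbf{1}\mathbf{1}^\top\bigr)$, the first claimed identity. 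I would also note that this is consistent with the defining identity $\sum_c \tilde\mu_c(t)=0$, which forces $\tilde U_t\mathbf{1}=0$ and hence places $\mathbf{1}$ in the kernel of $G$.

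The conceptual point I would state explicitly is that $G$ is therefore \emph{singular}: $I_K - \tfrac{1}{K}\mathbf{1}\mathbf{1}^\top$ is the orthogonal projector onto $\mathbf{1}^\perp$, of rank $K-1$, so $G$ admits no genuine inverse and the second display must be understood as a \emph{generalized} inverse. I would verify it by direct algebra, using $\mathbf{1}^\top\mathbf{1}=K$ so that $\mathbf{1}\mathbf{1}^\top\mathbf{1}\mathbf{1}^\top = K\,\mathbf{1}\mathbf{1}^\top$, whence the cross terms collapse to give
\[
\beta\Bigl(I_K - \tfrac{1}{K}\mathbf{1}\mathbf{1}^\top\Bigr)\,\beta^{-1}\Bigl(I_K - \tfrac{1}{2K}\mathbf{1}\mathbf{1}^\top\Bigr) = I_K - \tfrac{1}{K}\mathbf{1}\mathbf{1}^\top .
\]
This shows $G\,G^{-1}$ equals the orthogonal projector onto the column space $\operatorname{span}(\tilde\mu_1,\dots,\tilde\mu_K)=\mathbf{1}^\perp$, i.e.\ the stated $G^{-1}$ inverts $G$ exactly on its range—which is all that is required when $G^{-1}$ is subsequently applied to vectors lying in that span.

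The only real obstacle is precisely this singularity: one must recognize that the displayed ``$(\tilde U_t^\top\tilde U_t)^{-1}$'' is a generalized inverse acting as the identity on $\mathbf{1}^\perp$ rather than a true inverse, and phrase the verification accordingly. Everything past that observation is routine rank-one algebra. As a sanity remark I would add that any $\beta^{-1}\bigl(I_K - \tfrac{a}{K}\mathbf{1}\mathbf{1}^\top\bigr)$ satisfies the same relation $G\,G^{-1}=I_K-\tfrac{1}{K}\mathbf{1}\mathbf{1}^\top$ for arbitrary $a$, so the particular coefficient $\tfrac{1}{2K}$ is merely one convenient representative of this family.
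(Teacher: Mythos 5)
Your proof is correct and follows the same basic route as the paper: read the Gram structure off $\mathcal{NC}2$, observe that $I_K-\tfrac1K\mathbf{1}\mathbf{1}^\top$ is the rank-$(K-1)$ projector onto $\mathbf{1}^\perp$, and conclude that the ``inverse'' can only be a generalized one. You are in fact more careful than the paper on two points. First, you explicitly absorb the factor $\tfrac{K}{K-1}$ into $\beta$ when passing from the $\mathcal{NC}2$ inner products (diagonal $\beta_t$, off-diagonal $-\tfrac{\beta_t}{K-1}$) to the form $\beta\bigl(I_K-\tfrac1K\mathbf{1}\mathbf{1}^\top\bigr)$; the paper's proof silently writes $\beta_t$ for both, which is a harmless but real abuse of notation. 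Second, and more substantively, the paper's proof concludes with the Moore--Penrose pseudoinverse $\beta_t^{-1}\bigl(I_K-\tfrac1K\mathbf{1}\mathbf{1}^\top\bigr)$, which does \emph{not} match the coefficient $\tfrac{1}{2K}$ appearing in the lemma statement, whereas you verify the stated $\tfrac{1}{2K}$ formula directly as a generalized inverse (i.e.\ $G\,G^{-1}$ equals the projector onto $\operatorname{range}(G)$) and correctly remark that any coefficient $\tfrac{a}{K}$ satisfies the same relation. Your sanity remark thus exposes a genuine inconsistency between the paper's statement and its own proof: the $\tfrac{1}{2K}$ is not the Moore--Penrose pseudoinverse and is only one member of a one-parameter family of generalized inverses, so the lemma as stated is either a typo for $\tfrac{1}{K}$ or must be read exactly in the weaker sense you describe. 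Since downstream uses (e.g.\ \cref{prop:OOD-uncertain}) only apply this matrix to vectors and test membership in the kernel spanned by $\mathbf{1}$, either reading suffices, but your version is the more defensible one.
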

\begin{proof}
    Let $\tilde\mu_c(t) = \mu_c(t) - \mu_G(t)$ be the centered class mean given by $\phi_t$ on the training data. Then by  \cref{def:NC2} we know that for all $t > t_0$ for some $t_0$:
    \[
    \langle \tilde{\mu}_c(t), \tilde{\mu}_{c'}(t) \rangle =
        \begin{cases}
        \beta_t, & c = c', \\
        -\tfrac{\beta_t}{K-1}, & c \neq c',
        \end{cases}
    \]
    Also, denote by $\tilde{U}_t = [\tilde{\mu}_1(t), \dots, \tilde{\mu}_K(t)]$ the matrix of centered class means. Then the centered Gram matrix $\tilde{U}_t^\top \tilde{U}_t$ has the following structure: 
    \[
    \tilde{U}_t^\top \tilde{U}_t 
    = \beta_t \Big( I_K - \tfrac{1}{K}\mathbf{1}\mathbf{1}^\top \Big)
    \]
    which is a rank-one perturbation of a diagonal matrix. In fact, the matrix is a projection matrix onto the space orthogonal to $\mathbf{1}$, scaled by $\beta_t$. It has eigenvalues $\beta_t$ with multiplicity $K-1$ and $0$ with multiplicity $1$. Since it's a projection matrix, it is idempotent (up to the scaling factor $\beta_t$).
    Its inverse does not exist but the pseudo-inverse is well-defined.  
    \[
     \beta_t \Big( I_K - \tfrac{1}{K}\mathbf{1}\mathbf{1}^\top \Big)^{-1} 
    = \frac{1}{\beta_t} \Big( I_K - \tfrac{1}{K}\mathbf{1}\mathbf{1}^\top \Big)
    \]
\end{proof}

\subsection{Neural Collapse in a continual learning Setup}
\label{app:NC-CL}
Depending on the continual learning setup, the number of outputs in the network may be increasing with each task. Therefore the Neural Collapse definitions need to be carefully revisited for different continual learning scenarios. 

\begin{figure}[ht]
    \centering
    \includegraphics[width=\linewidth]{./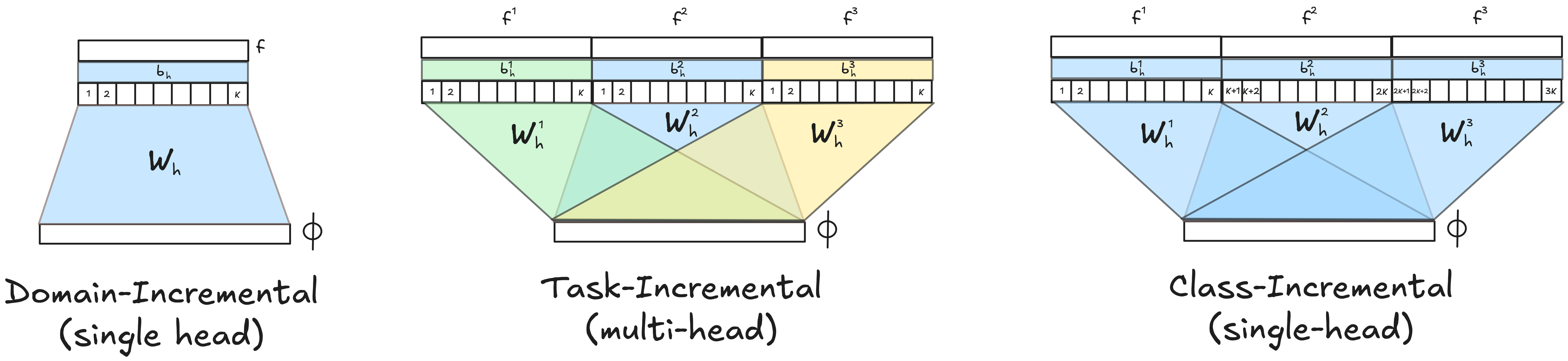}
    \caption{Depiction of continual learning Setups and corresponding head structures. Different colors indicate different gradient information propagated through the weights.}
    \label{fig:CLsetups}
\end{figure}

In the case of \textbf{task-incremental} and \textbf{class-incremental} learning, where each task introduces new classes, we distinguish between the tasks heads as follows: 
\begin{align}
    &f_\theta^{(i)}(x) = W_h^{(i)}\phi(x) +b_h^{(i)} \\
    &f_\theta(x) = [f_\theta^{(1)}(x), \dots, f_\theta^{(i)}(x)]^\top
\end{align}
where only heads from the first to the current task are used in the computation of the network function. For brevity, hereafter we will denote by $W^{A}_h, b^{A}_h$ the concatenation of \emph{active heads} at any task: for example, for task $n$, $W^A_h = [W_h^{(1)}, \dots, W_h^{(n)}]$ and $f_\theta(x) = W^{A}_h\phi(x) + b^{A}_h$. In order to unify the notation,  the same symbols will be used for domain-incremental learning where $W^{A}_h = W_h$ and $b^{A}_h = b_h$.
The difference between task- and class-incremental learning is whether the residuals depend only on the current task output $f_\theta^{(n)}(x)$ or on the entire output $f_\theta(x)$, as we will explain shortly. 

\subsubsection{Domain-Incremental Learning (DIL)}

In \emph{Domain-Incremental Learning (DIL)}, the classification head is consistently shared across all tasks, as each task utilizes the same set of classes. Consequently,  NC is expected to induce a fixed number of clusters in the feature space, corresponding to the total number of classes.
Given that the same class appears in multiple tasks, we must distinguish between the \emph{population mean} $\mu_c(\bar D)$ and the \emph{observed mean} $\hat \mu_c(\hat D)$, where $\hat D$ is a generic training set. Generally, we expect NC properties (\cref{def:NC1,def:NC2,def:NC3,def:NC4}) to emerge on the training data $\hat D$. If the training data includes a buffer, all class means and the global mean will be computed including the buffer samples. Accounting for this, the NC characteristics emerge analogously to those in single-task training.

\paragraph{Emergent Task-Wise Simplex Structure}

Curiously, our experiments also observe an emergent \emph{within-task simplex structure}.
When features are centered by the \emph{task-wise feature means} (taking, for each task, all samples included in the buffer), we also observe the characteristic NC structure within the task. This finding is non-trivial, because the task-wise mean and the global mean are not the same. It seems, then, that during continual learning in DIL, Neural Collapse emerges on \emph{two distinct levels simultaneously}. This dual emergence creates a highly constrained feature manifold, which substantially limits the degrees of freedom available for learning subsequent tasks. Our observations suggest that \emph{a significantly more constrained version of NC emerges under the DIL paradigm compared to standard single-task training.}

\subsubsection{Class-Incremental Learning (CIL)}

In Class-Incremental Learning (CIL), each task introduces \emph{a new set of classes} (for simplicity, we assume the same number $K$ per task). For task $n$, the classification head is expanded by adding $W_h^{(n)}$ to form 
\[
W_h^A(t) = [W_h^{(1)}(t), \dots, W_h^{(n)}(t)].
\] 

Nevertheless, training proceeds as in a single-task setting: residuals are shared across all outputs,  
\[
\frac{\partial \ell((x,y),f_\theta)}{\partial f_\theta} = \tilde f_\theta(x) - \tilde y,
\] 
where both $\tilde f_\theta(x)$ and $\tilde y$ are vectors of dimension $n \times K$. For instance, $\tilde f_\theta(x) = f_\theta(x)$ for MSE loss, and $\tilde f_\theta(x) = \operatorname{softmax}(f_\theta(x))$ for cross-entropy loss, while $\tilde y$ corresponds to the one-hot encoding of $y$.  

In CIL, \emph{the composition and relative proportion} of classes in the training data affect the asymptotically optimal feature structure. If all classes are present in equal proportion, the Neural Collapse (NC) structure for task $n$ consists of $n \times K$ clusters with vanishing intra-cluster variance, which increases to $(n+1) \times K$ clusters when the next task is introduced. By \cref{def:NC3}, the resulting rank of the weight matrix is $n \times K - 1$ after $n$ tasks.  

However, if the training dataset is imbalanced—i.e., the number of samples per class is not equal—the network is pushed, during the TPT, toward a variant of NC known as \emph{Minority Collapse (MC)} \citep{fang_exploring_2021}. For this reason, in our experiments we use datasets with equal numbers of samples per class and buffers of uniform size across tasks. Assuming all tasks' datasets have the same size, for a dataset $D$ and buffer $B$, the degree of imbalance can be quantified by 
\[
\rho = \frac{|B|}{|D|}.
\] 
\citet{dang_neural_2023,hong_neural_2023} identify a critical threshold for $\rho$: below this value, the heads of minority classes (i.e., buffer classes) become indistinguishable, producing nearly identical outputs for different classes. Above the threshold, the MC structure is gradually restored to a standard NC configuration, with class mean norms and angles increasing smoothly.  

As noted by \citet{fang_exploring_2021}, MC can be avoided by \emph{over-sampling} from minority classes to restore class balance. In continual learning, this is implemented by sampling in a \emph{task-balanced} fashion from the buffer, ensuring that each batch contains an equal number of samples per class. Under task-balanced sampling, the class-incremental setup reproduces the standard NC characteristics observed in single-task training. In contrast, in the absence of replay, class-incremental learning is inherently prone to Minority Collapse.

\subsubsection{Task-Incremental Learning (TIL)}

In Task-Incremental Learning (TIL), each task introduces $K$ new classes, as in the CIL case. The crucial difference lies in the treatment of the residuals: they are computed separately for each task. For a sample $x$ belonging to task $n$, we have
\[
    \frac{\partial \ell((x,y),f_\theta)}{\partial f_\theta} 
    = \tilde f^{(n)}_\theta(x) - \tilde y^{(n)},
\]
where both $\tilde f^{(n)}_\theta(x)$ and $\tilde y^{(n)}$ are $K$-dimensional vectors. For instance, under MSE loss $\tilde f^{(n)}_\theta(x) = f^{(n)}_\theta(x)$, while under cross-entropy $\tilde f^{(n)}_\theta(x) = \operatorname{softmax}(f^{(n)}_\theta(x))$, and $\tilde y^{(n)}$ denotes the one-hot encoding of $y \in \{0,\dots,K-1\}$.

Since the outputs are partitioned across tasks, logits corresponding to inactive heads do not contribute to the loss. That is, for $x \in D_i$, the terms $W_h^{(j)}\phi(x) + b_h^{(j)}$ with $j \neq i$ remain unconstrained. In contrast, in CIL such logits are explicitly penalized, as the residuals are shared across all heads. Consequently, the TIL multi-head setting imposes fewer explicit constraints on the relative geometry of weights and class means across tasks. 

Our empirical results indeed reveal that there is structure within each task, but the relative geometry across tasks is more variable and does not seem to exhibit a clear pattern. Within each task, the features exhibit the standard Neural Collapse (NC) geometry, consistent with \cref{def:NC1,def:NC2,def:NC3}. However, the class means of different tasks can overlap arbitrarily, as there are no explicit constraints linking them.

%Despite this apparent under-constrained nature, our empirical results reveal a surprising degree of structure. Within each task, the features exhibit the standard Neural Collapse (NC) geometry, consistent with \cref{def:NC1,def:NC2,def:NC3}. More strikingly, across tasks the resulting simplices are approximately orthogonal, even though in principle the class means of different tasks could overlap arbitrarily. This suggests the presence of an implicit geometric constraint induced by the shared feature space, which we leave for future theoretical investigation.

Motivated by these observations, we formalize the emergent structure as follows.

\begin{proposition}[Neural Collapse in Multi-Head Models] \label{def:NC-TIL}
Let $\mu^n_c(t)$ denote the mean feature of class $c$ from task $n$ at time $t$. 
In the terminal phase of training, under balanced sampling, the following hold:

\begin{enumerate}
    \item \textbf{NC1 (Variability collapse).} 
    Within each task, features collapse to their class means, i.e.,
    \[
        \lim_{t \to +\infty} 
        \mathbb{E}_{x \in X^n_c}\big[\|\phi_t(x) - \mu^n_c(t)\|^2\big] = 0.
    \]

    \item \textbf{NC2 (Convergence to simplex ETF within each task).} 
    Centered class means within each task converge to an Equiangular Tight Frame (ETF):
    \begin{align}
        & \lim_{t\to+\infty} \| \tilde\mu^n_c(t) \|_2 \to \beta_t^n, 
        \qquad \forall \, c \in \{1,\dots,K\}, \\
        & \lim_{t\to+\infty} \cos( \tilde\mu^n_c(t), \tilde\mu^n_{c'}(t)) \to
        \begin{cases}
            1 & \text{if } c = c', \\
            - \tfrac{1}{K-1} & \text{if } c \neq c',
        \end{cases}
    \end{align}
    where $\tilde\mu^n_c(t) = \mu^n_c(t) - \mu^n_G(t)$ and $\mu^n_G(t)$ is the task mean. 

    \item \textbf{NC3 (Convergence to self-duality).}  
    The classifier weights for each head align with the centered class means of the corresponding task, up to rescaling:
    \[
        \frac{W_h^{(n)\top}(t)}{\|W_h^{(n)}(t)\|} 
        = \frac{\tilde U^{(n)}(t)}{\|\tilde U^{(n)}(t)\|},
    \]
    where $\tilde U^{(n)}(t) = [\tilde\mu^n_1(t), \dots, \tilde\mu^n_K(t)]$.
    Consequently, $\operatorname{rank}(W_h^{(n)}(t)) = \operatorname{rank}(\tilde U^{(n)}(t)) = K-1$. 
\end{enumerate}

In summary, each task forms an ETF simplex in the feature space (NC2), with variability collapse (NC1) and classifier self-duality (NC3) holding as in the single-task case. 
\end{proposition}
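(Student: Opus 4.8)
The plan is to reduce the multi-head statement to the established single-task Neural Collapse results (\cref{def:NC1,def:NC2,def:NC3}) by exploiting the defining property of TIL: the per-task residual structure. Since the gradient of the loss at a sample $x \in D_n$ involves only the head $W_h^{(n)}$---the logits of inactive heads being unconstrained---the training objective splits additively into per-task terms sharing no output parameters,
\[
\mathcal{L}(\bar D, \theta) = \sum_{n} w_n\, \mathcal{L}_n\!\left(\phi, W_h^{(n)}\right),
\]
where $\mathcal{L}_n$ is an ordinary $K$-class loss on task $n$ using head $n$ and the shared backbone $\phi$. Under balanced sampling the weights $w_n$ are equal, every task remains active through the buffer, and each task's $K$ classes are equally represented, so the relevant per-task optimum is the standard (not minority) NC configuration. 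The weight-decay term $\tfrac{\lambda}{2}\|W_h^A\|^2 = \tfrac{\lambda}{2}\sum_n \|W_h^{(n)}\|^2$ respects this split.

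The only object linking the summands is the shared feature map $\phi$. The central observation is that TIL tasks have \emph{disjoint} input supports, so the activations $Z_n = \{\phi(x) : x \in D_n\}$ of different tasks can be assigned independently whenever $\phi$ is expressive enough. Adopting the unconstrained feature model (UFM) standard in the NC literature (\citealp{mixon_neural_2022, tirer_extended_2022}), I treat each $Z_n$ as a free variable; the objective then becomes a sum of independent problems
\[
\min_{Z_n,\, W_h^{(n)}} \; \mathcal{L}_n\!\left(Z_n, W_h^{(n)}\right) + \tfrac{\lambda}{2}\|W_h^{(n)}\|^2 + \tfrac{\lambda}{2}\|Z_n\|^2,
\]
that share neither variables nor constraints across $n$.

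Each subproblem is exactly the balanced single-task UFM classification problem. Invoking the established single-task NC theorem (\citealp{papyan_prevalence_2020, tirer_extended_2022, sukenik_neural_2025}), its minimizer satisfies NC1--NC3 relative to the task-internal statistics: within-class variance vanishes (NC1); the means centered by the task mean $\mu_G^n$ form a simplex ETF of common norm $\beta_t^n$ and pairwise cosine $-\tfrac{1}{K-1}$ (NC2, via the Gram structure of \cref{lem:gram-inverse}); and the head aligns with the centered means, $W_h^{(n)\top} \propto \tilde U^{(n)}$, giving $\operatorname{rank}(W_h^{(n)}) = K-1$ (NC3). Independence of the subproblems makes these hold simultaneously for all $n$, which is the claim. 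Because nothing couples the geometry of distinct heads, the cross-task scales $\beta_t^n$, angles, and alignments remain undetermined---exactly the observed global misalignment.

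The substantive step is the decoupling above: the assertion that optimizing $\mathcal{L}_n$ places no constraint on the other tasks' features. This is immediate under the UFM but, for a genuine finite-capacity backbone, requires that $\phi$ can independently realize an NC configuration on each disjoint input set without interference. Making this rigorous would need either an over-parameterization/expressivity hypothesis on $\phi$ (standard but idealized) or a perturbation argument bounding the shared-backbone coupling in the terminal phase; I regard this expressivity gap as the core difficulty, the point where the idealization does the real work. A minor subtlety is the centering convention: the proposition centers by the per-task mean $\mu_G^n$, which is the normalization each decoupled subproblem produces natively, so no cross-task reconciliation is required.
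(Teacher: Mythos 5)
Your core idea---that the TIL loss decomposes additively over tasks because residuals are computed per head, so each summand is an ordinary balanced $K$-class problem to which single-task NC results apply---is exactly the reasoning the paper uses to motivate this proposition. However, you should be aware that the paper does not actually prove it: after observing that ``logits corresponding to inactive heads do not contribute to the loss'' and that the multi-head setting ``imposes fewer explicit constraints on the relative geometry across tasks,'' the authors state the proposition as a \emph{formalization of empirical observations} (and say explicitly elsewhere that a full theory of multi-head NC is beyond the paper's scope). Your proposal goes further than the paper by making the decoupling explicit under the unconstrained feature model, splitting the regularizer as $\tfrac{\lambda}{2}\sum_n \|W_h^{(n)}\|^2 + \tfrac{\lambda}{2}\sum_n\|Z_n\|^2$ and invoking the known single-task UFM optimality of the ETF configuration for each independent subproblem. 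That is a legitimate and arguably more satisfying route, and your identification of the shared-backbone expressivity assumption as the load-bearing idealization is precisely the gap the paper fills with experiments rather than analysis. Two points deserve care if you want to present this as a proof: first, the single-task NC results you cite characterize \emph{global minimizers} of the UFM objective (or gradient-flow limits in specific regimes), whereas the proposition is phrased as a statement about the terminal phase of training, so you are implicitly assuming that training reaches the decoupled optimum; second, under balanced replay the per-task datasets entering each $\mathcal{L}_n$ are of very different sizes (full current-task data versus small buffers), and the paper's own observation of ``strong collapse'' on small buffers means the per-task subproblems are not statistically identical even if they are structurally identical---this does not break the argument for the \emph{observed} (training-data) statistics the proposition concerns, but it is worth stating that the ETF is over buffer samples for past tasks.
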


A key implication is the difference in rank scaling compared to CIL. In CIL, the rank of the head weights after $n$ tasks is $n \times K - 1$, whereas in TIL it is \emph{upper bounded by} $n \times (K-1)$, as confirmed empirically (\cref{fig:NC-rank}). Thus, the multi-head structure imposes a strictly stronger rank limitation.  

% \paragraph{Imbalanced data.}  
% Unlike the CIL setting, the multi-head structure in TIL is naturally more robust to class imbalance across tasks, since each task’s simplex is confined to its own subspace. Imbalance is therefore reflected only in the norms of the class means, as we confirm in experiments with different buffer sizes. However, if a single-head readout is applied post-hoc to the multi-head features, the imbalance effects resurface, with unequal norms reflecting the varying amounts of signal per task.  

\paragraph{Replay vs. no replay.}  
When training without replay, i.e., relying solely on the current task’s data, the TIL setup reduces to an effective single-task regime: earlier heads receive no gradient signal, and NC emerges only within the most recent task, as in standard single-task training.

\vspace{0.3cm}

\begin{lemma}[Gram Matrix in TIL]
\label{lem:gram-inverse-til}
Let $\tilde U_t = [\tilde U^{(1)}_t, \dots, \tilde U^{(n)}_t]$ be the matrix of centered class means at time $t$, where 
$\tilde U^{(m)}_t = [\tilde\mu^{(m)}_1(t), \dots, \tilde\mu^{(m)}_{K}(t)]$, 
with $\tilde\mu^{(m)}_c(t) = \mu^{(m)}_c(t) - \mu^{(m)}_G(t)$.  
Suppose that in the terminal phase of training \cref{def:NC-TIL} holds.
Then for all sufficiently large $t$, the Gram matrix $\tilde U_t^\top \tilde U_t$ is :
\[
\tilde U_t^\top \tilde U_t \;=\;
\begin{bmatrix}
G^{(1)}_t & \tilde U^{(1)\top}_t \tilde U^{(2)}_t & \cdots & \tilde U^{(n)\top}_t \tilde U^{(1)}_t \\
\tilde U^{(1)\top}_t \tilde U^{(2)}_t & G^{(2)}_t & \cdots & \tilde U^{(n)\top}_t \tilde U^{(2)}_t \\
\vdots & \vdots & \ddots & \vdots \\
\tilde U^{(n)\top}_t \tilde U^{(1)}_t & \tilde U^{(n)\top}_t \tilde U^{(2)}_t & \cdots & \tilde G^{(n)}_t
\end{bmatrix},
\]
where each block $G^{(m)}_t$ satisfies
\[
{G^{(m)}_t}
= \beta_m \Big(I_{K} - \tfrac{1}{K}\mathbf{1}\mathbf{1}^\top\Big),
\qquad
{G^{(m)}_t}^{-1}
= \beta_m^{-1}\Big(I_{K} - \tfrac{1}{K}\mathbf{1}\mathbf{1}^\top\Big).
\]
Thus, the inverse Gram matrix satisfies 
$(\tilde U_t^\top \tilde U_t)^{-1} \mathbf{1} = \mathbf{0}$. 
\end{lemma}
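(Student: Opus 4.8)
The plan is to exploit the fact that $\tilde U_t$ is a horizontal concatenation of the per-task centered-mean matrices $\tilde U^{(1)}_t,\dots,\tilde U^{(n)}_t$, reducing each diagonal block to the single-task computation of \cref{lem:gram-inverse}. By the definition of matrix multiplication, the $(i,j)$ block of $\tilde U_t^\top \tilde U_t$ is exactly $\tilde U^{(i)\top}_t \tilde U^{(j)}_t$, which immediately gives the displayed block form; the off-diagonal entries are the (unstructured) cross-task inner products. For the diagonal blocks, \cref{def:NC-TIL}(NC2) asserts that the centered means of a single task form an ETF, so the argument of \cref{lem:gram-inverse} transfers verbatim to each task and yields $G^{(m)}_t = \beta_m\big(I_K - \tfrac{1}{K}\mathbf{1}\mathbf{1}^\top\big)$. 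Since this is $\beta_m$ times the centering projector, which is idempotent, its Moore--Penrose pseudo-inverse equals itself, giving ${G^{(m)}_t}^{-1} = \beta_m^{-1}\big(I_K - \tfrac{1}{K}\mathbf{1}\mathbf{1}^\top\big)$.

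The crux of the final identity is that the centered class means within each task sum to zero. Since $\tilde\mu^{(m)}_c(t) = \mu^{(m)}_c(t) - \mu^{(m)}_G(t)$ and $\mu^{(m)}_G(t)$ is the average of the $K$ class means of task $m$, one has $\sum_{c=1}^{K}\tilde\mu^{(m)}_c(t) = \mathbf{0}$, i.e.\ $\tilde U^{(m)}_t \mathbf{1}_K = \mathbf{0}$ for every $m$. I would then observe that every block therefore annihilates $\mathbf{1}_K$ from the right, $\tilde U^{(i)\top}_t \tilde U^{(j)}_t \mathbf{1}_K = \tilde U^{(i)\top}_t\big(\tilde U^{(j)}_t \mathbf{1}_K\big) = \mathbf{0}$, so writing the all-ones vector as the stack $\mathbf{1}_{nK} = [\mathbf{1}_K^\top,\dots,\mathbf{1}_K^\top]^\top$, the $i$-th block-row of $\tilde U_t^\top \tilde U_t\,\mathbf{1}_{nK}$ equals $\tilde U^{(i)\top}_t\sum_{j}\tilde U^{(j)}_t \mathbf{1}_K = \mathbf{0}$. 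Hence $\mathbf{1}_{nK}$ lies in the kernel of $\tilde U_t^\top \tilde U_t$.

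To conclude, I would invoke that $\tilde U_t^\top \tilde U_t$ is symmetric positive semidefinite, so its Moore--Penrose pseudo-inverse has the same kernel; thus $\mathbf{1}_{nK}\in\ker(\tilde U_t^\top \tilde U_t)$ gives $(\tilde U_t^\top \tilde U_t)^{-1}\mathbf{1} = \mathbf{0}$, the inverse read as a pseudo-inverse. The main obstacle is conceptual rather than computational: the Gram matrix is singular (rank at most $n(K-1)$, matching the rank bound in \cref{def:NC-TIL}), so every inverse in the statement must be interpreted in the Moore--Penrose sense. A secondary point worth making explicit is that the unstructured off-diagonal blocks pose no difficulty for the kernel computation, because right-multiplication by $\mathbf{1}_K$ annihilates each factor $\tilde U^{(j)}_t$ on its own, independently of the cross-task geometry.
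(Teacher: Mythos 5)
Your proposal is correct and follows essentially the same route as the paper: read off the block structure, use the within-task centering $\tilde U^{(m)}_t\mathbf{1}_K=\mathbf{0}$ to show every block (diagonal and off-diagonal) annihilates $\mathbf{1}_K$, and conclude that $\mathbf{1}_{nK}$ lies in the common kernel of the Gram matrix and its Moore--Penrose pseudo-inverse. If anything you are slightly more careful than the paper, which invokes an unnecessary cross-task orthogonality claim and leaves the diagonal-block pseudo-inverse formula implicit, whereas you derive it from the idempotency of the centering projector.
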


\begin{proof}
By \cref{def:NC-TIL}, in the terminal phase of training each task satisfies NC2 (within-task ETF) and task subspaces are orthogonal.  

\textbf{Diagonal blocks:} Each $G^{(m)}_t = \tilde U^{(m)\top}_t \tilde U^{(m)}_t$ is an ETF matrix of size $K \times K$. By definition of ETF, its columns sum to zero:
\[
G^{(m)}_t \mathbf{1} = 0.
\]  

\textbf{Off-diagonal blocks:} For $B^{(ij)} = \tilde U^{(i)\top}_t \tilde U^{(j)}_t$, we have
\[
B^{(ij)} \mathbf{1} = \tilde U^{(i)\top}_t \tilde U^{(j)}_t \mathbf{1} = \tilde U^{(i)\top}_t \cdot 0 = 0
\]
since the columns of $\tilde U^{(j)}_t$ are centered.

\textbf{Global null vector:} For the full block Gram matrix $\tilde U_t^\top \tilde U_t$, the $i$-th block-row acting on $\mathbf{1}_n$ is
\[
\sum_{j=1}^n B^{(ij)} \mathbf{1} = G^{(i)}_t \mathbf{1} + \sum_{j \neq i} B^{(ij)} \mathbf{1} = 0 + \sum_{j \neq i} 0 = 0.
\]
Hence, $\tilde U_t^\top \tilde U_t \mathbf{1}_n = 0$, so $\mathbf{1}_n$ lies in the null space of the Gram matrix.

\textbf{Inverse / pseudoinverse:} Since $\tilde U_t^\top \tilde U_t$ is singular, the Moore--Penrose pseudoinverse exists, and $\mathbf{1} \in \ker(\tilde U_t^\top \tilde U_t)$ implies $\mathbf{1} \in \ker((\tilde U_t^\top \tilde U_t)^+)$. Thus, $\mathbf{1}$ is a zero eigenvector of both $\tilde U_t^\top \tilde U_t$ and its pseudoinverse.
\end{proof}

\subsubsection{Final results and takeaways}
The preceding analysis allows us to draw several unifying conclusions regarding the asymptotic feature geometry in continual learning. 

A first key takeaway is that, in the absence of replay, continual learning effectively reduces to repeated single-task training. In this regime, only the current task is represented in feature space with Neural Collapse (NC) geometry, while features from previous tasks degenerate. This observation is formalized as follows.  

\begin{finding}[Asymptotic Structure without replay]
When training exclusively on the most recent task, irrespective of the continual learning setup, the asymptotically optimal feature representation for the current task coincides with the Neural Collapse (NC) structure observed in the single-task regime. In the CIL case, this further implies that the feature representations of all classes from previous tasks collapse to the zero vector, while only the features of the current task organize according to NC. \looseness=-1
\end{finding}

A second key takeaway is that task-balanced replay fundamentally alters the asymptotic structure. In this setting, the replay buffer restores balanced exposure to all classes, preventing the degeneration of past representations. Consequently, in single-head setups (DIL and CIL) the network converges to a global NC structure over all observed classes (measured on the training data). In contrast, the multi-head setup of TIL continues to decouple the heads across tasks, yielding NC geometry within each task but leaving the relative geometry across tasks unconstrained.  

\begin{finding}[Asymptotic Structure of the Feature Space with task-balanced replay]
\label{prop:asymptotic-structure-replay}
When training on $n$ tasks with task-balanced replay, the single-head setups converge to Neural Collapse over all classes represented in the training data ($K$ classes for DIL and $n\times K$ classes for CIL). For TIL, each task head individually exhibits Neural Collapse within its $K$ classes, but the relative positioning of class means across tasks is unconstrained, leading to a blockwise NC structure in feature space.
\end{finding}

Taken together, these results highlight a fundamental distinction between single-head and multi-head continual learning: while replay suffices to recover global NC geometry in single-head settings, in TIL the absence of cross-task coupling in the loss function enforces only local NC structure within each task. \looseness=-1

\vspace{1cm}

\subsection{Main result 1: stabilization of the training feature subspace.}
\label{asec:main-result-1}
\begin{theorem}[Subspace stabilization in TPT under SGD.]
\label{theo:S-stabilization}
Let $f_{\theta_t}(x) = W^A_{h}(t)\,\phi_t(x) + b^A_h(t)$ be the network at step $t$ in the optimization of a task with $P$ classes and dataset $D$, and let $S_t = \operatorname{span}(\{\tilde\mu_1(t), \dots, \tilde\mu_P(t)\})$ ($\mu_c = \mu_c(D)$). Assume NC3 holds on $D$ for all $t \geq t_0$, i.e., $\operatorname{span}(W^A_{h}(t)) = S_t$. 
Then, for all $t \ge t_0$, the gradient $\nabla_\theta \mathcal{L}(\theta_t)$ is confined to directions in parameter space that affect features in $S_t$, and, consequently,  $S_t = S_{t_0}$ and $S_t^\perp = S_{t_0}^\perp$.
\end{theorem}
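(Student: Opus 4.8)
The plan is to exploit the compositional structure $f_{\theta_t}(x) = W^A_h(t)\,\phi_t(x) + b^A_h(t)$ together with the chain rule, so that the only ``force'' the loss exerts on features lives in $\operatorname{row}(W^A_h(t))$, which by NC3 equals $S_t$. Writing $r(x)$ for the output residual $\partial \ell / \partial f$, backpropagation gives the feature cotangent $\partial \mathcal{L}/\partial \phi(x) = (W^A_h(t))^\top r(x)$, which lies in the row space of $W^A_h(t)$, i.e. in $S_t$ by \cref{def:NC3}. Since $W^A_h(t)\,P_{S_t^\perp} = 0$, the loss is invariant to any perturbation of features along $S_t^\perp$, and every loss-gradient parameter direction is the pullback of an $S_t$-directed feature perturbation. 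This establishes the first assertion of the theorem verbatim.

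Next I would convert this cotangent statement into the evolution of the class means. Treating the features (equivalently the class means, as in the unconstrained-features idealization used throughout NC theory) as the effective optimization variables, one SGD step with weight decay reads $\phi_{t+1}(x) = (1-\eta\lambda)\,\phi_t(x) - \eta\,(W^A_h(t))^\top r(x)$. Averaging over $X_c$ and subtracting the global mean, the weight-decay term acts radially (a scalar multiple of the current vector) while the loss term $(W^A_h)^\top(\bar r_c - \bar r_G)$ lies in $S_t$; hence $\tilde\mu_c(t+1) = (1-\eta\lambda)\,\tilde\mu_c(t) - \eta\,(W^A_h)^\top(\bar r_c - \bar r_G) \in S_t$ whenever $\tilde\mu_c(t) \in S_t$. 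Consequently $S_{t+1} = \operatorname{span}\{\tilde\mu_c(t+1)\} \subseteq S_t$, and any component a feature carries in $S_t^\perp$ is merely scaled by $1-\eta\lambda$ each step---exactly the exponential decay recorded in \cref{theo:asymptotic-OOD}.

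To upgrade the inclusion $S_{t+1} \subseteq S_t$ to the equality $S_t = S_{t_0}$, I would invoke rank preservation. By \cref{def:NC3} we have $\dim S_t = \operatorname{rank}(W^A_h(t))$ for every $t \ge t_0$, and the simplex-ETF structure (\cref{def:NC2}, \cref{lem:gram-inverse}) fixes this rank at $P-1$ throughout the terminal phase. A nested chain of subspaces of constant dimension is constant, so $S_{t+1} = S_t$, and induction from $t_0$ yields $S_t = S_{t_0}$ and $S_t^\perp = S_{t_0}^\perp$ for all $t \ge t_0$.

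The main obstacle is the passage from the rigorous cotangent fact to the dynamical claim: in a genuine deep network the induced feature increment is $J_\phi(x)\,\delta\theta_\phi$, an NTK-weighted combination that can rotate the $S_t$-directed cotangent partly into $S_t^\perp$, so features need not update strictly inside $S_t$. Closing this gap cleanly requires either the unconstrained-features model (features/means as free variables, where the update is exactly the one used above) or the assumption that the feature-feature tangent kernel preserves the splitting $S_t \oplus S_t^\perp$ in the small-$\eta$ gradient-flow limit; I would state this idealization explicitly. Secondarily, I would confirm that $\operatorname{rank}(W^A_h)$ genuinely remains $P-1$ (no degeneration of $S_t$) so that the dimension-counting step in the final paragraph is legitimate.
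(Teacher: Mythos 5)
Your core idea is the same as the paper's: NC3 forces $\operatorname{row}(W^A_h(t)) = S_t$, so the loss is first-order invariant to feature perturbations in $S_t^\perp$, and therefore the optimization cannot move the $S_t^\perp$ components. The paper phrases this by testing the parameter gradient against directions $v$ with $P_{S_t}J_t(x)v=0$ and showing $\nabla_\theta\mathcal{L}\perp v$; you phrase it via the feature cotangent $(W^A_h)^\top r(x)\in S_t$. These are equivalent starting points. Where you go further is instructive on two counts. First, the gap you flag at the end is real and is present in the paper's own proof: showing that $\nabla_\theta\mathcal{L}$ is orthogonal to every direction that affects \emph{only} $S_t^\perp$ does not imply that the realized update $J_t(x)\,\Delta\theta$ has vanishing $S_t^\perp$ component, since a parameter direction can affect both subspaces at once and still carry gradient. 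The paper's word ``consequently'' silently crosses this gap (and the subsequent lemma on $S^\perp$ decay relies on the stronger statement $J_t(x)\nabla_\theta\mathcal{L}\in S$). Your proposed remedies---the unconstrained-features idealization, under which the update is exactly $\phi_{t+1}=(1-\eta\lambda)\phi_t-\eta (W^A_h)^\top r$, or an assumption that the tangent kernel respects the splitting $S_t\oplus S_t^\perp$---are the right way to make the claim honest, and stating one of them explicitly would strengthen the paper. Second, your upgrade from the nested inclusion $S_{t+1}\subseteq S_{t_0}$ to the equality $S_t=S_{t_0}$ via the constant rank $P-1$ guaranteed by the ETF structure is a step the paper simply omits; without it one only gets $S_t\subseteq S_{t_0}$, which would still suffice for the downstream decay lemma but not for the equality asserted in the theorem. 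So: same route, but your version is the more complete one, modulo the extra modeling assumption you correctly identify as necessary.
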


\begin{proof}
Let $\phi_t(x)$ be the feature representation of $x$ at time $t$, and let $J_t(x) = \nabla_\theta \phi_t(x)$ be its Jacobian with respect to parameters $\theta_t$.
Consider an infinitesimal parameter change $\Delta \theta_t = \epsilon\, v$, with $P_{S_t} J_t(x) v = 0$ for all $x$ in the training data, i.e., this change only affects the feature component in $S_t^\perp$. By a first order approximation the corresponding feature change is:
\[
\Delta \phi_t(x) = J_t(x)\Delta\theta_t = \epsilon\, J_t(x)v = \epsilon\, P_{S_t^\perp}J_t(x)v
\]
Now, consider the effect of this change on the loss:
\begin{align}
\mathcal{L}(\theta_t + \epsilon\,v) - \mathcal{L}(\theta_t) 
    & \approx \nabla_\phi \mathcal{L}(\theta_t) \cdot \Delta \phi_t(x)\\
    & = \left( \frac{\partial \mathcal{L}}{\partial f} \cdot \frac{\partial f}{\partial \phi} \right) \cdot \Delta \phi_t(x) \\
    & = \left( \frac{\partial \mathcal{L}}{\partial f} \cdot W^A_{h}(t) \right) \cdot \Delta \phi_t(x)
\end{align}
By NC3, for any $t>t_0$, $\operatorname{span}(W^A_{h}(t)) = S_t$, and since $\Delta \phi_t(x) \in S_t^\perp$:
\[
W^A_{h}(t) \cdot \Delta \phi_t(x) = 0
\Rightarrow \mathcal{L}(\theta_t + \epsilon\,v) - \mathcal{L}(\theta_t) = 0
\]
Dividing by $\epsilon$ and taking the limit $\epsilon \to 0$,
\[
\nabla_\theta \mathcal{L}(\theta_t)  \perp v \quad \text{for all $v$ such that $P_{S_t} J_t(x)v = 0$ $\forall$ $x \in D$}
\]
This shows that the {loss gradient} lies entirely in directions that affect $S_t$ and consequently \emph{the $S_t^\perp$ component of the input representation is not changed}. It follows that, after NC3 gradient descent cannot change the subspaces $S_t, S_t^\perp$, since all changes in the features for $t>t_0$ will lie in $S_{t_0}$. We conclude that $S_t = S_{t_0}$ and $S_t^\perp = S_{t_0}^\perp$.
\end{proof}

\noindent\textbf{\ding{43} Notation .}  Hereafter we denote by  $S$ the subspace spanned by the centered class means after its stabilization at the onset of NC3, i.e. $S = S_{t_0}$. Note that the centered class means may still change, but their span doesn't.

\vspace{0.3cm}
\begin{lemma}[Freezing and decay of $S^\perp$ in TPT under SGD.]
\label{lemma:orthogonal-shrinkage}
Let $f_{\theta_t}(x) = W^A_{h}(t) \phi_t(x) + b^A_h(t)$ be the network at time $t$, where $\phi_t(x)$ is the feature representation and $W^A_{h}(t)$ the final layer weights. Suppose the training loss includes weight decay with coefficient $\lambda > 0$, i.e.,
\[
\mathcal{L}_{\text{total}}(\theta) = \mathcal{L}(\theta) + \frac{\lambda}{2} \|\theta\|^2.
\] 
and that for all $t \geq t_0$, NC3 holds, i.e., $\operatorname{span}(W^A_{h}(t)) = S$, and $\eta$ sufficiently small. 
Then the component of $\phi_t(x)$ in $S^\perp$, denoted by $\phi_{t,S^\perp}(x)$,  evolves as follows:
\[
\phi_{t, S^\perp}(x) =  \upsilon^{t-t_0}\,\,\phi_{t_0,S^\perp}(x)
\]
\end{lemma}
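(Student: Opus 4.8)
The plan is to track the one–step evolution of the $S^\perp$-component of the features under the gradient-plus-weight-decay update and show it contracts by exactly $\upsilon=1-\eta\lambda$ per step, then iterate from $t_0$. First I would write the update in additive form,
\[
\theta_{t+1}=\theta_t-\eta\nabla_\theta\mathcal{L}(\theta_t)-\eta\lambda\,\theta_t,
\]
so that $\Delta\theta_t$ splits cleanly into a loss-driven part $-\eta\nabla_\theta\mathcal{L}(\theta_t)$ and a decay-driven part $-\eta\lambda\theta_t$. Using the Jacobian $J_t(x)=\nabla_\theta\phi_t(x)$ introduced in the proof of \cref{theo:S-stabilization}, the induced first-order change in the feature of a sample $x$ is $\Delta\phi_t(x)=J_t(x)\Delta\theta_t$, which I project onto $S^\perp$:
\[
P_{S^\perp}\Delta\phi_t(x)=-\eta\,P_{S^\perp}J_t(x)\nabla_\theta\mathcal{L}(\theta_t)-\eta\lambda\,P_{S^\perp}J_t(x)\theta_t.
\]

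Next I would dispatch the two terms separately. For the loss-driven term, \cref{theo:S-stabilization} already establishes that once NC3 holds ($t\ge t_0$) the loss gradient only affects features inside $S$, i.e. $P_{S^\perp}J_t(x)\nabla_\theta\mathcal{L}(\theta_t)=0$, so the first term vanishes. For the decay-driven term I would invoke positive degree-one homogeneity of the feature map in its parameters, so that Euler's identity gives $J_t(x)\theta_t=\phi_t(x)$ and hence $P_{S^\perp}J_t(x)\theta_t=\phi_{t,S^\perp}(x)$. Substituting both yields the one-step recursion
\[
\phi_{t+1,S^\perp}(x)=\phi_{t,S^\perp}(x)-\eta\lambda\,\phi_{t,S^\perp}(x)=\upsilon\,\phi_{t,S^\perp}(x),
\]
and a straightforward induction on $t\ge t_0$ (with trivial base case $t=t_0$) propagates the contraction to give $\phi_{t,S^\perp}(x)=\upsilon^{t-t_0}\phi_{t_0,S^\perp}(x)$.

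I expect the main obstacle to be the decay term, i.e. converting ``weight decay on parameters'' into ``geometric shrinkage of the $S^\perp$ feature component.'' Euler's identity is exact only for a degree-one-homogeneous feature map (e.g. a linear/last-layer model, or a bias-free network where the homogeneity degree is absorbed into the effective rate); for a generic deep network one must instead argue that, since the loss supplies no gradient signal in $S^\perp$ by \cref{theo:S-stabilization}, those directions are driven \emph{solely} by weight decay, and then control the accumulated $\mathcal{O}(\eta^2)$ curvature corrections over the terminal phase using the ``$\eta$ sufficiently small'' hypothesis so they remain negligible. The cleanest route, which I would foreground, is the unconstrained-features reading: if $\phi$ is itself an optimization variable regularized by $\tfrac{\lambda}{2}\|\cdot\|^2$, the update is literally $\phi_{t+1}=(1-\eta\lambda)\phi_t-\eta\nabla_\phi\mathcal{L}$ with $\nabla_\phi\mathcal{L}$ lying in $\operatorname{span}(W^A_h)=S$ by NC3, so the $S^\perp$ part contracts by $\upsilon$ with no homogeneity argument needed at all.
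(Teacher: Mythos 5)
Your proof follows essentially the same route as the paper's: split the SGD-plus-weight-decay update into a loss-driven part (killed in $S^\perp$ by \cref{theo:S-stabilization}) and a decay part, convert $J_t(x)\theta_t$ into $\phi_t(x)$, and unroll the resulting one-step contraction by $\upsilon=1-\eta\lambda$. The only difference is in how the key identity $J_t(x)\theta_t\approx\phi_t(x)$ is justified --- the paper uses a first-order expansion around $\theta=0$, while you invoke Euler's identity for a degree-one-homogeneous feature map --- two phrasings of the same linearity-in-parameters assumption, whose fragility (homogeneity degree $>1$ for deep networks, accumulated $\mathcal{O}(\eta^2)$ corrections) you actually flag more carefully than the paper does.
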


\begin{proof}
By gradient descent the parameter update is:
\[
\Delta \theta_t = -\eta \left( \nabla_\theta \mathcal{L}(\theta_t) + \lambda \theta_t \right)
\]
and, for small enough $\eta$ we can approximate the feature update as :
\[
\phi_{t+1}(x) - \phi_t(x) \approx J_t(x)\,\Delta \theta_t = -\eta\, J_t(x)\,\nabla_\theta \mathcal{L}(\theta_t) - \eta \lambda \,J_t(x)\,\theta_t
\]
Decompose this into components in $S$ and $S^\perp$. By \cref{theo:S-stabilization}, for all $t > t_0$ and all $x \in D$ \( J_t(x)\,\nabla_\theta \mathcal{L}(\theta) \in S \). Then:
\[
\phi_{t+1, S^\perp}(x) - \phi_{t,S^\perp}(x) = -\eta \lambda P_{S^\perp} J_t(x)\theta_t
\]
Noticing that $\theta = 0$ makes $\phi(x) = 0$ for any $x$, by a first order approximation we have that $ \phi_t(x) \approx J_t(x)\,\theta_t$ and thus: 
\[
\phi_{t+1, S^\perp}(x) = \phi_{t,S^\perp}(x)(1 - \eta \lambda)
\]
for all $t>t_0$. Unrolling this sequence over time, starting from $t_0$, we get our result.
\end{proof}

\begin{remark}
    The results presented in this section hold for both single-head and multi-head training. When training with more than 1 head, the subspace $S$ corresponds to the span of the class means of all heads combined, and by \cref{def:NC-TIL} it has lower rank than in the single-head case.
\end{remark}

\subsection{Another definition of OOD}
\label{asec:main-result-2}

\begin{definition}[{ID/OOD orthogonality} property of \citet{ammar_neco_2024}]
    Consider a model with feature map $\phi_t(x)$, trained on dataset $D$ with $K$ classes. Denote by $S_t=\operatorname{span}\{\tilde\mu_1(t),\dots,\tilde\mu_K(t)\}$ the subspace spanned by the centered class means of the training data at time $t$. The set of data $X$ is said to be OOD if 
    \[
    \cos\left(\mathbb{E}_{X}[\phi_t(x)], \mu_c(t) \right) \to 0\hspace{1cm}\forall\;c\in [K]
    \]
\end{definition}

\vspace{0.2cm}

\begin{definition}[Out-of-distribution (OOD)]
\label{def:OOD}
Let $X_c$ be a set of samples from class $c$. Consider a network with feature map, $\phi_t(x)$, trained on dataset $D$ with $K$ classes, such that $X_c \cap D = \emptyset$. Denote by $S_t=\operatorname{span}\{\tilde\mu_1(t),\dots,\tilde\mu_K(t)\}$ the subspace spanned by the centered class means of the training data at time $t$. We say that $X_c$ is \emph{out of distribution} for $f_{\theta_t}$ (trained on $D$) if 
\[
P_{S_t}\,\mathbb{E}_{X_c}[\phi(x)] = 0
\]
\end{definition}

This definition restates the \emph{ID/OOD orthogonality} property of \citet{ammar_neco_2024} in a different form.

Next, we show that the observation, common in the OOD detection literature, that old tasks data is maximally uncertain in the network output is coherent with these definitions of OOD when there is Neural Collapse.

\begin{proposition}[Out Of Distribution (OOD) data is maximally uncertain.]
\label{prop:OOD-uncertain}
A set of samples $X$ from the same class $c$ is \emph{out of distribution} for the model $f_\theta$ with homogeneous head and Neural Collapse if and only if the average model output over $X$ is maximally uncertain, i.e. the uniform distribution. 
\end{proposition}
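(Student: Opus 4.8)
The plan is to reduce the biconditional to a single chain of equivalences linking the network's averaged output to an orthogonality condition on the mean feature $\bar\phi := \E_X[\phi(x)]$, exploiting the duality $\mathcal{NC}3$ together with the fact that the centered class means sum to zero.

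First I would unpack the \emph{homogeneous head} assumption: with no bias term (or a class-independent bias), the logit for class $k$ is $z_k(x) = \langle w_k, \phi_t(x)\rangle$, and since $\softmax$ sends a logit vector to the uniform distribution precisely when all its coordinates coincide, the statement ``the average output is maximally uncertain'' is equivalent to the averaged logits $\E_X[z_k]$ being equal across all $k$. Next I would invoke $\mathcal{NC}3$, which gives $w_k = \gamma\,\tilde\mu_k(t)$ for a single positive scalar $\gamma = \|W_h(t)\|/\|\tilde U(t)\|$. Hence $\E_X[z_k] = \gamma\,\langle \tilde\mu_k(t), \bar\phi\rangle$, and the equal-logit condition reduces to: $\langle \tilde\mu_k(t), \bar\phi\rangle$ is independent of $k$.

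The crux is then a short linear-algebra step. Because the class means are centered by the (balanced) global mean, $\sum_{k} \tilde\mu_k(t) = 0$. Therefore, if $\langle \tilde\mu_k(t), \bar\phi\rangle = a$ for all $k$, summing over $k$ yields $Ka = \langle \sum_k \tilde\mu_k(t), \bar\phi\rangle = 0$, forcing $a=0$; so every inner product vanishes, i.e. $\bar\phi \perp \operatorname{span}\{\tilde\mu_1(t),\dots,\tilde\mu_K(t)\} = S_t$, which is exactly $P_{S_t}\bar\phi = 0$, the OOD condition of \cref{def:OOD}. Conversely, $P_{S_t}\bar\phi = 0$ immediately makes all inner products zero, hence all averaged logits equal, yielding the uniform output. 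This closes both directions at once, so no separate argument is needed for each implication.

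The main obstacle I anticipate is the softmax/averaging-order subtlety hidden in the first step: the literal ``average model output'' is $\E_X[\softmax(f_\theta(x))]$, whereas the clean algebra lives at $\softmax(W_h(t)\,\bar\phi)$, and these differ by a Jensen gap in general. I would discharge this either by defining the averaged output at the level of the mean logit, or—more in keeping with the paper's terminal-phase viewpoint—by invoking variability collapse ($\mathcal{NC}1$) to argue that the features of $X$ concentrate around $\bar\phi$, so that the averaged softmax and the softmax of the averaged logit coincide asymptotically. Everything else (the duality substitution and the sum-to-zero argument) is routine once this identification is justified.
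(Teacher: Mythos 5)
Your proposal is correct and follows essentially the same route as the paper's proof: both use $\mathcal{NC}3$ to identify the head rows with the centered class means and then exploit the fact that these means sum to zero (equivalently, that $\mathbf{1}$ spans the null space of the Gram matrix $\tilde U^\top \tilde U$, as the paper phrases it via \cref{lem:gram-inverse}) to show that ``all averaged logits equal'' forces ``all averaged logits zero,'' i.e.\ $P_{S_t}\bar\phi=0$. Your explicit handling of the softmax/averaging-order subtlety is a point the paper's proof glosses over (it silently reads ``average output'' as the averaged logits), but it does not change the argument.
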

\begin{proof}

    By definition of $S$ being the span of $\{\tilde\mu_1(t), \dots, \tilde\mu_K(t)\}$ we can write 
    \[
    P_{\tilde U}(t)\,\phi_t(x) = \tilde U(t)(\tilde U(t)^\top \tilde U(t))^{-1}\tilde U(t)^\top \phi_t(x)
    \]
    where $\tilde U$ is the matrix whose columns are the centered class means $\tilde\mu_i$. By \cref{def:NC3} we have, for all $t > t_0$, $W_h(t) = \alpha \,\tilde U_t$, where $\alpha = \tfrac{\|W_h(t)\|}{\|\tilde U(t)\|}$ and therefore, for an homogenous head model, the network outputs are 
    $f_\theta(x) = \tilde U(t)^\top \phi_t(x)$
    . Finally, to complete the proof see that by the structure of the gram matrix (\cref{lem:gram-inverse}), its null space is one-dimensional along the $\mathbf{1}$ direction. Therefore it must be that 
    \begin{align}
        & \tilde U(t) ^\top \mathbb{E}_{X_c}[\phi(x)] \propto \mathbf{1} \\
        & P_{\tilde U}(t)\,\mathbb{E}_{X_c}[\phi(x)]  = 0
    \end{align}
    are always true concurrently. 
\end{proof}

\vspace{0.3cm}
\begin{remark}[Old task data behaves as OOD without replay]
\label{rem:old-data-OOD}
When training on task $n$ without replay, samples from previous tasks $m < n$ effectively behave as out-of-distribution for the \emph{active subspace} corresponding to task $n$, in the sense of \cref{def:OOD}. 
For single-head models, a similar effect occurs in CIL due to \emph{Minority Collapse}, which guarantees that the representations $\phi_t(x)$ of old task data simply converges to the origin, which is trivially orthogonal to $S_t$. 
Consequently, the theoretical results we derive for OOD data in this section also apply to old task data under training without replay. \looseness=-1
\end{remark}

\begin{corollary}[The OOD class mean vector converges to $0$ in TPT under SGD with weight decay.] 
\label{corr:wd-OOD-origin}
In the TPT, with weight decay coefficient $\lambda > 0$, OOD class inputs $X_c$ are all mapped to the origin asymptotically \[
    \lim_{t\to \infty} \mathbb{E}_{X_c}\,[\phi_{t}(x)]  = 0
\]
\end{corollary}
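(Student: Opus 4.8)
The plan is to show that an OOD set's mean feature stays confined to the inactive subspace $S^\perp$ throughout the terminal phase, and then to invoke the exponential contraction of $S^\perp$ established in \cref{lemma:orthogonal-shrinkage}. First I would fix $t_0$ to be the onset of $\mathcal{NC}3$, so that by \cref{theo:S-stabilization} the active subspace is frozen: $S_t = S$ for all $t \ge t_0$. Because $X_c$ is OOD in the sense of \cref{def:OOD}, its mean feature obeys $P_S\,\mathbb{E}_{X_c}[\phi_t(x)] = 0$ for all $t \ge t_0$, so $\mathbb{E}_{X_c}[\phi_t(x)] = P_{S^\perp}\mathbb{E}_{X_c}[\phi_t(x)]$ lies entirely in $S^\perp$.

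The second step tracks this inactive component over time. By linearity of expectation together with \cref{lemma:orthogonal-shrinkage}, the $S^\perp$ part of each feature contracts by $\upsilon = 1-\eta\lambda$ per step, giving
\[
\mathbb{E}_{X_c}[\phi_{t,S^\perp}(x)] = \upsilon^{\,t-t_0}\,\mathbb{E}_{X_c}[\phi_{t_0,S^\perp}(x)].
\]
Combining this with the OOD identity from the first step yields $\mathbb{E}_{X_c}[\phi_t(x)] = \upsilon^{\,t-t_0}\,\mathbb{E}_{X_c}[\phi_{t_0}(x)]$, which is exactly the mean dynamics asserted in \cref{theo:asymptotic-OOD}. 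Since $\lambda > 0$ and $\eta$ is small enough that $\upsilon \in (0,1)$, the geometric factor $\upsilon^{\,t-t_0} \to 0$ while the fixed vector $\mathbb{E}_{X_c}[\phi_{t_0}(x)]$ has finite norm, so the product tends to the zero vector and $\lim_{t\to\infty}\mathbb{E}_{X_c}[\phi_t(x)] = 0$.

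The main obstacle is technical rather than conceptual: \cref{lemma:orthogonal-shrinkage} is proved for training inputs $x \in D$, where \cref{theo:S-stabilization} guarantees the loss-gradient contribution to the feature update lies in $S$, leaving only weight decay to act on the $S^\perp$ component. For genuinely OOD inputs I must argue that the same holds, i.e. that $P_{S^\perp}J_t(x)\nabla_\theta\mathcal{L}(\theta_t) = 0$ even though $x \notin D$. I would address this by observing that the parameter-space gradient is, by \cref{theo:S-stabilization}, supported on directions that move only the $S$-projections of features; under the homogeneity approximation $\phi_t(x)\approx J_t(x)\theta_t$ already used in the lemma, the only surviving update to any input's $S^\perp$-component is the pure weight-decay term $-\eta\lambda\,\phi_{t,S^\perp}(x)$, which is precisely the contraction invoked above. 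Equivalently, one may simply read the corollary off \cref{theo:asymptotic-OOD}, whose first equation packages the mean dynamics, reducing the statement to taking the limit of a contracting geometric sequence.
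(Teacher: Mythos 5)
Your proof is correct and follows the paper's intended route exactly: the OOD mean lies in $S^\perp$ by \cref{def:OOD}, the $S^\perp$ component contracts geometrically by \cref{lemma:orthogonal-shrinkage}, and the limit is zero since $\upsilon\in(0,1)$ — which is precisely the mean equation of \cref{theo:asymptotic-OOD} specialized to $\lambda>0$. The technical caveat you raise (that \cref{lemma:orthogonal-shrinkage} is derived for training inputs and must be extended to OOD inputs via the homogeneity approximation $\phi_t(x)\approx J_t(x)\theta_t$) is a real subtlety that the paper itself passes over silently, and your resolution matches the paper's implicit reasoning.
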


\subsection{Asymptotics of OOD data}

\noindent\textbf{\ding{43} Notation .} To simplify exposition, we introduce the notation $\upsilon = 1-\eta\lambda$. Additionally, in this section we use $W_h$ and $\tilde U$ to refer in general to the head and class means used in the current training. Note that, since we don't consider replay for now, this is equivalent to the current task's classes' head and features. 

\vspace{0.3cm}
\begin{theorem}[OOD class variance after NC3.]
\label{theo:OOD-var}
Let $b_t(x)$ be the coefficients of the projection of the input $x$ on the centered training class means space $S$. In the terminal phase of training, for OOD inputs, if $b_t(x),\: x\in X_c$ has covariance $\Sigma_c$ with constant norm in $t$, then the within-class variance in feature space for $X_c$ satisfies
\begin{align}
    \mbox{Var}_{X_c}(\phi_t(x)) \in \Theta\Big(\beta^A_t + (1 - \eta \lambda)^{2(t - t_0)}\Big),
\end{align}
where $\beta^A_t$ accounts for the contribution of all active heads: in the single-head case $\beta^A_t = \beta_t$, and in the multi-head case $\beta^A_t = \sum_{m=1}^n \beta^m$ with $n$ the number of active heads.  
\end{theorem}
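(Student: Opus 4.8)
The plan is to exploit the fact that, once $\mathcal{NC}3$ has set in, the span $S$ is frozen (\cref{theo:S-stabilization}), so I can split the feature vector into its $S$- and $S^\perp$-components and treat each independently. Writing $\phi_t(x) = P_S\phi_t(x) + P_{S^\perp}\phi_t(x)$ and using that the two projections land in orthogonal subspaces, the Pythagorean identity gives a clean additive decomposition of the within-class variance with no cross term:
\[
\operatorname{Var}_{X_c}(\phi_t(x)) = \operatorname{Var}_{X_c}(P_S\phi_t(x)) + \operatorname{Var}_{X_c}(P_{S^\perp}\phi_t(x)).
\]
I would then show the first summand is $\Theta(\beta^A_t)$ and the second is $\Theta(\upsilon^{2(t-t_0)})$, and add them to obtain the claim.

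For the $S^\perp$ term I would invoke \cref{lemma:orthogonal-shrinkage}, which gives $P_{S^\perp}\phi_t(x) = \upsilon^{t-t_0}\,P_{S^\perp}\phi_{t_0}(x)$ for every $x$. Taking the class-conditional variance and pulling out the deterministic scalar yields $\operatorname{Var}_{X_c}(P_{S^\perp}\phi_t(x)) = \upsilon^{2(t-t_0)}\,\operatorname{Var}_{X_c}(P_{S^\perp}\phi_{t_0}(x))$; since the variance at $t_0$ is a fixed nonzero constant, this term lies in $\Theta(\upsilon^{2(t-t_0)})$.

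For the $S$ term I would write $P_S\phi_t(x) = \tilde U(t)\,b_t(x)$ in terms of the projection coefficients $b_t(x)$, so that
\[
\operatorname{Var}_{X_c}(P_S\phi_t(x)) = \operatorname{Tr}\!\big(\tilde U(t)^\top \tilde U(t)\,\Sigma_c\big),
\]
where $\Sigma_c = \operatorname{Cov}_{X_c}(b_t(x))$ has norm constant in $t$ by hypothesis. The Gram matrix is fully pinned down by the NC geometry: in the single-head case \cref{lem:gram-inverse} gives $\beta_t(I_K - \tfrac{1}{K}\mathbf{1}\mathbf{1}^\top)$, whose nonzero eigenvalues all equal $\beta_t$; in the multi-head case \cref{lem:gram-inverse-til} gives a block matrix whose diagonal blocks are $\beta_m(I_K - \tfrac{1}{K}\mathbf{1}\mathbf{1}^\top)$, so that its trace is $K\sum_m \beta_m = K\beta^A_t$. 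Combining the inequality $\operatorname{Tr}(AB)\le \|A\|_{\mathrm{op}}\operatorname{Tr}(B)$ for the upper bound with the constant-norm assumption on $\Sigma_c$ for the matching lower bound, this term is $\Theta(\beta^A_t)$.

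The routine parts are the $S^\perp$ term and the upper bound on the $S$ term; the delicate part is the two-sided $\Theta(\beta^A_t)$ estimate for the $S$-component. The upper bound is immediate, but the lower bound $\Omega(\beta^A_t)$ requires a mild non-degeneracy condition: the coefficient covariance $\Sigma_c$ must retain mass along the nonzero eigendirections of the Gram matrix (i.e.\ orthogonal to $\mathbf{1}$ within each task block), which is precisely where the ``constant norm in $t$'' hypothesis is needed. The second subtlety is the multi-head bookkeeping: I must verify that the off-diagonal Gram blocks do not cancel the diagonal contributions, so that the scaling is genuinely $\sum_m \beta_m$ rather than $\max_m \beta_m$. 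Here the structure established in \cref{lem:gram-inverse-til}—each diagonal block is a scaled centered-ETF projection and every block annihilates $\mathbf{1}$—keeps the trace additive across heads and controls the cross-terms, which I expect to be the main obstacle to a fully rigorous argument.
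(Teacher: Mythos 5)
Your proposal follows essentially the same route as the paper's proof: the same $S$/$S^\perp$ decomposition via \cref{theo:S-stabilization}, the same use of \cref{lemma:orthogonal-shrinkage} for the $\Theta(\upsilon^{2(t-t_0)})$ term, and the same reduction of the $S$-component to $\operatorname{Tr}\bigl(\tilde U(t)^\top \tilde U(t)\,\Sigma_c\bigr)$ with the Gram structure from \cref{lem:gram-inverse} and \cref{lem:gram-inverse-til}. The two caveats you flag (the lower bound $\Omega(\beta^A_t)$ needs $\Sigma_c$ to keep mass off the $\mathbf{1}$-direction, and the multi-head off-diagonal blocks need controlling) are real and are in fact glossed over by the paper's own proof as well, so your version is, if anything, slightly more careful on exactly the points where the original is informal.
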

\begin{proof}
Consider representations of inputs from an OOD class $X_c$. By \cref{theo:S-stabilization}, for any $t \ge t_0$, we can decompose
\[
\phi_t(x) = \phi_{t,S}(x) + \phi_{t_0,S^\perp}(x),
\]
where $\phi_{t,S}(x)$ lies in the span of the centered training class means. We can express this component as
\[
\phi_{t,S}(x) = \tilde U(t)\, b_t(x), \quad b_t(x) = (\tilde U(t)^\top \tilde U(t))^{-1} \tilde U(t)^\top \phi_t(x).
\]

From \cref{def:OOD}, $\mathbb{E}_{X_c}[\phi_{t,S}(x)] = 0$. Hence, the within-class variance in feature space is
\begin{align}
\mbox{Var}_{X_c}(\phi_t(x)) 
&= \mathbb{E}_{X_c}[\|\phi_t(x) - \mathbb{E}_{X_c}[\phi_{t,S^\perp}(x)]\|^2] \\
&= \mathbb{E}_{X_c}[\|\tilde U(t)\, b_t(x)\|^2] + \mbox{Var}_{X_c,S^\perp}(\phi_t(x)).
\end{align}

The orthogonal component $S^\perp$ shrinks or remains constant due to \cref{lemma:orthogonal-shrinkage}:
\[
\mbox{Var}_{X_c,S^\perp}(\phi_t(x)) = (1-\eta \lambda)^{2(t-t_0)}\, \mbox{Var}_{X_c,S^\perp}(\phi_{t_0}(x)).
\]

The variance in the $S$ component depends on the covariance $\Sigma_c$ of $b_t(x)$, which is assumed constant in $t$:
\[
\operatorname{Cov}_{X_c}[\phi_{t,S}(x)] = \tilde U(t) \Sigma_c \tilde U(t)^\top.
\]
Thus,
\[
\mbox{Var}_{X_c,S}(\phi_t(x)) = \operatorname{tr}(\tilde U(t) \Sigma_c \tilde U(t)^\top) = \operatorname{tr}(A \,\Sigma_c),
\]
where $A = \tilde U(t)^\top \tilde U(t)$ has the structure described in \cref{def:NC2,def:NC-TIL}.  

\paragraph{Single-head case.} For $P$ classes, $A$ is an ETF matrix with $P$ vertices
\[
A_{kk} = \beta_t, \quad A_{jk} = -\frac{\beta_t}{P-1}, \quad j\neq k,
\]
so that
\[
\beta_t \underbrace{\frac{P}{P-1}\big(\operatorname{tr}(\Sigma_c) - \lambda_1(\Sigma_c)\big)}_{C_\mathrm{low}} 
\le \operatorname{tr}(A \,\Sigma_c) 
\le \beta_t \underbrace{\frac{P}{P-1}\operatorname{tr}(\Sigma_c)}_{C_\mathrm{high}}.
\]

\paragraph{Multi-head case.} For $n$ heads, $A$ has the block structure described in \cref{lem:gram-inverse-til}, with each diagonal block having $K-1$ eigenvalues equal to $\beta^m$ and one zero eigenvalue. Hence,
\[
\sum_{m=1}^n \beta^m_t 
\frac{K}{K-1}\underbrace{\big(\operatorname{tr}(\Sigma_c^{(m)}) - \lambda_1(\Sigma_c^{(m)})\big)}_{\ge \,C_\mathrm{low}} 
\le \operatorname{tr}(A \,\Sigma_c) 
\le \sum_{m=1}^n \beta^m_t 
\frac{K}{K-1}\underbrace{\operatorname{tr}(\Sigma_c^{(m)})}_{\le\, C_\mathrm{high}}.
\]
Denoting by $\beta_t^A = \frac{1}{n}\sum_1^n \beta_t^m$ we get: 
\[
\beta_t^A 
\frac{P}{K-1}\,C_\mathrm{low}
\le \operatorname{tr}(A \,\Sigma_c) 
\le \beta_t^A 
\frac{P}{K-1}\, C_\mathrm{high}.
\]
Thus, recognising that the only dynamic variable in $t$ is $\beta_t^A$ for both cases, we obtain
\[
\mbox{Var}_{X_c,S}(\phi_t(x)) \in \Theta(\beta^A_t), \quad 
\mbox{Var}_{X_c,S^\perp}(\phi_t(x)) \in \Theta\big((1-\eta \lambda)^{2(t-t_0)}\big),
\]
completing the proof.
\end{proof}

\noindent\textbf{\ding{43} Notation .} When we are not considering replay, there is only one active head in multi-headed models. In this cases we use $\beta_t$ to denote the feature norm of the active head. The results of this section are presented in a more general way, using $\beta^A_t$ to denote the contribution of all active heads.

% \giulia{\begin{corollary}[OOD variance in TIL]
%     For multiple heads training. ...
% In the TIL setting, let $\beta_t^{(i)}$ denote the growth coefficient of task $i$. 
% Then for any task $i$,
% \[
%     \mbox{Var}_{X_c^{(i)}, S^{(i)}}(\phi_t(x)) \in \Theta(\beta_t^{(i)}), 
%     \quad
%     \mbox{Var}_{X_c^{(i)}, (S^{(i)})^\perp}(\phi_t(x)) 
%     \in \Theta\!\left((1-\eta \lambda)^{2(t-t_0)}\right).
% \]
% Consequently, the total OOD variance decomposes additively across training tasks:
% \[
%     \mbox{Var}_{X_c, S}(\phi_t(x)) 
%     = \sum_{i} \mbox{Var}_{X_c^{(i)}, S^{(i)}}(\phi_t(x)),
% \]
% so that variance growth in each task direction is governed by its own $\beta_t^{(i)}$.
% \end{corollary}
% }

\vspace{0.3cm}

\begin{theorem}[Linear separability of OOD data with Neural Collapse.]
\label{theo:lin-separability-ood}
    Consider two OOD classes with inputs $X_{c_1},X_{c_2}$. During TPT of the  model $f_{\theta_t}(x)$ trained on a dataset $D$, the SNR between the two classes has asymptotic behaviour:
    \[
    SNR(c_1,c_2) \in \Theta\left( \left(\frac{\beta^A_t}{(1 - \eta \lambda)^{2(t - t_0)}}+1 \right)^{-1}\right)
    \]
where $\beta_t^A$ is the class feature norm, averaged across the active heads. 
\end{theorem}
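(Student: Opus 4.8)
The plan is to determine the $\Theta$-scaling in $t$ of the numerator $\|\mu_1(t)-\mu_2(t)\|^2$ and of the denominator $\operatorname{Tr}(\Sigma_1(t)+\Sigma_2(t))$ of the SNR separately, and then take their ratio. First I would treat the numerator. Because $X_{c_1},X_{c_2}$ are OOD, \cref{def:OOD} forces each class mean into $S^\perp$, i.e.\ $P_S\,\mathbb{E}_{X_{c_i}}[\phi_t(x)]=0$. Taking the expectation over $X_{c_i}$ in \cref{lemma:orthogonal-shrinkage} then gives $\mu_i(t)=\upsilon^{\,t-t_0}\,\mu_{i,S^\perp}(t_0)$ for $i=1,2$, with $\upsilon=1-\eta\lambda$. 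Hence $\mu_1(t)-\mu_2(t)=\upsilon^{\,t-t_0}\big(\mu_{1,S^\perp}(t_0)-\mu_{2,S^\perp}(t_0)\big)$, and provided the two classes remain separated in $S^\perp$ at the onset of NC3---so the bracketed vector is a nonzero, $t$-independent constant---we obtain $\|\mu_1(t)-\mu_2(t)\|^2\in\Theta(\upsilon^{\,2(t-t_0)})$.

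Next I would treat the denominator. \cref{theo:OOD-var} gives directly $\operatorname{Tr}(\Sigma_i(t))=\operatorname{Var}_{X_{c_i}}(\phi_t(x))\in\Theta\!\big(\beta^A_t+\upsilon^{\,2(t-t_0)}\big)$ for each class, using that the total variance equals the trace of the covariance. Since a finite sum of quantities each lying in $\Theta(g(t))$ is again in $\Theta(g(t))$, this yields $\operatorname{Tr}(\Sigma_1(t)+\Sigma_2(t))\in\Theta\!\big(\beta^A_t+\upsilon^{\,2(t-t_0)}\big)$.

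Finally I would combine the two estimates. The ratio of a $\Theta(a(t))$ numerator and a $\Theta(b(t))$ denominator is $\Theta(a(t)/b(t))$ whenever the implied constants are $t$-independent and bounded away from $0$ and $\infty$, which holds here. Thus
\[
SNR(c_1,c_2)\in\Theta\!\left(\frac{\upsilon^{\,2(t-t_0)}}{\beta^A_t+\upsilon^{\,2(t-t_0)}}\right).
\]
Dividing numerator and denominator inside the $\Theta$ by the strictly positive factor $\upsilon^{\,2(t-t_0)}$ rewrites this as $\Theta\!\big((\beta^A_t/\upsilon^{\,2(t-t_0)}+1)^{-1}\big)$, which is exactly the claimed form.

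The main obstacle is the non-degeneracy of the numerator: upgrading from an $\mathcal{O}$-bound to a genuine $\Theta$-statement requires $\mu_{1,S^\perp}(t_0)\neq\mu_{2,S^\perp}(t_0)$, i.e.\ that the two OOD classes retain an actual separation in the frozen subspace $S^\perp$ at time $t_0$; if they coincide there, the SNR is identically zero and the result must be read as conditional on this initial separation. A more minor point is checking that summing and dividing $\Theta$-classes preserves $\Theta$-membership, which relies on all hidden constants being independent of $t$---a property guaranteed because \cref{lemma:orthogonal-shrinkage} and \cref{theo:OOD-var} isolate the only two $t$-dependent scales, $\upsilon^{\,t-t_0}$ and $\beta^A_t$.
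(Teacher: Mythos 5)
Your proof is correct and follows essentially the same route as the paper's: both use \cref{def:OOD} to place the class means in $S^\perp$, invoke the exponential decay from \cref{lemma:orthogonal-shrinkage} for the numerator, and take the trace decomposition $\Theta(\beta^A_t + \upsilon^{2(t-t_0)})$ from \cref{theo:OOD-var} for the denominator. Your explicit flagging of the non-degeneracy condition $\mu_{1,S^\perp}(t_0)\neq\mu_{2,S^\perp}(t_0)$ is a point the paper only acknowledges informally in a remark after the theorem, so that is a welcome clarification rather than a deviation.
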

\begin{proof}
    Let $P_{X_{c_1}}(\phi_t(x)),\,P_{X_{c_2}}(\phi_t(x))$ be the distributions of the two OOD classes in feature space. Let $\mu_1, \mu_2$ and $\Sigma_1, \Sigma_2$ be the respective mean and covariances in feature space. By \cref{def:OOD} we know that $\mu_i = \mathbb{E}_{X_{c_i}}[\phi_{t,S^\perp}(x)]$ ($i=1,2$). Therefore the SNR lower bound is: 
    \[
    SNR(c_1,c_2) = \frac{\|\mathbb{E}_{X_{c_1}}[\phi_{t,S^\perp}(x)] - \mathbb{E}_{X_{c_2}}[\phi_{t,S^\perp}(x)]\|^2}{\operatorname{Tr}(\Sigma_1 + \Sigma_2)}
    \]
    where $\|\mathbb{E}_{X_{c_1}}[\phi_{t,S^\perp}(x)] - \mathbb{E}_{X_{c_2}}[\phi_{t,S^\perp}(x)]\|^2 \in \Theta\left((1 - \eta \lambda)^{2(t - t_0)}\right)$. Notice that the trace decomposes across subspaces as well and therefore: 
    \[
    \operatorname{Tr}(\Sigma_1 + \Sigma_2) = \operatorname{Tr}(\Sigma_{1,S} + \Sigma_{1,S^\perp} +\Sigma_{2,S} + \Sigma_{2,S^\perp})
    \]
    In the proof of \cref{theo:OOD-var} we have that $\operatorname{Tr}(\Sigma_{i,S}) \in \Theta(\beta)$ and $\operatorname{Tr}(\Sigma_{i,S^\perp}) \in \Theta\left((1 - \eta \lambda)^{2(t - t_0)}\right)$. Thus from a simple asymptotic analysis we get that the linear separability of OOD data grows as:
    \[
    SNR(c_1,c_2) \in \Theta\left( \left(\frac{\beta^A_t}{(1 - \eta \lambda)^{2(t - t_0)}}+1 \right)^{-1}\right)
    \]
\end{proof}

\begin{remark}
By \cref{theo:lin-separability-ood}, when learning a new task without replay, if a class from a previous task becomes out-of-distribution (OOD) with respect to the current network (and its active subspace), an increasing class means norm $\beta_t$ or weight decay leads to \emph{deep forgetting}, with the class information to degrade over time.
\end{remark}
\vspace{0.1cm}

\begin{remark}
The SNR also depends on the degree of linear separability of the classes in the orthogonal subspace $S^\perp$ at the onset of NC. Consequently, in the absence of weight decay or without growth of the feature norms, the old classes may retain a nonzero level of linear separability asymptotically.
\end{remark}

\vspace{0.3cm}

\subsection{Main results 3: feature space asymptotic structure with replay.}
\label{asec:main-result-3}

We now turn our attention to training with replay, to explain how replay mitigates deep forgetting. 

\noindent\textbf{\ding{43} Notation.} We denote by $D_i$ the datasets of task $i$ and by $B_i$ the buffer used when training on task $n>i$. Further, let $\rho_i = |B_i|/|D_i|$ be the percentage of the dataset used for replay and assume that there is \emph{balanced sampling}, i.e. each task is equally represented in each training batch.
We again look at the case where there is Neural Collapse on the training data in TPT, which in this case is the current task data $D_n$ and the buffers $B_1, \dots, B_{n-1}$. 
Finally, for DIL we denote by $X_c^i$ the data of class $c$ in task $i$ and by $X_c$ the data of class $c$ in all tasks, i.e. $X_c = \cup_{i=1}^{n} X_c^i$.

\paragraph{Modeling the distribution of data from old tasks with replay} 

Hereafter, we denote by $\hat \mu := \mu(B)$, the mean computed on the buffer samples. 
Define \[
 \hat\mu_{c}(t) =  \mu_c(t) + \xi_{c}(t)
\]
where $\xi_{c}(t)$ is the difference between the \emph{population mean} and the \emph{observed mean}. For CIL and TIL this is the buffer $B_c$, while for DIL this is the union of all buffers $B = \cup_{i=1}^{n-1} B_i$ and the current task class data $X^n_c$.
%While at the beginning of training $\xi$ might be a zero-mean random variable (randomness given by the sampling of $B$), in TPT the feature map $\phi_t$ depends on the sample $B$ and $\xi_{c}(t)$ is not zero-mean because the feature map is structurally biased towards the buffer. 
We know $\|\hat \mu_{c}(t) -  \mu_c(t)\|$ decreases with the buffer size $b$ and, in particular, it's zero when $B_c = X_c$. 

Let $\mathcal{D}_{NC}$ be the distribution of the representations when training on $100\%$ of the training data $X_c$. We know that this distribution has NC, each class $c$ has mean $\mu_c$ and decaying variance $\delta_t$. Also let $\mathcal{D}_{OOD}$ denote the OOD data distribution which we observe in the absence of replay (mean in $S^\perp$ and larger variance governed by $\beta_t$ and the decay factor $\upsilon^{t-t_0}$). Based on these observations, we model the distribution of $\phi_t(x)$ as the mixture of its two limiting distributions with mixing weight $\pi_c(b) \in [0,1]$ which is a monotonic function of $b$: 
\[
    \phi_t(x) \sim \pi_c\,\mathcal{D}_{NC} + (1-\pi_c)\,\mathcal{D}_{OOD}
\]
According to this model, the mean and variance for the distribution of class $c$ asymptotically are:
\begin{align}
    \label{eq:muc}
    &\mu_{c}(t) = \pi_c\,(\hat \mu_{c}(t)+\xi_{c,S}(t)) + (1-\pi_c)\,\left(\upsilon^{t-t_0}\,\,\mu_{c,S^\perp}(t_0)\right)\\
    &\sigma^2_{c}(t) = \Theta\left(\pi_c^2\delta_t +  (1-\pi_c)^2\left(\beta^A_t + \upsilon^{2(t-t_0)}\right)\right)
\end{align}
Note that in \cref{eq:muc} $S$ is defined based on $\pi_c$, and we absorbed the $S^\perp$ component of $\xi_{c}(t)$ in $\mu_{c, S^\perp}(t_0)$. In the variance expression we used the results of \cref{theo:OOD-var} for the OOD component and the fact that the variance of $\mathcal{D}_{NC}$ is $\delta_t$. In the TIL case, $\beta^A_t$ is the average of the class feature means across all active heads.

\vspace{0.3cm}
\begin{remark}[Interpretation of the buffer–OOD mixture model]
The proposed model interpolates between two limiting regimes smoothly, and is based on our hypothesis regarding the evolution of the feature representation of past tasks as the buffer size is gradually increased.  
For small buffer size $b$, the representation distribution is dominated by the OOD component $\mathcal{D}_{OOD}$, which contributes variance in the orthogonal subspace $S^\perp$ and acts as structured noise with respect to the span $S$ of the current task. As $b$ increases, the mixture weight $\pi_c$ grows monotonically, and the replayed samples increasingly constrain the class means inside $S$. In the limit $b = |X_c|$, $\pi_c = 1$ and the representation collapses to the Neural Collapse distribution $\mathcal{D}_{NC}$ with vanishing variance.  
For intermediate $b$, the replay buffer introduces signal in $S$ through the term $\hat \mu_{c}(t)$, while the residual OOD component adds noise. The evolution of $\pi_c$ therefore captures how replay gradually aligns the buffer distribution with the NC structure, while modulating the relative strength of signal (from in-span replay) versus noise (from OOD drift).
\end{remark}%\todo{would be cool to measure this, or maybe have a 2D plot of the evolution of features as a function of $b$}

\vspace{0.3cm}

\begin{proposition}[Concentration of buffer estimates]
\label{prop:buffer-concentration-ap}
Let $\mathcal{D}_c$ be the feature distribution of class $c$ at time $t$, with mean $\mu_{c,S}(t)$ in the active subspace $S$ and covariance $\Sigma_c$. Let $B_c \subset \mathcal{D}_c$ denote a replay buffer of size $b$ obtained by i.i.d.\ sampling. Then the buffer statistics $ \hat \mu_{c}$ and $ \hat \Sigma_{c}$  satisfy
\[
\mathbb{E}\big[\| \hat \mu_{c} - \mu_{c}\|^2\big] = O\!\left(\frac{\operatorname{Tr}(\Sigma)}{b}\right), 
\qquad
\mathbb{E}\big[\| \hat \Sigma_{c} - \Sigma_c\|_F^2\big] = O\!\left(\frac{\operatorname{Tr}(\Sigma)}{b}\right).
\] 
In particular, the standard deviation of both estimators decays as $O(b^{-1/2})$.
\end{proposition}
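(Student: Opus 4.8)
The plan is to treat the two estimators separately and to reduce each to a variance-of-an-average computation, exploiting the independence of the buffer draws. Throughout I write $z_i = \phi(x_i) - \mu_c$ for the centered features of the $b$ i.i.d.\ buffer samples, so that $\mathbb{E}[z_i] = 0$ and $\mathbb{E}[z_i z_i^\top] = \Sigma_c$. The mean bound is then immediate: since $\hat\mu_c - \mu_c = \tfrac{1}{b}\sum_{i=1}^b z_i$ is an average of i.i.d.\ centered vectors, its covariance is $\tfrac{1}{b}\Sigma_c$, and taking the trace yields the exact identity
\[
\mathbb{E}\big[\|\hat\mu_c - \mu_c\|^2\big] = \operatorname{Tr}\big(\operatorname{Cov}(\hat\mu_c)\big) = \frac{\operatorname{Tr}(\Sigma_c)}{b},
\]
which already establishes the first claim at the stated rate.

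For the covariance I would proceed in two steps. First I analyze the oracle estimator $\tilde\Sigma_c = \tfrac{1}{b}\sum_i z_i z_i^\top$ built from the known mean. Setting $M_i = z_i z_i^\top - \Sigma_c$, these are i.i.d.\ mean-zero matrices, so all cross terms vanish under independence and, using $\|z z^\top\|_F^2 = \|z\|^4$,
\[
\mathbb{E}\big[\|\tilde\Sigma_c - \Sigma_c\|_F^2\big] = \frac{1}{b}\,\mathbb{E}\big[\|z_1 z_1^\top - \Sigma_c\|_F^2\big] = \frac{1}{b}\big(\mathbb{E}[\|z_1\|^4] - \|\Sigma_c\|_F^2\big).
\]
The fourth moment is controlled by the boundedness of features in the terminal phase: since $\|\phi(x)\|$ is uniformly bounded by some $R$ (feature norms concentrate near $\beta_t$ under $\mathcal{NC}$), one has $\mathbb{E}[\|z_1\|^4] \le (2R)^2\,\mathbb{E}[\|z_1\|^2] = 4R^2\operatorname{Tr}(\Sigma_c)$, giving the rate $O(\operatorname{Tr}(\Sigma_c)/b)$.

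Second, I pass from the oracle to the plug-in estimator $\hat\Sigma_c$, which uses $\hat\mu_c$ in place of $\mu_c$. A direct expansion gives $\hat\Sigma_c = \tilde\Sigma_c - (\hat\mu_c - \mu_c)(\hat\mu_c - \mu_c)^\top$, up to the $b$ versus $b-1$ normalization, which only affects constants. The correction term has Frobenius norm $\|\hat\mu_c - \mu_c\|^2$, whose square has expectation $\mathbb{E}[\|\hat\mu_c - \mu_c\|^4] = O(1/b^2)$ by the mean bound together with boundedness, and is therefore strictly lower order. Applying the inequality $(a+b)^2 \le 2a^2 + 2b^2$ to $\|\hat\Sigma_c - \Sigma_c\|_F \le \|\tilde\Sigma_c - \Sigma_c\|_F + \|\hat\mu_c - \mu_c\|^2$ then combines the two contributions into the claimed $O(\operatorname{Tr}(\Sigma_c)/b)$. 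Finally, taking square roots and invoking Jensen's inequality converts both mean-squared bounds into the advertised $O(b^{-1/2})$ standard-deviation rates.

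I expect the main obstacle to be the fourth-moment control in the covariance step: the clean $\operatorname{Tr}(\Sigma)/b$ scaling is not available for arbitrary distributions and genuinely relies on the feature-norm boundedness (equivalently a bounded-kurtosis assumption) that the terminal-phase $\mathcal{NC}$ geometry supplies. Absent such a hypothesis, the bound would instead carry an explicit fourth-moment or dimensional factor, so the key is to make the boundedness assumption precise and to confirm it is justified in the regime under study. The mean bound and the oracle-to-plug-in reduction are, by contrast, routine.
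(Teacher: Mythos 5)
Your proof is correct, and for the mean estimator it coincides exactly with the paper's argument: both compute $\mathbb{E}\bigl[\|\hat\mu_c-\mu_c\|^2\bigr]=\operatorname{Tr}(\Sigma_c)/b$ from independence of the centered samples. Where you genuinely diverge is the covariance bound. The paper disposes of it with a single word ("similarly") and no computation, whereas you actually derive it: you pass through the oracle estimator $\tilde\Sigma_c=\tfrac{1}{b}\sum_i z_iz_i^\top$, obtain the exact variance identity $\mathbb{E}\bigl[\|\tilde\Sigma_c-\Sigma_c\|_F^2\bigr]=\tfrac{1}{b}\bigl(\mathbb{E}[\|z_1\|^4]-\|\Sigma_c\|_F^2\bigr)$, and then control the plug-in correction $(\hat\mu_c-\mu_c)(\hat\mu_c-\mu_c)^\top$ as a lower-order $O(1/b^2)$ term. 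This buys something the paper's proof does not have: it makes explicit that the covariance rate is \emph{not} "similar" to the mean rate in any automatic sense --- the clean $O(\operatorname{Tr}(\Sigma)/b)$ scaling requires a fourth-moment (or feature-norm boundedness) hypothesis, which you correctly supply from the terminal-phase geometry and which the proposition as stated silently omits. Your observation is consistent with the paper's own remark that the bound is "a heuristic scaling law rather than a formal guarantee," but you localize the missing hypothesis precisely, which the paper does not. The only minor point to tighten is that your bound $\mathbb{E}[\|z_1\|^4]\le 4R^2\operatorname{Tr}(\Sigma_c)$ folds the constant $R^2$ into the $O(\cdot)$; this is legitimate here since the paper explicitly hides constants independent of $b$ and $t$, but $R$ should be acknowledged as scaling with $\beta_t$ if one wanted time-uniform constants.
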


\begin{proof}
Let $\{x_i\}_{i=1}^b \sim \mathcal{D}_c$ be i.i.d.\ samples with mean $\mu = \mu_{c}$ and covariance $\Sigma = \Sigma_c$. The sample mean satisfies
\(
\hat \mu_{c} - \mu = \frac{1}{b} \sum_{i=1}^b (\phi(x_i) - \mu),
\)
so by independence (\citep{vershynin2018high}),
\[
\mathbb{E}\big[\|\hat \mu_{c} - \mu\|^2\big] = \frac{1}{b^2} \sum_{i=1}^b \mathbb{E}\big[\|\phi(x_i) - \mu\|^2\big] = \frac{1}{b} \operatorname{Tr}(\Sigma) = O\!\left(\frac{\operatorname{Tr}(\Sigma)}{b}\right).
\]
Similarly, for the buffer covariance $\hat \Sigma_{c}$ 
we have 
\[
\mathbb{E}\big[\|\hat \Sigma_{c} - \Sigma\|_F^2\big] = O\!\left(\frac{\operatorname{Tr}(\Sigma)}{b}\right),
\]
Thus the standard deviations of both estimators decay as $O(b^{-1/2})$.
\end{proof}

\vspace{-0.2cm}
In the above result we have hidden many other constants as they are independent of training time.

\textbf{Remark.} This bound should be interpreted as a heuristic scaling law rather than a formal guarantee. The key caveat is that feature evolution $\phi_t(x)$ is coupled to the buffer $B_c$ through training, violating independence. Nevertheless, the i.i.d. assumption is reasonable if buffer-induced correlations are small relative to the intrinsic variance of the features. In this sense, the bound captures the typical order of fluctuations in $\xi_{c}(t)$, even if the exact constants may differ in practice.

\begin{theorem}[Linear separability of replay data under Neural Collapse]
\label{theo:replay-separability}
Let $c_1, c_2$ be two replay-buffer classes decoded by the same head, and let $\hat \mu_{i}(t)$ denote their \emph{observed} class means  with deviation $\xi_{i,S}(t)$ from the \emph{population} mean inside the NC subspace $S$. Assume that the old classes features follow the mixture model
\[
\phi_t(x) \sim \pi_i \,\mathcal{D}_{NC} + (1-\pi_i)\,\mathcal{D}_{OOD},
\]
with mixing proportion $\pi_i$, and that the class means norms for each task $m$ follow the same growth pattern $\beta_t^m \in \Theta(\beta_t)$.  Then the signal-to-noise ratio between $c_1$ and $c_2$ satisfies
\[
SNR(c_1,c_2) \in \Theta\!\left( 
\frac{r^2 \,\beta_t^A + \upsilon^{2(t-t_0)}}{r^2 \,\delta_t + (\beta_t^A + \upsilon^{2(t-t_0)})}
\right), \quad r^2 = \tfrac{(\pi_1+\pi_2)^2}{(1-(\pi_1+\pi_2))^2}
\]
\end{theorem}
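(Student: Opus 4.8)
The plan is to substitute the mixture-model moments (\cref{eq:muc} and the accompanying variance expression) directly into the definition $SNR(c_1,c_2) = \|\mu_1-\mu_2\|^2/\operatorname{Tr}(\Sigma_1+\Sigma_2)$ and then read off the leading $t$-dependence. The structural fact I would exploit is that the mixture mean splits cleanly across the orthogonal decomposition $\mathbb{R}^{d_L} = S \oplus S^\perp$: the $\mathcal{D}_{NC}$ component lives in $S$ and carries the weight $\pi_i$, while the $\mathcal{D}_{OOD}$ component lives in $S^\perp$ and carries the weight $(1-\pi_i)$, its norm decaying as $\upsilon^{t-t_0}\mu_{c,S^\perp}(t_0)$ by \cref{theo:asymptotic-OOD}. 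Since $S\perp S^\perp$, the numerator splits as $\|\mu_1-\mu_2\|^2 = \|(\mu_1-\mu_2)_S\|^2 + \|(\mu_1-\mu_2)_{S^\perp}\|^2$, which lets me treat the signal and the drift separately.

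First I would evaluate the in-$S$ piece of the numerator. Because both class means are decoded by the same head, their NC components (call them $\nu_{i,S}$) are ETF vertices with squared norm $\Theta(\beta_t^A)$ and the fixed inter-class inner product from \cref{def:NC2} (equivalently the Gram structure of \cref{lem:gram-inverse}). Expanding $\|\pi_1\nu_{1,S}-\pi_2\nu_{2,S}\|^2$ and collecting coefficients shows this term is $\Theta\big((\pi_1+\pi_2)^2\beta_t^A\big)$, the $(\pi_1+\pi_2)^2$ form being exact up to a $K$-dependent constant absorbed into the $\Theta$. The in-$S^\perp$ piece is the residual OOD drift, of order $\Theta\big((1-(\pi_1+\pi_2))^2\,\upsilon^{2(t-t_0)}\big)$. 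For the denominator I would simply sum the two variance expressions, giving $\operatorname{Tr}(\Sigma_1+\Sigma_2)\in\Theta\big((\pi_1+\pi_2)^2\delta_t + (1-(\pi_1+\pi_2))^2(\beta_t^A+\upsilon^{2(t-t_0)})\big)$, where $\delta_t$ is the vanishing $\mathcal{NC}1$ variance of the in-$S$ component and $\beta_t^A+\upsilon^{2(t-t_0)}$ is the OOD variance from \cref{theo:OOD-var}.

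Finally I would assemble and normalize the ratio. Writing $\pi := \pi_1+\pi_2$ for brevity, the SNR is $\Theta\!\big((\pi^2\beta_t^A + (1-\pi)^2\upsilon^{2(t-t_0)})/(\pi^2\delta_t + (1-\pi)^2(\beta_t^A+\upsilon^{2(t-t_0)}))\big)$; dividing numerator and denominator through by $(1-\pi)^2$ and setting $r^2=\pi^2/(1-\pi)^2$ yields exactly the claimed expression. The hard part will not be any single estimate but the careful bookkeeping of which $\pi$-factor attaches to which term: the signal in $S$ is weighted by $\pi$, whereas the \emph{variance} contribution $\beta_t^A$ in the denominator is weighted by $(1-\pi)$, since it originates from the OOD mixture component rather than the NC one, and it is precisely this asymmetry that makes the $(1-\pi)^2$ normalization collapse to a clean $r^2$. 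I would also need to justify folding the per-class coefficients $\pi_1^2,\pi_2^2,\pi_1\pi_2$ into the single factor $(\pi_1+\pi_2)^2$, which is legitimate here because $r^2$ is constant in $t$ and the $\Theta(\cdot)$ tracks temporal scaling only through $\beta_t^A$, $\delta_t$, and $\upsilon^{2(t-t_0)}$.
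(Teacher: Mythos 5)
Your proposal matches the paper's proof essentially step for step: the same $S \oplus S^\perp$ decomposition of the numerator, the same use of the ETF Gram structure to obtain the $\Theta\big((\pi_1+\pi_2)^2\beta_t^A\big)$ signal term, the same variance bookkeeping via \cref{theo:OOD-var} and $\mathcal{NC}1$, and the same final division by $(1-\pi_1-\pi_2)^2$ to produce $r^2$. The only step you compress is the treatment of the observed-versus-population deviations $\xi_{i,S}$: the paper explicitly bounds the signed cross-term $\langle \pi_1\tilde\mu_{1}(B)-\pi_2\tilde\mu_{2}(B),\,\pi_1\xi_{1,S}-\pi_2\xi_{2,S}\rangle$ via Cauchy--Schwarz together with \cref{prop:buffer-concentration-ap} to argue it is of the same order as the leading term and cannot alter the scaling short of exact cancellation, whereas you fold this into ``collecting coefficients.''
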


\begin{proof}
    Let $\mu_{i}(t), \Sigma_{i}(t)$ be the mean and covariance of class $i$ in feature space. If there is replay we assume they follow the mixed distribution described above with mixing proportion $\pi_1,\pi_2$ respectively. 
    Therefore, for each of them we know the following:
    \begin{align}
        &\mu_{i}(t) = \pi_i\,(\hat \mu_{i}(t)+\xi_{i,S}(t)) + (1-\pi_i)\,\left(\upsilon^{t-t_0}\,\,\mu_{i,S^\perp}(t_0)\right)\\
        &\Sigma_{i}(t) = \pi_i^2\,\Sigma_{i}^{NC}(t) + (1-\pi_i)^2\,\Sigma_{i}^{OOD}(t)
    \end{align}
    Moreover, by \cref{theo:OOD-var} we know that 
    \[
    \operatorname{tr}\left(\Sigma_{i}^{OOD}(t)\right) \in \Theta\left(\beta^A_t + \upsilon^{2(t-t_0)}\right)
    \]
    and by \cref{def:NC1} we also know that $\operatorname{tr}\left(\Sigma_{i}^{NC}(t)\right) = \delta_t \to 0$ with $t\to +\infty$.
    Using this, we can write the SNR lower bound:
    \[
    SNR(c_1,c_2) = \frac{\|\mu_{1,S}(t) - \mu_{2,S}(t)\|^2 + \|\mu_{1,S^\perp}(t) - \mu_{2,S^\perp}(t)\|^2}{\operatorname{Tr}(\Sigma_{1}(t) + \Sigma_{2}(t))}
    \]
    where by definition of $\mu_i(t)$:
    \begin{align*}
        &\|\mu_{1,S}(t) - \mu_{2,S}(t)\|^2 = \| \pi_1\hat\mu_{1,S}(t) - \pi_2\hat\mu_{2,S}(t) + \pi_1\xi_{1,S} - \pi_2\xi_{2,S}\|^2\\
        &\|\mu_{1,S^\perp}(t) - \mu_{2,S^\perp}(t)\|^2 = (\pi_1-\pi_2)^2\|\mu_G\|^2+ \upsilon^{t-t_0}\|(1-\pi_1)\,\mu_{1,S^\perp}(t_0) - (1-\pi_2)\,\mu_{2,S^\perp}(t_0)\|^2
    \end{align*}
    Using the linearity of the trace and the fact that it decomposes across subspaces: 
    \[
    \operatorname{Tr}(\Sigma_{i}(t)) = \pi_i^2\,\operatorname{Tr}(\Sigma^{NC}_{i}(t)) + (1-\pi_i)^2\,\operatorname{Tr}(\Sigma^{OOD}_{i}(t)) \in \Theta\left(\pi_i^2\,\delta_t + (1-\pi_i)^2\,\left(\beta^A_t + \upsilon^{2(t-t_0)}\right)\right)
    \]
    The mean difference in the $S$ component expands into 
    \begin{align*}
    \|\pi_1{\hat\mu}_{1,S} - \pi_2\hat \mu_{2,S} \|^2 + \| \pi_1\xi_{1,S}  - \pi_2\xi_{2,S}\|^2 - 2\langle\pi_1\tilde \mu_{1}(B) - \pi_2\tilde \mu_{2}(B) ,\, \pi_1\xi_{1,S}  -\pi_2\xi_{2,S}\rangle
    \end{align*} 
    and the first term
    \[
    \|\pi_1\tilde \mu_{1}(B) - \pi_2\tilde \mu_{2}(B) \|^2 = \pi_1^2\|\tilde \mu_{1}(B)\|^2 + \pi_2^2\|\tilde \mu_{2}(B)\|^2 + 2\pi_1\pi_2\langle\tilde \mu_{1}(B), \tilde \mu_{2}(B)\rangle
    \]
    By \cref{def:NC2} and \cref{def:NC-TIL}, and by the fact that $c_1, c_2$ belong to the same head $m$, also in multi-headed models, we know that $\| \tilde \mu_{1}(B)\|^2 = \| \tilde \mu_{2}(B)\|^2 \approx \beta_t$ and $\langle\mu_{c_1}, \mu_{c_2} \rangle = -\tfrac{\beta_t^A}{K-1}$. Then:
    \begin{align}  
    \|\pi_1\tilde \mu_{1}(B) - \pi_2\tilde \mu_{2}(B) \|^2 =   (\pi_1^2 + \pi_2^2) \beta_t + 2\pi_1\pi_2\, \tfrac{\beta_t^A}{K-1} \in \Theta((\pi_1 + \pi_2)^2\beta_t^A)
    \end{align}
    Define the per-class ratios
    \[
    \eta_1 := \frac{\|\xi_{1,S}\|}{\|\tilde \mu_{1}(B)\|}, \qquad 
    \eta_2 := \frac{\|\xi_{2,S}\|}{\|\tilde \mu_{2}(B)\|}.
    \]
    Notice that the deviations in $S$ must behave   in norm as the variance in the $S$
 component, which  by \cref{prop:buffer-concentration-ap}, $\operatorname{Tr}(\Sigma_{i}(t)) \in \Theta\left(\beta^A_t\right)$. Thus the coefficients satisfy \(\eta_1, \eta_2 = \Theta(1)\).   By the Cauchy-Schwarz inequality, we have
    \[
    \big|\langle \pi_1 \tilde \mu_{1}(B) - \pi_2 \tilde \mu_{2}(B), \, \pi_1 \xi_{1,S} - \pi_2 \xi_{2,S} \rangle\big|
    \le \|\pi_1 \tilde \mu_{1}(B) - \pi_2 \tilde \mu_{2}(B)\| \, \|\pi_1 \xi_{1,S} - \pi_2 \xi_{2,S}\|.
    \]
    Bound the second factor:
    \[
    \|\pi_1 \xi_{1,S} - \pi_2 \xi_{2,S}\| 
    \le \|\pi_1 \xi_{1,S}\| + \|\pi_2 \xi_{2,S}\| 
    = \eta_1 \|\pi_1 \tilde \mu_{1}(B)\| + \eta_2 \|\pi_2 \tilde \mu_{2}(B)\|.
    \]
    Therefore, the magnitude of the cross-term is bounded by
    \[
    2 \big|\langle \pi_1 \tilde \mu_{1}(B) - \pi_2 \tilde \mu_{2}(B), \, \pi_1 \xi_{1,S} - \pi_2 \xi_{2,S} \rangle\big|
    \le  2\|\pi_1 \tilde \mu_{1}(B) - \pi_2 \tilde \mu_{2}(B)\|\,(\eta_1\|\pi_1 \tilde \mu_{1}(B)\| + \eta_2\|\pi_2 \tilde \mu_{2}(B)\|))
    \]
    Putting everything together, we obtain
    \[
    | 2 \langle \pi_1 \tilde \mu_{1}(B) - \pi_2 \tilde \mu_{2}(B), \, \pi_1 \xi_{1,S} - \pi_2 \xi_{2,S} \rangle|
    \in \Theta\big((\pi_1 + \pi_2)^2\,  \beta_t^A \big)
    \]
    Since the cross-product is signed, it could contribute negatively to the mean difference. However, by the same argument it cannot exceed the leading term in magnitude. Unless the two terms perfectly cancel each other, the scaling with $t$ is dominated by the positive norms:
    \[
    \|\mu_{1,S}(t) - \mu_{2,S}(t)\|^2 
    \in \Theta((\pi_1 + \pi_2)^2\, \beta_t^A),
    \]
    Putting everything together we obtain the asymptotic behaviour of the SNR lower bound: 
    \[
    SNR(c_1,c_2) \in \Theta\left( \frac{(\pi_1 + \pi_2)^2\, \beta_t^A + (1 - (\pi_1 + \pi_2))^2\,\upsilon^{2(t-t_0)}}{(\pi_1 + \pi_2)^2\,\delta_t + (1 - (\pi_1 + \pi_2))^2\,\left(\,\beta_t^A + \upsilon^{2(t-t_0)}\right)}\right)
    \]
    To write it more clearly, define by $r^2 = \tfrac{(\pi_1 + \pi_2)^2}{(1 - (\pi_1 + \pi_2))^2}$:
    \[
    SNR(c_1,c_2) \in \Theta\left( \frac{r^2\,\beta_t^A + \upsilon^{2(t-t_0)}}{r^2\,\delta_t + \left(\beta_t^A + \upsilon^{2(t-t_0)}\right)}\right)
    \]
    For $r \to 0$ we recover the asymptotic behaviour of OOD data. For $r > 0$, the SNR \emph{is guaranteed not to vanish in the TPT}.
\end{proof}

\begin{corollary}[Asymptotic SNR with replay]
\label{cor:snr-replay}
Under the conditions of \cref{theo:replay-separability}, let $r^2 = \frac{(\pi_1 + \pi_2)^2}{(1 - (\pi_1 + \pi_2))^2}$ denote the buffer-weighted ratio of signal to residual OOD contribution. Then:  

\begin{itemize}
    \item In the limit $r \to 0$ (corresponding to no replay), the SNR asymptotically reduces to the OOD case, and old-task features remain vulnerable to drift in $S^\perp$.
    \item For any $r > 0$ (non-zero buffer fraction), the SNR is guaranteed \emph{not to vanish in the TPT} as long as task-balanced replay is used. In particular, with increasing $\beta_t$ or weight decay, the limiting SNR satisfies
    \[
    \lim_{t \to \infty} SNR(c_1,c_2) \in \Theta(r^2),
    \]
    ensuring that replay effectively preserves linear separability between old-task classes in the NC subspace.
\end{itemize}
\end{corollary}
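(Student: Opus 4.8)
The plan is to read the corollary directly off the SNR bound established in \cref{theo:replay-separability}, namely
\[
SNR(c_1,c_2) \in \Theta\!\left( \frac{r^2\,\beta_t^A + \upsilon^{2(t-t_0)}}{r^2\,\delta_t + \beta_t^A + \upsilon^{2(t-t_0)}}\right),
\]
and to analyse its behaviour in the two regimes $r \to 0$ and $t \to \infty$ separately, so that the two bullet points of the statement follow from elementary asymptotics of this single expression. No new geometric input is needed beyond the facts already encoded in the theorem.

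For the first bullet ($r \to 0$, the replay-free limit), I would simply set $r = 0$ in the numerator and denominator, collapsing the expression to $\upsilon^{2(t-t_0)}/(\beta_t^A + \upsilon^{2(t-t_0)})$. Dividing numerator and denominator by $\upsilon^{2(t-t_0)}$ rewrites this as $(\beta_t^A/\upsilon^{2(t-t_0)} + 1)^{-1}$, which is exactly the OOD separability rate of \cref{theo:SNR-OOD}. This is enough to conclude that the replay-free case reproduces the OOD asymptotics and that the residual signal in $S^\perp$ decays at the weight-decay-controlled rate.

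For the second bullet ($r > 0$, $t \to \infty$), the two key observations are: (i) since $\lambda > 0$ we have $\upsilon = 1 - \eta\lambda \in (0,1)$, hence $\upsilon^{2(t-t_0)} \to 0$; and (ii) by NC1 (\cref{def:NC1}) the within-class variance obeys $\delta_t \to 0$. Substituting these limits, the numerator tends to $r^2\,\beta_t^A$ and the denominator to $\beta_t^A$, so the ratio tends to $r^2$ and the limiting SNR lies in $\Theta(r^2)$. Because $\beta_t^A$ is the squared norm of separated, replay-anchored class means, it is bounded away from zero, so the division is legitimate.

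The main subtlety I would treat carefully is the phrase ``with increasing $\beta_t$''. If $\beta_t^A$ grows with $t$, one must verify the conclusion is unaffected: the point is that the surviving numerator term $r^2\beta_t^A$ and the dominant denominator term $\beta_t^A$ scale identically in $\beta_t^A$, so their ratio is asymptotically invariant to the magnitude or growth rate of $\beta_t^A$, while the decaying contributions $\upsilon^{2(t-t_0)}$ and $\delta_t$ become negligible whether or not $\beta_t^A$ diverges. Thus the only surviving dependence is on $r$, giving $\Theta(r^2)$. I would close by noting that this formalizes the anchoring intuition of the main text: any non-zero buffer fraction pins the SNR at the constant order $r^2$ rather than letting it decay to zero, in contrast to the OOD regime recovered at $r = 0$.
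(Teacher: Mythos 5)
Your proposal is correct and follows essentially the same route as the paper, which proves the corollary simply by reading the two regimes off the final SNR expression of \cref{theo:replay-separability} (the $r\to 0$ case recovering \cref{theo:lin-separability-ood} and the $r>0$, $t\to\infty$ case giving $\Theta(r^2)$ since $\delta_t\to 0$ and the $\upsilon^{2(t-t_0)}$ terms are either decaying or dominated by $\beta_t^A$). Your extra care about the growth of $\beta_t^A$ cancelling between numerator and denominator is a welcome clarification but not a departure from the paper's argument.
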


\end{document}